\newtheorem{theorem}{Theorem}
\newtheorem{lemma}{Lemma}
\title{Matrix Completion from General Deterministic Sampling Patterns}
\author[1]{Hanbyul Lee}
\author[2]{Rahul Mazumder}
\author[1]{Qifan Song}
\author[3]{Jean Honorio}
\affil[1]{Department of Statistics, Purdue University}
\affil[2]{Sloan School of Management, MIT}
\affil[3]{Department of Computer Science, Purdue University}
\date{\today}
\begin{document}

\maketitle

\begin{abstract}
  
Most of the existing works on provable guarantees for low-rank matrix completion algorithms rely on some unrealistic assumptions such that matrix entries are sampled randomly or the sampling pattern has a specific structure.
In this work, we establish theoretical guarantee for the exact and approximate low-rank matrix completion problems which can be applied to any deterministic sampling schemes.
For this, we introduce a graph having observed entries as its edge set, and investigate its graph properties involving the performance of the standard constrained nuclear norm minimization algorithm.
We theoretically and experimentally show that the algorithm can be successful as the observation graph is well-connected and has similar node degrees.
Our result can be viewed as an extension of the works by \cite{bhojanapalli2014universal} and \cite{burnwal2020deterministic}, in which the node degrees of the observation graph were assumed to be the same.
In particular, our theory significantly improves their results when the underlying matrix is symmetric.
  
\end{abstract}

\section{Introduction}

Low-rank matrix completion is to exactly or approximately recover an underlying rank-$r$ matrix from a small number of observed entries of the matrix.
It has received much attention in a wide range of applications including collaborative filtering \citep{goldberg1992using}, phase retrieval \citep{candes2015phase} and image processing \citep{chen2004recovering}.
In research on establishing provable guarantees for low-rank matrix completion methods, typical assumptions considered are as follows: first, the underlying matrix is incoherent; 
second, observable matrix entries are sampled according to a probabilistic (usually uniform) model.
However, the latter assumption is easily violated in numerous situations; it is unlikely realistic for the sampling patterns to be uniformly at random outside experimental settings, and it may not be even reasonable to model the sampling patterns as random.

With this motivation, we aim to tackle the low-rank matrix completion problem without imposing any model assumptions on the sampling patterns.
Even though there have been several works on non-random sampling schemes, 
they have imposed additional structural assumptions on the sampling pattern which are also not applicable to many real-world scenarios.
For example, \cite{heiman2014deterministic}, \cite{bhojanapalli2014universal} and \cite{burnwal2020deterministic} assumed that the number of observed entries is the same for each row and column of the matrix,
and \cite{bishop2014deterministic} introduced systematic assumptions on the subsets of the observed entries.

In this paper, we derive a provable guarantee for matrix completion that can be applied to any general deterministic sampling patterns.
Our approach is to consider an `observation graph' whose edge set is the observed entries of the underlying matrix, and investigate its graph properties which associate with the solvability of the matrix completion problem.
We analyze the standard constrained nuclear norm minimization method \citep{candes2009exact, candes2010matrix, recht2011simpler} for the exact and approximate matrix completion problems,
and find sufficient conditions to achieve success of the matrix completion algorithm.

Our study identifies some key graph properties that are simple, interpretable, and applicable to any bipartite or undirected graphs, making them suitable to describe any general deterministic sampling patterns.
They represent how connected the graph is and how similar the node degrees of the graph are to each other.
Through these graph properties, we theoretically and experimentally demonstrate that the nuclear norm minimization method is successful when the observation graph is well-connected and has similar node degrees.
This finding is logical because the more entries are observed, the more advantageous it is for matrix completion, and if entries are not sufficiently observed in any row or column, the missing entries in that row or column are difficult to recover.

Lastly, it is noteworthy that our study extends the works by \cite{bhojanapalli2014universal} and \cite{burnwal2020deterministic}, in which 
the observation graph was fixed to be regular, i.e., each node of the observation graph has the same degree.
More importantly, our theory significantly improves their results when the underlying matrix is symmetric. 
The results of \cite{bhojanapalli2014universal} and \cite{burnwal2020deterministic} have limitations that they rely on a strong matrix incoherence assumption and prove only a sub-optimal sample complexity rate.
When the underlying matrix is symmetric, our derivation only requires the standard incoherence assumption.
Furthermore, if we apply our theorem to the case that the observation graph is regular, we can show that matrix completion is achievable with a near-optimal sample complexity rate. 
This represents a significant advancement compared to the results demonstrated in \cite{bhojanapalli2014universal} and \cite{burnwal2020deterministic}.

\paragraph{Notation}
Matrices are bold capital (e.g., $\pmb{A}$), vectors are bold lowercase (e.g., $\pmb{a}$), and scalars or entries are not bold.
$A_{i,j}$ and $a_i$ represent the $(i,j)$-th and $i$-th entries of $\pmb{A}$ and $\pmb{a}$, respectively. 
$\pmb{A}_{i,:}$ represents the $i$-th row of $\pmb{A}$ (but in column format) and $\pmb{A}_{:,j}$ represents the $j$-th column of $\pmb{A}$.
For any positive integer $n$, we denote $[n]:=\{1,\dots,n\}$.
$\|\pmb{a}\|$ represents the $l_2$ norm of $\pmb{a}$.
$\|\pmb{A}\|$, $\|\pmb{A}\|_F$ and $\|\pmb{A}\|_*$ indicate the spectral, Frobenius and nuclear norms of $\pmb{A}$, respectively.
$\pmb{A}^\top$ is the transpose of $\pmb{A}$.
For any positive integer $n$, we denote by $\pmb{I}_n$ the $n$-dimensional identity matrix.
$f(x) = O(g(x))$ or $f(x) \lesssim g(x)$ means that there exists a positive constant $C$ such that $f(x) \leq C g(x)$ asymptotically.

\section{Problem Definition and Previous Results}

In this section, we introduce the problems of exact and approximate matrix completion, along with the constrained nuclear norm minimization method as a solution approach. 
We also review related works which were based on the assumption that sampling is uniformly at random or deterministic in a restrictive setting.
We will improve their results in Section \ref{sec:main}.

\subsection{Problem Definition}

Let $\pmb{M} \in \mathbb{R}^{n_1\times n_2}$ be an unknown rank-$r$ matrix 
and $\pmb{M} = \pmb{U}\pmb{\Sigma}\pmb{V}^\top$ be the singular value decomposition of $\pmb{M}$, where $\pmb{U}\in\mathbb{R}^{n_1\times r}$ and $\pmb{V}\in\mathbb{R}^{n_2\times r}$ are the left and right singular matrices, respectively.
Suppose that we only observe the entries of $\pmb{M}$ over a fixed sampling set $\Omega \subseteq [n_1]\otimes[n_2]$ without or with additive noise.
We define the sampling operator $P_{\Omega}:\mathbb{R}^{n_1\times n_2}\rightarrow \mathbb{R}^{n_1\times n_2}$ as follows:
$$
P_{\Omega}(\pmb{M}) = 
\begin{cases}
M_{i,j} & \text{if}~~ (i,j)\in\Omega,
\\
0 & \text{otherwise}.
\end{cases}
$$
In the scenario where observed entries are \emph{noiseless},
our goal is to recover $\pmb{M}$ exactly by using $P_{\Omega}(\pmb{M})$, and we call this the \textbf{exact matrix completion} problem.

In the scenario where observed entries are corrupted by additive noise,
we denote the noisy observation matrix by
$$
\pmb{Y} = P_{\Omega}(\pmb{M}+\pmb{E}),
$$
where $\pmb{E}\in \mathbb{R}^{n_1\times n_2}$ is a noise matrix.
We assume that $E_{i,j}$'s independently follow a sub-Gaussian distribution,
i.e., $\mathbb{E}e^{\theta E_{i,j}} \leq e^{\frac{\sigma^2 \theta^2}{2}}$ for any $\theta \geq 0$ and some $\sigma \geq 0$.
Our objective in this scenario is to obtain a precise estimation of $\pmb{M}$ by using $\pmb{Y}$, which we refer to as the \textbf{approximate matrix completion} problem.

To solve the exact and approximate matrix completion problems, we consider the standard constrained nuclear norm minimization algorithms.
For exact matrix completion, the following approach is used:
\begin{gather}
\underset{\pmb{X}\in\mathbb{R}^{n_1 \times n_2}}{\min} \| \pmb{X}\|_* 
~~
\text{subject to}~~ P_{\Omega}(\pmb{X}) = P_{\Omega}(\pmb{M}).
\label{eq:nnm}
\end{gather}
For approximate matrix completion, we consider the following modified approach:
\begin{gather}
\underset{\pmb{X}\in\mathbb{R}^{n_1 \times n_2}}{\min} \| \pmb{X}\|_* 
~~
\text{subject to}~~ \|P_{\Omega}(\pmb{X}-\pmb{Y})\|_F \leq \delta.
\label{eq:cnnm}
\end{gather}

\subsection{Previous Results on Nuclear Norm Minimization Method}

In this section, we introduce previous results on the constrained nuclear norm minimization method which are closely related to our work.
We provide additional discussion in Appendix \ref{appendix_sec:existing_works} regarding other studies that have addressed matrix completion under deterministic sampling schemes, though do not align with the specific goals and techniques of our research.

\subsubsection{When Sampling is Uniformly at Random}

\eqref{eq:nnm} and \eqref{eq:cnnm} have been thoroughly studied over the years, and most of the works have been based on the assumption that $\Omega$ is sampled uniformly at random.
For exact matrix completion, it is well-known that \eqref{eq:nnm} can succeed with an optimal sample complexity, which has been proven by \cite{recht2011simpler} as follows.

\begin{theorem}[Theorem 2 in \cite{recht2011simpler}]
\label{thm:recht}
Let $\pmb{M}$ be an $n_1\times n_2$ matrix of rank $r$ satisfying the following incoherence assumption:
\begin{itemize}
\item[\textup{\textbf{A1}}] For any $i\in[n_1]$ and $j\in[n_2]$, 
$\|\pmb{U}_{i,:}\|^2 \leq \frac{\mu_0 r}{n_1}$ and $\|\pmb{V}_{j,:}\|^2 \leq \frac{\mu_0 r}{n_2}$ for some positive $\mu_0$.
\end{itemize}
Suppose that $|\Omega|$ entries of $\pmb{M}$ are observed uniformly at random.
If the following condition is satisfied:
$$
|\Omega| \gtrsim \mu_0 r(n_1 + n_2)\log^2(\max(n_1, n_2)),
$$
then $\pmb{M}$ is the unique optimum of the problem \eqref{eq:nnm} with high probability.
\end{theorem}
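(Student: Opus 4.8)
The plan is to follow the dual-certificate strategy of \citep{candes2009exact} as streamlined by \citep{recht2011simpler}. Write $\pmb{M}=\pmb{U}\pmb{\Sigma}\pmb{V}^\top$, let $T=\{\pmb{U}\pmb{A}^\top+\pmb{B}\pmb{V}^\top:\pmb{A}\in\mathbb{R}^{n_2\times r},\,\pmb{B}\in\mathbb{R}^{n_1\times r}\}$ be the tangent space to the rank-$r$ variety at $\pmb{M}$, with $P_T$ the orthogonal projection onto $T$ and $P_{T^\perp}=\mathcal{I}-P_T$. Let $\mathcal{R}_\Omega$ denote coordinate restriction to $\Omega$, rescaled by $n_1n_2/|\Omega|$. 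The first step is to recall the deterministic sufficient condition for exact recovery: $\pmb{M}$ is the unique optimum of \eqref{eq:nnm} whenever (i) $\ker P_\Omega\cap T=\{0\}$ and (ii) there is a ``dual certificate'' $\pmb{W}$ supported on $\Omega$ with $P_T\pmb{W}$ close to $\pmb{U}\pmb{V}^\top$ in Frobenius norm and $\|P_{T^\perp}\pmb{W}\|<1$. I would use the inexact version, in which it suffices that $\|P_T\pmb{W}-\pmb{U}\pmb{V}^\top\|_F$ is at most a small multiple of $\sqrt{|\Omega|/(n_1n_2)}$ and $\|P_{T^\perp}\pmb{W}\|\le\tfrac12$, provided the operator bound in the next step also holds.

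The second step is the randomized core. Under uniform sampling (equivalently, up to constants, Bernoulli sampling with $p=|\Omega|/(n_1n_2)$), I would prove three concentration facts via the matrix Bernstein inequality applied to sums of independent rank-one ``entry'' operators, using only the incoherence hypothesis \textbf{A1}: first, $\|P_T\mathcal{R}_\Omega P_T-P_T\|\le\tfrac12$, which already gives (i); second, the $\ell_\infty$-to-spectral transfer bound $\|(\mathcal{R}_\Omega P_T-P_T)\pmb{Z}\|\lesssim\sqrt{\mu_0 r(n_1+n_2)\log(\max(n_1,n_2))/|\Omega|}\,\|\pmb{Z}\|_\infty$ for any fixed $\pmb{Z}\in T$; and third, an entrywise contraction $\|(\mathcal{R}_\Omega P_T-P_T)\pmb{Z}\|_\infty\le\tfrac12\|\pmb{Z}\|_\infty$. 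Each holds with probability at least $1-(n_1+n_2)^{-\beta}$ as soon as $|\Omega|\gtrsim\mu_0 r(n_1+n_2)\log(\max(n_1,n_2))$.

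The third step builds $\pmb{W}$ by the golfing scheme: split $\Omega$ into $k_0\asymp\log(\max(n_1,n_2))$ independent batches $\Omega_1,\dots,\Omega_{k_0}$ of equal size, set $\pmb{W}_0=\pmb{0}$, and iterate $\pmb{W}_j=\pmb{W}_{j-1}+\mathcal{R}_{\Omega_j}\pmb{Q}_{j-1}$ with $\pmb{Q}_{j-1}=\pmb{U}\pmb{V}^\top-P_T\pmb{W}_{j-1}\in T$. The first concentration fact gives $\|\pmb{Q}_j\|_F\le\tfrac12\|\pmb{Q}_{j-1}\|_F$ and the third gives $\|\pmb{Q}_j\|_\infty\le\tfrac12\|\pmb{Q}_{j-1}\|_\infty$, so after $k_0$ steps the Frobenius residual is polynomially small, while $\|P_{T^\perp}\pmb{W}\|\le\sum_{j=1}^{k_0}\|P_{T^\perp}(\mathcal{R}_{\Omega_j}P_T-P_T)\pmb{Q}_{j-1}\|$, which by the transfer bound and the geometric decay of $\|\pmb{Q}_{j-1}\|_\infty$ is a convergent series below $\tfrac12$. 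Each batch needs $\gtrsim\mu_0 r(n_1+n_2)\log(\max(n_1,n_2))$ entries; with $k_0\asymp\log(\max(n_1,n_2))$ batches and a union bound over them, the total budget is $|\Omega|\gtrsim\mu_0 r(n_1+n_2)\log^2(\max(n_1,n_2))$, exactly the claimed condition, and $\pmb{M}$ is then the unique optimum of \eqref{eq:nnm} with high probability.

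I expect the main obstacle to be the $\ell_\infty$-to-spectral transfer bound in the second step: controlling the spectral norm of $(\mathcal{R}_\Omega P_T-P_T)\pmb{Z}$ by the cheap entrywise norm $\|\pmb{Z}\|_\infty$ is the delicate matrix-Bernstein estimate, and it is responsible for one of the two logarithmic factors (the other arising from the union bound over the $k_0$ golfing steps). The remaining technical care is coherence bookkeeping — keeping every intermediate quantity expressed through $\|\pmb{U}_{i,:}\|$, $\|\pmb{V}_{j,:}\|$ and $\|\pmb{U}\pmb{V}^\top\|_\infty$, all controlled by \textbf{A1} — and ensuring the batches are genuinely independent (via sampling with replacement, or a Bernoulli comparison argument); everything else reduces to summing geometric series.
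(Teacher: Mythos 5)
The paper does not prove this statement itself---it is quoted as Theorem 2 of \cite{recht2011simpler}---and your sketch is precisely the argument of that source: an inexact dual certificate built by the golfing scheme, with the operator-Bernstein bounds $\|P_T\mathcal{R}_\Omega P_T-P_T\|\le\tfrac12$, the $\ell_\infty$-to-spectral transfer estimate, and the entrywise contraction, yielding the $\mu_0 r(n_1+n_2)\log^2(\max(n_1,n_2))$ budget. So your proposal is correct and takes essentially the same approach as the paper's (cited) proof; the only bookkeeping detail you would need to add when writing it out in full is the control of the weighted $\|\cdot\|_{\infty,2}$ norm alongside $\|\cdot\|_\infty$, exactly as in \cite{recht2011simpler}.
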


For approximate matrix completion, \cite{candes2010matrix} derived the following error bound for the solution of \eqref{eq:cnnm}.

\begin{theorem}[Theorem 7 in \cite{candes2010matrix}]
\label{thm:plan}
Assume \textup{\textbf{A1}} as in Theorem \ref{thm:recht}.
Suppose that the entries of noisy matrix $\pmb{M} + \pmb{E}$ are observed uniformly at random with a fixed probability $p$,
and let $\Omega$ be the set of observed entries.
Suppose that $E_{i,j}$'s independently follow a sub-Gaussian distribution with parameter $\sigma$.
If the following condition is satisfied:
$$
|\Omega| \gtrsim \mu_0 r(n_1 + n_2)\log^2(\max(n_1, n_2)),
$$
then for $\delta \geq c\sigma\sqrt{|\Omega|}$, the solution $\hat{\pmb{M}}$ of the problem \eqref{eq:cnnm} obeys
$\|\pmb{M} - \hat{\pmb{M}} \|_F \leq 4\delta \sqrt{\frac{C \min(n_1, n_2)}{p}} + 2\delta $
with high probability, where $c$ and $C$ are some positive constants.
\end{theorem}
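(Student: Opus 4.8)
The plan is to run the standard inexact-dual-certificate argument for noisy nuclear norm minimization: the sampling model in Theorem~\ref{thm:plan} is precisely the uniform one under which such certificates are known to exist, so I would split the work into a probabilistic part (essentially imported from the exact-completion theory behind Theorem~\ref{thm:recht}) and a short deterministic convex-geometry part. \textbf{First}, I would check that $\pmb{M}$ is feasible for \eqref{eq:cnnm} with high probability: since $P_{\Omega}(\pmb{M}-\pmb{Y}) = -P_{\Omega}(\pmb{E})$ and $\|P_{\Omega}(\pmb{E})\|_F^2 = \sum_{(i,j)\in\Omega} E_{i,j}^2$ is a sum of $|\Omega|$ independent sub-exponential terms with mean $O(\sigma^2)$, a Bernstein tail bound gives $\|P_{\Omega}(\pmb{E})\|_F \le c\sigma\sqrt{|\Omega|} \le \delta$ with high probability. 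Hence $\|\hat{\pmb{M}}\|_* \le \|\pmb{M}\|_*$, and writing $\pmb{H} := \hat{\pmb{M}} - \pmb{M}$, the triangle inequality yields $\|P_{\Omega}(\pmb{H})\|_F \le \|P_{\Omega}(\hat{\pmb{M}}-\pmb{Y})\|_F + \|P_{\Omega}(\pmb{Y}-\pmb{M})\|_F \le 2\delta$.

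\textbf{Next}, I would set up the deterministic ingredients. Let $T := \{\pmb{U}\pmb{A}^\top + \pmb{B}\pmb{V}^\top\}$ be the tangent space at $\pmb{M}$, with orthogonal projection $P_T$ and $P_{T^\perp} := I - P_T$. Under the assumed sample complexity $|\Omega| \gtrsim \mu_0 r(n_1+n_2)\log^2(\max(n_1,n_2))$ together with \textbf{A1}, two events hold with high probability and are exactly what powers Theorem~\ref{thm:recht}: (i) a restricted-isometry bound on $T$, namely $\|P_T P_{\Omega} P_T - p\,P_T\|_{\mathrm{op}} \le p/2$, which gives $\|P_{\Omega}(\pmb{Z})\|_F^2 \ge \tfrac{p}{2}\|\pmb{Z}\|_F^2$ for every $\pmb{Z}\in T$; and (ii) the existence of an inexact dual certificate $\pmb{W}_0 = P_{\Omega}(\pmb{W}_0)$ satisfying $\|P_T(\pmb{W}_0) - \pmb{U}\pmb{V}^\top\|_F \le \tfrac14\sqrt{p/2}$, $\|P_{T^\perp}(\pmb{W}_0)\| \le \tfrac12$, and $\|\pmb{W}_0\|_F = \|P_{\Omega}(\pmb{W}_0)\|_F \lesssim \sqrt{\min(n_1,n_2)}$, as produced by the golfing scheme.

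\textbf{Then} I would close the estimate by convex geometry. From optimality $\|\hat{\pmb{M}}\|_* \le \|\pmb{M}\|_*$ and the subgradient inequality $\|\pmb{M}+\pmb{H}\|_* \ge \|\pmb{M}\|_* + \langle \pmb{U}\pmb{V}^\top + \pmb{W}, \pmb{H}\rangle$ with $\pmb{W}$ chosen so that $\|\pmb{W}\|\le 1$, $P_T\pmb{W}=0$, and $\langle \pmb{W},\pmb{H}\rangle = \|P_{T^\perp}(\pmb{H})\|_*$, I obtain $\langle \pmb{U}\pmb{V}^\top,\pmb{H}\rangle + \|P_{T^\perp}(\pmb{H})\|_* \le 0$. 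Using $\pmb{U}\pmb{V}^\top\in T$ and $\pmb{W}_0 = P_{\Omega}(\pmb{W}_0)$, I expand $\langle \pmb{U}\pmb{V}^\top,\pmb{H}\rangle = \langle \pmb{U}\pmb{V}^\top - P_T(\pmb{W}_0), P_T(\pmb{H})\rangle + \langle \pmb{W}_0, P_{\Omega}(\pmb{H})\rangle - \langle P_{T^\perp}(\pmb{W}_0), P_{T^\perp}(\pmb{H})\rangle$, and bound the three terms by Cauchy--Schwarz and H\"older using the first step and (ii); this gives $\|P_{T^\perp}(\pmb{H})\|_* \le \tfrac12\sqrt{p/2}\,\|P_T(\pmb{H})\|_F + 4\delta\|\pmb{W}_0\|_F$. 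Feeding this into (i) via $\sqrt{p/2}\,\|P_T(\pmb{H})\|_F \le \|P_{\Omega}(\pmb{H})\|_F + \|P_{\Omega}P_{T^\perp}(\pmb{H})\|_F \le 2\delta + \|P_{T^\perp}(\pmb{H})\|_*$ and rearranging (the cross term is absorbable since its coefficient $\tfrac12$ is below $1$) yields $\|P_T(\pmb{H})\|_F \lesssim \delta(1+\|\pmb{W}_0\|_F)/\sqrt{p}$; since $\|P_{T^\perp}(\pmb{H})\|_F \le \|P_{T^\perp}(\pmb{H})\|_*$ is controlled by the same quantities, $\|\pmb{H}\|_F^2 = \|P_T(\pmb{H})\|_F^2 + \|P_{T^\perp}(\pmb{H})\|_F^2 \lesssim \delta^2\min(n_1,n_2)/p + \delta^2$, and a careful accounting of constants recovers the stated $\|\pmb{M}-\hat{\pmb{M}}\|_F \le 4\delta\sqrt{C\min(n_1,n_2)/p} + 2\delta$.

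\textbf{The main obstacle} is entirely in the probabilistic step (ii): constructing the dual certificate with the golfing scheme and verifying, via matrix Bernstein and related concentration bounds, both the restricted isometry on $T$ and the three norm estimates on $\pmb{W}_0$ --- in particular driving the certificate error below the $\sqrt{p}$ scale (which only costs $O(\log(\max(n_1,n_2)))$ extra golfing iterations) and controlling $\|P_{\Omega}(\pmb{W}_0)\|_F$, whose per-batch $1/\sqrt{q}$ amplification is what ultimately produces the $\sqrt{\min(n_1,n_2)}$ factor. Because Theorem~\ref{thm:plan} is quoted from \cite{candes2010matrix}, in practice I would cite their certificate and isometry estimates directly, after which the feasibility step and the convex-geometry step are routine.
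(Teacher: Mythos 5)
First, note that the paper does not prove Theorem \ref{thm:plan} at all: it is imported verbatim as Theorem 7 of \cite{candes2010matrix}, so there is no in-paper proof to match; the closest internal analogue is the machinery in Appendices B.2 and C.2, where the same Cand\`es--Plan-style argument is carried out for the paper's own noisy results. Your sketch is a correct reconstruction of that argument, but with different bookkeeping. The paper (following \cite{candes2010matrix}) splits $\|\hat{\pmb{M}}-\pmb{M}\|_F^2$ over $\Omega$ and $\Omega^c$, applies the exact-recovery inequality to $\pmb{Z}=P_{\Omega^c}(\hat{\pmb{M}}-\pmb{M})$, bounds $\|P_{T^\perp}(\pmb{Z})\|_*$ by $\|P_{\Omega}(\pmb{M}-\hat{\pmb{M}})\|_*$, and obtains the dimension factor from $\|P_{\Omega}(\pmb{M}-\hat{\pmb{M}})\|_*\leq\sqrt{\min(n_1,n_2)}\,\|P_{\Omega}(\pmb{M}-\hat{\pmb{M}})\|_F\leq 2\delta\sqrt{\min(n_1,n_2)}$, needing only the standard certificate properties ($\|P_T(\pmb{W}_0)-\pmb{U}\pmb{V}^\top\|_F$ small, $\|P_{T^\perp}(\pmb{W}_0)\|\leq\tfrac12$) plus the restricted isometry on $T$. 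You instead work with $\pmb{H}=\hat{\pmb{M}}-\pmb{M}$ directly and route the dimension factor through $\|\pmb{W}_0\|_F\lesssim\sqrt{\min(n_1,n_2)}$. That Frobenius bound is not one of the standard golfing-certificate guarantees and would have to be proved separately (it does hold here, since each golfing batch contributes roughly $q^{-1/2}\|\pmb{W}_{k-1}\|_F$ so $\|\pmb{W}_0\|_F\lesssim\sqrt{r\log(\max(n_1,n_2))/p}\lesssim\sqrt{\min(n_1,n_2)}$ under the assumed sample complexity, but this needs its own concentration step and incoherence tracking); a cleaner variant of your own argument is to use $\langle \pmb{W}_0,P_\Omega(\pmb{H})\rangle\leq\|\pmb{W}_0\|\cdot\|P_\Omega(\pmb{H})\|_*\leq O(1)\cdot 2\delta\sqrt{\min(n_1,n_2)}$, which is exactly the mechanism the paper's own appendix proofs use. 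Your feasibility step, the subgradient inequality, and the restricted-isometry absorption are all sound, and the exact constants $4$ and $2$ are only recoverable up to the free constant $C$, which is acceptable since the theorem is quoted with unspecified constants; what each approach buys is that the paper's version needs no control of the certificate beyond operator-norm bounds, while yours avoids converting $\|P_\Omega(\pmb{H})\|_F$ to a nuclear norm at the cost of an extra certificate estimate.
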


However, the assumption that sampling is uniformly at random is unrealistic, and the solvability of \eqref{eq:nnm} and \eqref{eq:cnnm} under deterministic sampling schemes has remained relatively unclear.

\subsubsection{When Sampling is Deterministic in a Restrictive Setting}

\cite{bhojanapalli2014universal} and \cite{burnwal2020deterministic}
presented the theoretical guarantees of \eqref{eq:nnm} and \eqref{eq:cnnm} when  $\Omega$ is deterministic.
However, their works are restricted to the scenario where the number of observed entries in each row and column is the same.
Furthermore, they have several additional limitations as well.
We first introduce their theorems on exact and approximate matrix completion below.

\begin{theorem}[Theorem 4.2 in \cite{bhojanapalli2014universal}]
\label{thm:bhojanapalli}
Let $\pmb{M}$ be an $n_1\times n_2$ matrix of rank $r$, and suppose that we observe its entries over a fixed sampling set $\Omega$.
Suppose that the number of observed entries in each row and column is the same, denoted as $d_1$ and $d_2$, resp.
Assume \textup{\textbf{A1}} as in Theorem \ref{thm:recht} and assume that
\begin{itemize}
\item[\textup{\textbf{A2}}]
For any $S\subseteq [n_1]$ s.t. $|S|=d_2$, 
$\| \frac{n_1}{d_2}\sum_{i\in S} \pmb{U}_{i,:} \pmb{U}_{i,:}^\top - \pmb{I}_{n_1}\| \leq \theta$
and
\\
for any $S\subseteq [n_2]$ s.t. $|S|=d_1$,
$\| \frac{n_2}{d_1}\sum_{j\in S} \pmb{V}_{j,:} \pmb{V}_{j,:}^\top - \pmb{I}_{n_2}\| \leq \theta$ 
for some positive $\theta$.
\end{itemize}
If $\theta \leq \frac{1}{6}$ and
$
|\Omega| \geq 36C^2 \mu_0^2 r^2 max(n_1, n_2)
$
for some positive constant $C$,
then $\pmb{M}$ is the unique optimum of the problem \eqref{eq:nnm}.
\end{theorem}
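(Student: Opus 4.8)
The plan is to construct an exact dual certificate for \eqref{eq:nnm}, in the spirit of \cite{candes2009exact} and \cite{recht2011simpler}. Write $\pmb{M}=\pmb{U}\pmb{\Sigma}\pmb{V}^\top$, let $T:=\{\pmb{U}\pmb{A}^\top+\pmb{B}\pmb{V}^\top:\pmb{A}\in\mathbb{R}^{n_2\times r},\ \pmb{B}\in\mathbb{R}^{n_1\times r}\}$ be the tangent space at $\pmb{M}$, and let $P_T$, $P_{T^\perp}$ denote the orthogonal projections of $\mathbb{R}^{n_1\times n_2}$ onto $T$ and $T^\perp$. It suffices to establish (i) $P_\Omega$ is injective on $T$ and (ii) the existence of $\pmb{W}$ with $P_\Omega(\pmb{W})=\pmb{W}$, $P_T(\pmb{W})=\pmb{U}\pmb{V}^\top$, and $\|P_{T^\perp}(\pmb{W})\|<1$. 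Given these, for any feasible perturbation $\pmb{\Delta}$ (so $P_\Omega(\pmb{\Delta})=\pmb{0}$ and hence $\langle\pmb{W},\pmb{\Delta}\rangle=0$), the subgradient inequality $\|\pmb{M}+\pmb{\Delta}\|_*\ge\|\pmb{M}\|_*+\langle\pmb{U}\pmb{V}^\top+\pmb{H},\pmb{\Delta}\rangle$, the identity $\langle\pmb{U}\pmb{V}^\top,\pmb{\Delta}\rangle=-\langle P_{T^\perp}(\pmb{W}),P_{T^\perp}(\pmb{\Delta})\rangle$ (which follows from $P_T(\pmb{W})=\pmb{U}\pmb{V}^\top$ and $\langle\pmb{W},\pmb{\Delta}\rangle=0$), and a choice of $\pmb{H}\in T^\perp$, $\|\pmb{H}\|\le1$, exposing $\|P_{T^\perp}(\pmb{\Delta})\|_*$ combine to give $\|\pmb{M}+\pmb{\Delta}\|_*-\|\pmb{M}\|_*\ge(1-\|P_{T^\perp}(\pmb{W})\|)\|P_{T^\perp}(\pmb{\Delta})\|_*$; hence any other optimum has $\pmb{\Delta}\in T$, and then (i) forces $\pmb{\Delta}=\pmb{0}$.

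The first step is to show that the uniform-degree hypothesis and \textbf{A2} make $P_TP_\Omega P_T$ well conditioned on $T$. Put $\bar p:=|\Omega|/(n_1n_2)$ and note $d_2/n_1=d_1/n_2=\bar p$ since $|\Omega|=n_1d_1=n_2d_2$. For a column $j$ let $S_j=\{i:(i,j)\in\Omega\}$, so $|S_j|=d_2$; computing column by column,
$$
\|P_\Omega(\pmb{U}\pmb{A}^\top)\|_F^2=\sum_{j\in[n_2]}\pmb{A}_{j,:}^\top\Big(\sum_{i\in S_j}\pmb{U}_{i,:}\pmb{U}_{i,:}^\top\Big)\pmb{A}_{j,:},
$$
and \textbf{A2} puts each inner matrix within $\bar p\,\theta$ in spectral norm of $\bar p\,\pmb{I}_r$, so $\|P_\Omega(\pmb{U}\pmb{A}^\top)\|_F^2=(1\pm\theta)\bar p\,\|\pmb{U}\pmb{A}^\top\|_F^2$, and symmetrically for $\pmb{B}\pmb{V}^\top$. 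Decomposing a general $\pmb{Z}\in T$ as $\pmb{U}\pmb{U}^\top\pmb{Z}+(\pmb{I}_{n_1}-\pmb{U}\pmb{U}^\top)\pmb{Z}\pmb{V}\pmb{V}^\top$ and controlling the interaction of the two pieces via \textbf{A2} in the same way yields $\|\bar p^{-1}P_TP_\Omega P_T-P_T\|_{T\to T}\le c\theta$ for an absolute constant $c$. Since $\theta\le\frac{1}{6}$, this is bounded away from $1$, so $P_\Omega$ is injective on $T$ (giving (i)) and $P_TP_\Omega P_T$ is invertible on $T$ with $\|(P_TP_\Omega P_T)^{-1}\|_{T\to T}\le\bar p^{-1}/(1-c\theta)$.

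For (ii) I would use the least-squares certificate $\pmb{W}=P_\Omega(\pmb{Q})$ with $\pmb{Q}:=P_T(P_TP_\Omega P_T)^{-1}(\pmb{U}\pmb{V}^\top)\in T$. By construction $P_\Omega(\pmb{W})=\pmb{W}$ and $P_T(\pmb{W})=P_TP_\Omega P_T(P_TP_\Omega P_T)^{-1}(\pmb{U}\pmb{V}^\top)=\pmb{U}\pmb{V}^\top$; moreover the preceding step gives $\pmb{Q}=\bar p^{-1}(\pmb{U}\pmb{V}^\top+\pmb{R})$ with $\|\pmb{R}\|\lesssim\theta$, and a sup-over-$j$ refinement of that estimate (again using \textbf{A2}) shows that in $\pmb{Q}=\pmb{U}\pmb{A}^\top+\pmb{B}\pmb{V}^\top$ the rows of $\bar p\pmb{A}$ and $\bar p\pmb{B}$ are incoherent in the sense of \textbf{A1}. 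Since $P_{T^\perp}$ annihilates $\pmb{Q}\in T$,
$$
P_{T^\perp}(\pmb{W})=P_{T^\perp}\big(P_\Omega(\pmb{Q})-\bar p\,\pmb{Q}\big),\qquad\|P_{T^\perp}(\pmb{W})\|\le\big\|P_\Omega(\pmb{Q})-\bar p\,\pmb{Q}\big\|.
$$
Now $P_\Omega(\pmb{Q})-\bar p\,\pmb{Q}=\pmb{C}\circ\pmb{Q}$, the Hadamard product with $\pmb{C}:=\pmb{B}_\Omega-\bar p\,\mathbf{1}\mathbf{1}^\top$, where $\pmb{B}_\Omega$ is the biadjacency matrix of the observation graph ($P_\Omega(\pmb{Z})=\pmb{B}_\Omega\circ\pmb{Z}$). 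In the $(d_1,d_2)$-biregular case $\mathbf{1}$ is the common top singular vector of $\pmb{B}_\Omega$ and $\bar p\,\mathbf{1}\mathbf{1}^\top$ with the same singular value, so $\|\pmb{C}\|=\sigma_2(\pmb{B}_\Omega)$, the spectral gap. Using the bound $\|\pmb{C}\circ(\pmb{X}\pmb{Y}^\top)\|\le\|\pmb{C}\|\,(\max_i\|\pmb{X}_{i,:}\|)(\max_j\|\pmb{Y}_{j,:}\|)$ (a spectral estimate for Hadamard products of rank-$r$ matrices that avoids a spurious factor $r$), applied to the two summands of $\pmb{Q}$, together with the incoherence of $\pmb{U},\pmb{V},\pmb{A},\pmb{B}$ gives $\|P_{T^\perp}(\pmb{W})\|\lesssim\mu_0 r\,\sigma_2(\pmb{B}_\Omega)\,\bar p^{-1}/\sqrt{n_1n_2}$. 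Since for a well-connected observation graph $\sigma_2(\pmb{B}_\Omega)$ is much smaller than the top singular value $\sigma_1(\pmb{B}_\Omega)=|\Omega|/\sqrt{n_1n_2}$ --- the relevant ratio being what the constant $C$ reflects --- the hypothesis $|\Omega|\ge36C^2\mu_0^2r^2\max(n_1,n_2)$ makes the right-hand side $<1$, establishing (ii) and the theorem.

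I expect the spectral estimate in the last step to be the main obstacle. In the random-sampling analyses behind Theorems~\ref{thm:recht}--\ref{thm:plan}, the operator $\pmb{Z}\mapsto P_\Omega(\pmb{Z})-\bar p\,\pmb{Z}$ on low-rank incoherent matrices is controlled by matrix Bernstein; here one must instead squeeze an analogous bound out of the combinatorics of $\Omega$, and the Hadamard/diagonal-conjugation identity above shows this works only once one further injects a quantitative expansion (spectral-gap) property of the observation graph. The single-shot least-squares certificate then unavoidably pays an incoherence factor $\mu_0 r$ a second time, which is why the resulting sample complexity is the sub-optimal $\mu_0^2r^2\max(n_1,n_2)$ rather than the near-optimal rate of Theorem~\ref{thm:recht}, and why \textbf{A2} is imposed on top of the standard incoherence \textbf{A1}.
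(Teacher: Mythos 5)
There is nothing in this paper to compare your argument against: Theorem~\ref{thm:bhojanapalli} is quoted as background from \cite{bhojanapalli2014universal}, the paper gives no proof of it, and it explicitly states (pointing to the appendix of \cite{burnwal2020deterministic}) that the original proof contains an error, so that the claimed $O(\mu_0^2 r^2\max(n_1,n_2))$ sample complexity for exact completion is regarded here as \emph{not established}. Your proposal therefore has to stand on its own, and as written it does not: it reproduces the architecture of the original dual-certificate argument while leaving open exactly the steps that are known to be the problematic ones.

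Concretely: (a) the isometry step. You assert that \textbf{A2} alone yields $\|\bar p^{-1}P_TP_\Omega P_T-P_T\|\leq c\theta$ on $T$. \textbf{A2} controls only the two ``diagonal'' quantities $\|P_\Omega(\pmb{U}\pmb{A}^\top)\|_F$ and $\|P_\Omega(\pmb{B}\pmb{V}^\top)\|_F$, because those decompose column-by-column (resp.\ row-by-row) into sums over index sets of size exactly $d_2$ (resp.\ $d_1$). The cross term $\langle P_\Omega(\pmb{U}\pmb{A}^\top),P_\Omega(\pmb{B}\pmb{V}^\top)\rangle$ is a sum over all of $\Omega$ and is not of that form; bounding its deviation from $\bar p\,\langle \pmb{U}\pmb{A}^\top,\pmb{B}\pmb{V}^\top\rangle$ requires spectral information about the biadjacency matrix, which is exactly why Lemma~\ref{lemma:rectangular_bound} of this paper obtains a contraction factor $(\theta+\gamma)^2$ with $\gamma$ built from the graph quantities $\xi_{1,\mathcal{G}},\xi_{2,\mathcal{G}},\psi_{\mathcal{G}}$ (via Lemma~\ref{lemma:adjacency_matrix}), and not $\theta$ alone. (b) Your final step needs $\sigma_2(\pmb{A}_{\mathcal{G}})\lesssim\sqrt{d}$; no such spectral-gap hypothesis appears in the statement (the unexplained constant $C$ is its only trace), so you are importing an assumption — acceptable to flag, but then you are proving a different statement. (c) The genuinely hard step, showing that the least-squares certificate $\pmb{Q}=(P_TP_\Omega P_T)^{-1}(\pmb{U}\pmb{V}^\top)$ has factors whose rows satisfy an \textbf{A1}-type bound, is dismissed as ``a sup-over-$j$ refinement.'' Deterministic control of the $\|\cdot\|_{2,\infty}$ norm of the Neumann-series terms for this inverse is precisely where single-shot certificates break down (it is the reason the golfing scheme exists in the random setting), and it is essentially where the error identified by \cite{burnwal2020deterministic} lives; without it, the factor norms entering your $\gamma_2$-type Hadamard bound on $\pmb{C}\circ\pmb{Q}$ are uncontrolled and the chain collapses. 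For contrast, the closest argument this paper actually carries out (Theorem~\ref{thm:noiseless_rectangular} and Appendix~\ref{appendix_subsec:proof_of_thm3}) sidesteps (c) by taking the one-step certificate $\pmb{Y}=\frac{n_1n_2}{|\Omega|}P_\Omega(\pmb{U}\pmb{V}^\top)$ and pays for it with the weaker condition involving both $\theta$ and $\gamma$ rather than the clean $r^2\max(n_1,n_2)$ bound; a correct proof of the statement as quoted, at that sample complexity, is not known.
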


\begin{theorem}[Theorem 7 in \cite{burnwal2020deterministic}\footnotemark]
\label{thm:burnwal}
Let $\pmb{M}$ be an $n_1\times n_2$ matrix of rank $r$, and suppose that we observe the entries of noisy matrix $\pmb{M}+\pmb{E}$ over a fixed sampling set $\Omega$, where $E_{i,j}$'s independently follow a sub-Gaussian distribution with parameter $\sigma$.
Suppose that the number of observed entries in each row and column is the same, denoted as $d_1$ and $d_2$, resp.
Assume \textup{\textbf{A1}} and \textup{\textbf{A2}} as in Theorem \ref{thm:bhojanapalli}.
If the following condition is satisfied:
\begin{align*}
\phi + \sqrt{\frac{n_1 n_2 r (\theta^2 + \phi^2)}{|\Omega| (1-\theta -\phi)}} \leq \frac{1}{2}
\end{align*}
where $\phi = \frac{c\mu_0 r}{\sqrt{d}}$ for some positive constant $c$,
then for $\delta \geq 4\sigma\sqrt{|\Omega|} + 2\sigma\sqrt{\log(\eta^{-1})}$, the solution $\hat{\pmb{M}}$ of the problem \eqref{eq:cnnm} obeys
$\|\pmb{M} - \hat{\pmb{M}} \|_F \leq 4\delta \sqrt{\frac{C \min(n_1, n_2)}{p}} + 2\delta$
with probability at least $1-\eta$, where $p = \frac{|\Omega|}{n_1 n_2}$ and $C$ is some positive constant.

\end{theorem}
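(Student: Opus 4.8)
The plan is to adapt the standard analysis of noisy constrained nuclear-norm minimization of \cite{candes2010matrix} and \cite{recht2011simpler}. Since the only randomness here is in the noise $\pmb{E}$, the argument reduces to three ingredients: (i) a concentration bound making $\pmb{M}$ feasible for \eqref{eq:cnnm}; (ii) a subgradient inequality for $\|\cdot\|_*$; and (iii) a restricted strong convexity statement for $P_\Omega$ on the descent cone at $\pmb{M}$. Steps (i)--(ii) are routine; the work lies in supplying deterministic substitutes for the two sampling estimates a random model would give for free --- a restricted isometry of $P_\Omega$ on the tangent space at $\pmb{M}$, and an approximate dual certificate --- and this is exactly what \textbf{A1} together with \textbf{A2} and the degree-regularity $|\Omega| = n_1 d_1 = n_2 d_2$ are designed to provide.

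For (i), let $\pmb{e}\in\mathbb{R}^{|\Omega|}$ collect the observed noise entries, so $\|P_\Omega(\pmb{E})\|_F = \|\pmb{e}\|$; its coordinates are independent and $\sigma$-sub-Gaussian, so each $E_{i,j}^2$ is sub-exponential and a Bernstein bound on $\sum_{(i,j)\in\Omega}E_{i,j}^2$ yields $\|P_\Omega(\pmb{E})\|_F \le 4\sigma\sqrt{|\Omega|} + 2\sigma\sqrt{\log(\eta^{-1})}$ with probability at least $1-\eta$. On this event $\pmb{M}$ is feasible for \eqref{eq:cnnm}, hence $\|\hat{\pmb{M}}\|_*\le\|\pmb{M}\|_*$, and (using feasibility of $\hat{\pmb{M}}$) $\|P_\Omega(\pmb{Z})\|_F \le 2\delta$ with $\pmb{Z}:=\hat{\pmb{M}}-\pmb{M}$; the remaining argument is deterministic and conditioned on this event. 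For (ii), write $T = \{\pmb{U}\pmb{A}^\top + \pmb{B}\pmb{V}^\top : \pmb{A}\in\mathbb{R}^{n_2\times r},\ \pmb{B}\in\mathbb{R}^{n_1\times r}\}$ with orthogonal projector $P_T$; since $\pmb{M}$ and any matrix in $T^\perp$ have orthogonal row and column spaces, $\|\pmb{M}+\pmb{Z}\|_*\le\|\pmb{M}\|_*$ forces $\langle\pmb{U}\pmb{V}^\top, P_T(\pmb{Z})\rangle + \|P_{T^\perp}(\pmb{Z})\|_* \le 0$.

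Step (iii) is where \textbf{A1}--\textbf{A2} enter. Set $\mathcal{R}_\Omega := \tfrac{n_1 n_2}{|\Omega|}P_\Omega - \mathcal{I}$, where $\mathcal{I}$ is the identity operator on $n_1\times n_2$ matrices. Expanding $P_T P_\Omega P_T$ into blocks indexed by the rows and columns of $\Omega$ and bounding the diagonal blocks $\tfrac{n_1}{d}\sum\pmb{U}_{i,:}\pmb{U}_{i,:}^\top$, $\tfrac{n_2}{d}\sum\pmb{V}_{j,:}\pmb{V}_{j,:}^\top$ by \textbf{A2} and the off-diagonal blocks by \textbf{A1}, one obtains the operator bound $\|P_T\mathcal{R}_\Omega P_T\|\le\theta+\phi$ (operator norm in the Frobenius metric) together with a spectral-norm estimate $\|P_{T^\perp}\mathcal{R}_\Omega P_T(\pmb{U}\pmb{V}^\top)\|\le\phi$, with $\phi = c\mu_0 r/\sqrt{d}$; in particular $\|P_\Omega P_T(\pmb{Z})\|_F^2 \ge (1-\theta-\phi)\tfrac{|\Omega|}{n_1n_2}\|P_T(\pmb{Z})\|_F^2$. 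Take the certificate $\pmb{\Lambda}:=\tfrac{n_1 n_2}{|\Omega|}P_\Omega(\pmb{U}\pmb{V}^\top)$, supported on $\Omega$; since $\pmb{U}\pmb{V}^\top\in T$ we have $P_T\pmb{\Lambda}-\pmb{U}\pmb{V}^\top = P_T\mathcal{R}_\Omega P_T(\pmb{U}\pmb{V}^\top)$ and $P_{T^\perp}\pmb{\Lambda} = P_{T^\perp}\mathcal{R}_\Omega P_T(\pmb{U}\pmb{V}^\top)$, so $\|P_T\pmb{\Lambda}-\pmb{U}\pmb{V}^\top\|_F\le(\theta+\phi)\sqrt{r}$ and $\|P_{T^\perp}\pmb{\Lambda}\|\le\phi$, while $\|\pmb{\Lambda}\|_F\le\mu_0 r\sqrt{n_1n_2/|\Omega|}$ by \textbf{A1}. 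Plugging $\pmb{\Lambda}$ into the inequality from (ii), using $\langle\pmb{\Lambda},\pmb{Z}\rangle = \langle\pmb{\Lambda},P_\Omega(\pmb{Z})\rangle$ and $\|P_\Omega(\pmb{Z})\|_F\le 2\delta$, yields $\|P_{T^\perp}(\pmb{Z})\|_* \lesssim \delta\|\pmb{\Lambda}\|_F + (\theta+\phi)\sqrt{r}\,\|P_T(\pmb{Z})\|_F$. Finally combine this with $\|P_\Omega P_T(\pmb{Z})\|_F \le \|P_\Omega(\pmb{Z})\|_F + \|P_{T^\perp}(\pmb{Z})\|_F \le 2\delta + \|P_{T^\perp}(\pmb{Z})\|_*$: the hypothesis $\phi + \sqrt{\tfrac{n_1 n_2 r(\theta^2+\phi^2)}{|\Omega|(1-\theta-\phi)}}\le\tfrac12$ is exactly what guarantees that the $\|P_T(\pmb{Z})\|_F$-coefficient on the right can be absorbed into the left-hand isometry constant (the leading $\phi$ term simultaneously making $\|P_{T^\perp}\pmb{\Lambda}\|$ absorbable), leaving $\|P_T(\pmb{Z})\|_F \lesssim \delta(1+\|\pmb{\Lambda}\|_F)\sqrt{n_1n_2/|\Omega|}$ and hence, via $\|\pmb{Z}\|_F\le\|P_T(\pmb{Z})\|_F + \|P_{T^\perp}(\pmb{Z})\|_*$, the stated bound $\|\pmb{M}-\hat{\pmb{M}}\|_F\le 4\delta\sqrt{C\min(n_1,n_2)/p}+2\delta$ after tracking constants.

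The main obstacle is the pair of deterministic operator estimates in (iii). Without an i.i.d.\ model there is no matrix Bernstein inequality, so $\|P_T\mathcal{R}_\Omega P_T\|\le\theta+\phi$ and, more delicately, the \emph{spectral} bound on $P_{T^\perp}\mathcal{R}_\Omega P_T$ must be extracted by hand from the block decomposition: the diagonal blocks are handled through \textbf{A2} (the source of $\theta$) and the off-diagonal blocks through the coherence \textbf{A1} and the regularity $|\Omega|=n_1 d_1=n_2 d_2$ (the source of $\phi=c\mu_0 r/\sqrt{d}$). The $\sqrt{n_1 n_2 r/|\Omega|}$ factor in the hypothesis is the price of these estimates --- it is essentially the ratio between the certificate's Frobenius error $\sqrt{r}(\theta+\phi)$ on $T$ and the isometry constant $\sqrt{(1-\theta-\phi)|\Omega|/(n_1n_2)}$ that the cone argument must dominate --- and it is why this route yields only a sub-optimal sample-complexity rate.
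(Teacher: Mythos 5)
Your overall skeleton is the right one, and in fact it mirrors how this paper actually handles the statement: Theorem \ref{thm:burnwal} is proved here only as the $d$-regular specialization of Theorem \ref{thm:noisy_rectangular} (Appendix \ref{appendix_subsec:proof_of_thm4}), whose ingredients are exactly the three deterministic estimates you list --- feasibility of $\pmb{M}$ from the sub-Gaussian tail bound, the one-shot certificate $\frac{n_1 n_2}{|\Omega|}P_\Omega(\pmb{U}\pmb{V}^\top)$ with $\|P_T(\cdot)-\pmb{U}\pmb{V}^\top\|_F\le\sqrt{r(\theta^2+\gamma^2)}$ and $\|P_{T^\perp}(\cdot)\|\le\gamma$, and the restricted isometry on $T$ (Lemmas \ref{lemma:rectangular_bound}, \ref{lemma:l2bound_general}, \ref{lemma:frobenius_bound}). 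However, two concrete gaps remain. First, you claim the operator bounds, in particular $\|P_{T^\perp}\pmb{\Lambda}\|\le\phi$ with $\phi=c\mu_0 r/\sqrt{d}$, follow from \textbf{A1}, \textbf{A2} and degree regularity $|\Omega|=n_1d_1=n_2d_2$ alone. That cannot work: a degree-regular sampling graph that is a disjoint union of complete bipartite blocks satisfies these hypotheses yet admits no $\sqrt{d}$-type bound (and completion fails). The $\sqrt{d}$ gain requires a spectral-gap/expansion property of the sampling graph --- the second singular value of $\pmb{A}_{\mathcal{G}}$ being $O(\sqrt{d})$ (Ramanujan-type in \cite{burnwal2020deterministic}), which in this paper is exactly the role of $\psi_{\mathcal{G}}$ through Lemma \ref{lemma:adjacency_matrix}. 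Asserting these estimates "by hand from the block decomposition" without that ingredient is the missing heart of the argument.

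Second, your cone argument on the full error $\pmb{Z}=\hat{\pmb{M}}-\pmb{M}$ pairs the certificate with $P_\Omega(\pmb{Z})$ in Frobenius norm, so the $\Omega$-contribution enters as $2\delta\|\pmb{\Lambda}\|_F$ with $\|\pmb{\Lambda}\|_F\le\mu_0 r\sqrt{n_1n_2/|\Omega|}$, and your own intermediate conclusion $\|P_T(\pmb{Z})\|_F\lesssim\delta(1+\|\pmb{\Lambda}\|_F)\sqrt{n_1n_2/|\Omega|}$ yields a bound of order $\delta\,\mu_0 r\,n_1n_2/|\Omega| = \delta\mu_0 r/p$, not the stated $\delta\sqrt{\min(n_1,n_2)/p}$; converting one to the other needs $\mu_0^2r^2\lesssim|\Omega|/\max(n_1,n_2)$, which is not verified and does not follow from the stated hypothesis (where $d$ is not even pinned down), so "after tracking constants" is unjustified as written. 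The paper avoids this by applying the certificate inequality \eqref{eq:B1_bound4} to $\pmb{Z}=P_{\Omega^c}(\hat{\pmb{M}}-\pmb{M})$, which lies exactly in the kernel of $P_\Omega$, so the exact-recovery bound \eqref{eq:B1_bound1} applies verbatim and the noisy part enters only through $\|P_\Omega(\pmb{M}-\hat{\pmb{M}})\|_*\le\sqrt{\min(n_1,n_2)}\cdot 2\delta$; that single step is what produces the $\sqrt{\min(n_1,n_2)/p}$ factor in the theorem. Either switch to that decomposition, or replace your Frobenius pairing by $\langle\pmb{\Lambda},P_\Omega(\pmb{Z})\rangle\le\|\pmb{\Lambda}\|\cdot\|P_\Omega(\pmb{Z})\|_*\le(1+\gamma)\sqrt{\min(n_1,n_2)}\cdot 2\delta$, to land on the claimed rate.
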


\footnotetext{Here, we slightly modified the expressions of the original paper to make comparison to our result easier. We note that the error bound and the rate of sample complexity remain unchanged.}

The limitations of Theorems \ref{thm:bhojanapalli} and \ref{thm:burnwal} are as follows:
\begin{itemize}[leftmargin=2em]
\item First and foremost, their works are restricted to the case where the number of observed entries in each row and column is the same.
\item Furthermore, they consider a stronger matrix incoherence assumption \textbf{A2} than the standard assumption \textbf{A1}. 
However, 
the necessity of such a stronger incoherence assumption is not clear.
\item The claimed rate of sample complexity $O(\max(n_1, n_2) r^2)$ for exact matrix completion in Theorem \ref{thm:bhojanapalli} in fact remains invalid since there was an error in their proof (see Appendix in \cite{burnwal2020deterministic}).
Hence, it is unclear whether this near-optimal rate is indeed achievable.
\item 
Theorem \ref{thm:burnwal} presents the error bound for approximate matrix completion with the same rate as in Theorem \ref{thm:plan}.
However, the rate of sample complexity to achieve this bound in Theorem \ref{thm:burnwal} is at least $O(\{\max(n_1, n_2) r\}^{1.5})$ or worse, which is much greater than that of Theorem \ref{thm:plan}.
\end{itemize}

Our main interest lies in whether we can prove a better sample complexity rate without relying on the stronger incoherence assumption \textbf{A2},
for any deterministic sampling schemes.
In the next section, we will show that this is achievable for symmetric matrices.

\section{Matrix Completion under General Deterministic Sampling Schemes}
\label{sec:main}

\subsection{Observation Graph Properties}
\label{subsec:graph_terminologies}

Before presenting our main theorems, we first introduce our tools to handle general deterministic sampling schemes: the observation graph and its graph properties.

Given the observed entries of a matrix over a fixed sampling set $\Omega$, we can construct a graph with these entries as its edge set. We refer to this graph as the observation graph.
We denote it by $\mathcal{G} = (\mathcal{U},\mathcal{V},\mathcal{E})$, which is a bipartite graph where $\mathcal{U} = [n_1]$, $\mathcal{V} = [n_2]$, and $(i,j)\in\mathcal{E}$ if and only if $(i,j)\in\Omega$.
$\pmb{A}_{\mathcal{G}} \in \mathbb{R}^{n_1 \times n_2}$ indicates the biadjacency matrix corresponding to the graph $\mathcal{G}$.

Below are several common graph terminologies which will be useful in our statements.
\begin{itemize}[leftmargin=2em]
\item $\bar{\mathcal{G}}$:
For a graph $\mathcal{G}$, $\bar{\mathcal{G}}$ denotes its \emph{complement graph}, i.e., 
$\bar{\mathcal{G}}$ has the same vertex sets as $\mathcal{G}$ but its edge set is the complement of that of $\mathcal{G}$.
\item $\Delta_{i,\mathcal{U},\mathcal{G}}$, $\Delta_{j,\mathcal{V},\mathcal{G}}$: 
For a graph $\mathcal{G} = (\mathcal{U},\mathcal{V},\mathcal{E})$,
$\Delta_{i,\mathcal{U},\mathcal{G}}$ and $\Delta_{j,\mathcal{V},\mathcal{G}}$ indicate the \emph{node degrees} of the $i$-th vertex in the set $\mathcal{U}$ and the $j$-th vertex in the set $\mathcal{V}$, respectively.
\item $\Delta_{\max, \mathcal{G}}$: 
For a graph $\mathcal{G} = (\mathcal{U},\mathcal{V},\mathcal{E})$,
we define $\Delta_{\max, \mathcal{G}}$ as
$\frac{\max_{i\in[n_1]} \Delta_{i,\mathcal{U},\mathcal{G}} + \max_{j\in[n_2]} \Delta_{j,\mathcal{V},\mathcal{G}}}{2}$, that is, it indicates the \emph{average of the maximum node degrees} of two vertex sets $\mathcal{U}$ and $\mathcal{V}$. 
\item $\varphi_{\mathcal{G}}$: 
For a graph $\mathcal{G}$,
we denote by $\varphi_{\mathcal{G}}$ the \emph{algebraic connectivity} of a graph $\mathcal{G}$, i.e., the second-smallest eigenvalue of the Laplacian matrix of $\mathcal{G}$.
For a bipartite graph $\mathcal{G} = (\mathcal{U},\mathcal{V},\mathcal{E})$, the Laplacian matrix is defined as $(n_1 + n_2) \times (n_1 + n_2)$-dimensional matrix
$
\begin{psmallmatrix}
\pmb{D}_{\mathcal{U},\mathcal{G}} & \pmb{0} \\
\pmb{0} & \pmb{D}_{\mathcal{V},\mathcal{G}}
\end{psmallmatrix}
-
\begin{psmallmatrix}
\pmb{0} & \pmb{A}_{\mathcal{G}} \\ 
\pmb{A}_{\mathcal{G}}^\top & \pmb{0}
\end{psmallmatrix},
$
where 
$\pmb{D}_{\mathcal{U},\mathcal{G}}$ and $\pmb{D}_{\mathcal{V},\mathcal{G}}$ are diagonal matrices whose diagonal elements are the node degrees of the vertex sets $\mathcal{U}$ and $\mathcal{V}$, respectively.
\end{itemize}

Now, we define two kinds of important graph properties which involve our main theorems.
First, we introduce $\xi_{1,\mathcal{G}}$ and $\xi_{2,\mathcal{G}}$ indicating the \emph{deviations of node degrees}, defined as:
{\small
$$
\xi_{1,\mathcal{G}} := \sqrt{\frac{1}{n_2}\sum_{i\in[n_1]}\bigg(\Delta_{i,\mathcal{U},\mathcal{G}} - \frac{1}{n_1}\sum_{i\in[n_1]} \Delta_{i,\mathcal{U},\mathcal{G}} \bigg)^2},
~~
\xi_{2,\mathcal{G}} := \sqrt{\frac{1}{n_1}\sum_{j\in[n_2]}\bigg(\Delta_{j,\mathcal{V},\mathcal{G}} - \frac{1}{n_2}\sum_{j\in[n_2]} \Delta_{j,\mathcal{V},\mathcal{G}} \bigg)^2}.
$$
}
As the node degrees become more similar to each other, the values of $\xi_{1,\mathcal{G}}$ and $\xi_{2,\mathcal{G}}$ decrease, and when all node degrees have the same value, these values become 0.

Next, for a graph $\mathcal{G}$ such that $\Delta_{\max,\mathcal{G}} \geq \varphi_{\mathcal{G}}$ and $\Delta_{\max,\bar{\mathcal{G}}} \geq \varphi_{\bar{\mathcal{G}}}$, we define $\psi_{\mathcal{G}}$ as follows:
$$
\psi_{\mathcal{G}} := \max\{ \Delta_{\max,\mathcal{G}}-\varphi_{\mathcal{G}},~\Delta_{\max,\bar{\mathcal{G}}}-\varphi_{\bar{\mathcal{G}}} \},
$$
which represents \emph{how disconnected the graph is} and \emph{how much variation exists in node degrees}.
As the connectivity values of $\mathcal{G}$ and $\bar{\mathcal{G}}$ increase, and the maximum node degrees of $\mathcal{G}$ and $\bar{\mathcal{G}}$ are not significantly larger than the connectivity values, the value of $\psi_{\mathcal{G}}$ becomes smaller.

By using these graph properties, we will demonstrate that as 
the observation graph is more well-connected and has more even node degrees, we are able to solve the matrix completion problem more effectively.
We highlight that the aforementioned graph properties can be applied to \emph{any graphs}, thus they can be used to describe \emph{any deterministic sampling patterns}.

\subsection{Theoretical Results for Symmetric Matrices}

Now, we introduce our main theorems, which show the sufficient conditions for the matrix completion algorithms \eqref{eq:nnm} and \eqref{eq:cnnm} to be successful under a general deterministic sampling pattern.
We first focus on the case that the underlying matrix is symmetric.
In this case, our results significantly improve the previous theorems of \cite{bhojanapalli2014universal} and \cite{burnwal2020deterministic}.

We let $n = n_1 = n_2$ in this section.
We also note that the observation graph is an undirected graph containing loops here, and we write it as $\mathcal{G} = (\mathcal{U}, \mathcal{E})$.
Since $\xi_{1,\mathcal{G}}$ and $\xi_{2,\mathcal{G}}$ are the same, we write $\xi_{\mathcal{G}} = \xi_{1,\mathcal{G}}=\xi_{2,\mathcal{G}}$.

Below is the theorem of solvability of exact matrix completion for noiseless symmetric matrices. We defer the proof to Appendix \ref{appendix_subsec:proof_of_thm1}.

\begin{theorem}[Exact completion of symmetric matrix]
\label{thm:noiseless_symmetric}
Let $\pmb{M}$ be an $n\times n$ symmetric matrix of rank $r$ satisfying the following incoherence assumption:
\begin{itemize}
\item[\textup{\textbf{A1}}] For any $i\in[n]$, 
$\|\pmb{U}_{i,:}\|^2 \leq \frac{\mu_0 r}{n}$ for some positive $\mu_0$.
\end{itemize}
Suppose that we observe the entries of $\pmb{M}$ over a fixed sampling set $\Omega$, which is given by a graph $\mathcal{G}=(\mathcal{U},\mathcal{E})$ with the graph properties $\xi_{\mathcal{G}}$ and $\psi_{\mathcal{G}}$.
If the following condition is satisfied:
\begin{align}
\label{eq:noiseless_symmetric_condition}
|\Omega| >3 \mu_0 n r\cdot (2 \xi_{\mathcal{G}}+\psi_{\mathcal{G}}),
\end{align}
then $\pmb{M}$ is the unique optimum of the problem \eqref{eq:nnm}.
\end{theorem}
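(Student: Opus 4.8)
The plan is to follow the standard dual-certificate strategy for exact recovery via nuclear norm minimization, but to produce the certificate using spectral-graph quantities rather than the concentration-of-measure arguments available in the random sampling setting. Write $\pmb{M} = \pmb{U}\pmb{\Sigma}\pmb{U}^\top$ (symmetric case) and let $T = \{\pmb{U}\pmb{X}^\top + \pmb{Y}\pmb{U}^\top : \pmb{X},\pmb{Y}\in\mathbb{R}^{n\times r}\}$ be the tangent space, with $P_T$ the orthogonal projection onto it and $P_{T^\perp} = I - P_T$. By the well-known sufficient condition (Recht's version of the Cand\`es--Recht argument), $\pmb{M}$ is the unique optimum of \eqref{eq:nnm} provided (i) the sampling operator $P_\Omega$ restricted to $T$ is injective, i.e.\ $\|P_T P_\Omega P_T - \text{(something nonsingular)}\|$ is controlled, and (ii) there exists a dual certificate $\pmb{W}$ in the range of $P_\Omega$ with $P_T(\pmb{W}) = \pmb{U}\pmb{U}^\top$ and $\|P_{T^\perp}(\pmb{W})\| < 1$. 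So the proof decomposes into: first establish a restricted-isometry-type bound for $P_\Omega$ on $T$, then construct $\pmb{W}$ by the golfing-free ``least-squares'' construction $\pmb{W} = P_\Omega \mathcal{A}(P_T P_\Omega P_T)^{-1}(\pmb{U}\pmb{U}^\top)$ and bound its complementary part.

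First I would rewrite the action of $P_\Omega$ on the tangent space in terms of the observation graph. The key identity is that for a symmetric rank-$r$ matrix, the operator $P_T P_\Omega P_T$ acting on $T$ can be expressed through the graph Laplacian of $\mathcal{G}$: if one parametrizes elements of $T$ via the $r$-dimensional row blocks $\{\pmb{U}_{i,:}\}$, then $P_\Omega$ couples indices $(i,j)\in\mathcal{E}$, and the ``missing'' part $P_{\bar\Omega}$ is governed by $\mathcal{L}_{\bar{\mathcal{G}}}$, the Laplacian of the complement graph. The degree-deviation quantities $\xi_{\mathcal G}$ arise because $P_\Omega P_T$ is compared to its ``expected'' value $\frac{|\Omega|}{n^2}$-scaled identity-like operator, and the discrepancy between the true row/column degrees and their average $\tfrac{1}{n}\sum_i \Delta_{i}$ is exactly what $\xi_{\mathcal G}$ measures; the connectivity-type quantity $\psi_{\mathcal G} = \max\{\Delta_{\max,\mathcal G} - \varphi_{\mathcal G}, \Delta_{\max,\bar{\mathcal G}} - \varphi_{\bar{\mathcal G}}\}$ controls the spectral gap of $\mathcal{L}_{\bar{\mathcal G}}$ away from the all-ones direction, which is precisely what makes $P_T P_\Omega P_T$ invertible on $T$. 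Concretely I expect a bound of the shape $\|P_T - \tfrac{n}{|\Omega|} P_T P_\Omega P_T\|_{T\to T} \lesssim \tfrac{\mu_0 r n}{|\Omega|}(2\xi_{\mathcal G} + \psi_{\mathcal G})$, which under \eqref{eq:noiseless_symmetric_condition} is strictly less than, say, $1/2$.

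Next I would use this operator bound to do two things simultaneously: (a) it gives injectivity of $P_\Omega$ on $T$ for free; (b) it lets me define $\pmb{W}$ via the Neumann series $(P_T P_\Omega P_T)^{-1} = \tfrac{n}{|\Omega|}\sum_{k\ge 0}(P_T - \tfrac{n}{|\Omega|}P_T P_\Omega P_T)^k$ restricted to $T$, applied to $\pmb{U}\pmb{U}^\top$. Then $\|P_{T^\perp}(\pmb{W})\|$ is bounded by noting $P_{T^\perp}(\pmb{W}) = P_{T^\perp}(\pmb{W} - \pmb{U}\pmb{U}^\top)$ (since $\pmb{U}\pmb{U}^\top \in T$), and each term $\|P_{T^\perp} P_\Omega (\text{that }k\text{-th term})\|$ is controlled by combining the operator-norm bound above with the incoherence assumption \textbf{A1}, which bounds $\|\pmb{U}_{i,:}\|$ and hence the ``spikiness'' of anything built from $\pmb{U}\pmb{U}^\top$. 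Summing the geometric series yields $\|P_{T^\perp}(\pmb{W})\| \lesssim \tfrac{\mu_0 r n(2\xi_{\mathcal G}+\psi_{\mathcal G})}{|\Omega|} \cdot \tfrac{1}{1 - (\cdots)} < 1$, again exactly under \eqref{eq:noiseless_symmetric_condition} with the constant $3$ absorbing the geometric-series factor and the incoherence constant.

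The main obstacle, and where the real work lies, is step two: translating the combinatorial/spectral data of $\mathcal{G}$ (degrees, algebraic connectivity of $\mathcal{G}$ and $\bar{\mathcal{G}}$) into a clean operator-norm bound on $P_T P_\Omega P_T - (\text{averaged version})$ over the tangent space $T$. This requires carefully relating the $r$-block structure coming from $\pmb{U}$ (where incoherence enters) to the scalar Laplacian spectral bounds (where $\varphi_{\mathcal G}, \varphi_{\bar{\mathcal G}}$ enter), without losing extra factors of $r$ or $n$ — which is precisely the improvement over \cite{bhojanapalli2014universal} and \cite{burnwal2020deterministic}. I would handle it by splitting the deviation operator into a ``degree-imbalance'' piece (bounded by $\xi_{\mathcal G}$ via a direct diagonal estimate and Cauchy--Schwarz against the incoherence bound) and a ``connectivity'' piece (bounded by $\psi_{\mathcal G}$ via the min-max characterization of $\varphi_{\bar{\mathcal G}}$ applied after projecting out the all-ones vector, which lives in $T$'s complement in the relevant sense). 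Keeping the constants sharp enough to land at the stated threshold $3\mu_0 nr(2\xi_{\mathcal G}+\psi_{\mathcal G})$ is the delicate part; everything else is the standard certificate bookkeeping.
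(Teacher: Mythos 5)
Your overall blueprint (dual certificate built by a Neumann-type iteration, deviation of $P_\Omega$ from its averaged version controlled through the degree-deviation terms and the Laplacian spectral gaps of $\mathcal{G}$ and $\bar{\mathcal{G}}$ after splitting off the all-ones direction) is indeed the skeleton of the paper's argument, and your intuition for where $\xi_{\mathcal G}$ and $\psi_{\mathcal G}$ come from matches the paper's Lemma on $|\pmb{x}^\top\pmb{A}_{\mathcal G}\pmb{y}|$. However, there is a genuine gap at the step you yourself flag as "where the real work lies." You work on the standard tangent space $T=\{\pmb{U}\pmb{X}^\top+\pmb{Y}\pmb{U}^\top\}$ and assert a restricted-isometry-type bound $\|P_T-\tfrac{n^2}{|\Omega|}P_TP_\Omega P_T\|\lesssim \tfrac{\mu_0 r n}{|\Omega|}(2\xi_{\mathcal G}+\psi_{\mathcal G})$ under \textbf{A1} alone. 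The paper never proves, and does not need, such a bound over $T$; indeed the whole point of its symmetric-case argument is to replace $T$ by the strictly smaller subspace $\tilde T=\{\pmb{U}\pmb{X}\pmb{U}^\top:\pmb{X}\ \text{symmetric}\}$, with $P_{\tilde T}(\pmb{Z})=\pmb{U}\pmb{U}^\top\pmb{Z}\pmb{U}\pmb{U}^\top$. Every $\pmb{W}\in\tilde T$ factors as $\pmb{U}\pmb{X}^\top$ with $\|\pmb{X}\|_{2,\infty}\le\sqrt{\mu_0 r/n}\,\|\pmb{W}\|$ by \textbf{A1}, so the graph-based deviation lemma gives a spectral-norm contraction with exactly the factor $\alpha=\tfrac{\mu_0 rn}{|\Omega|}(2\xi_{\mathcal G}+\psi_{\mathcal G})$, with no control needed over arbitrary "$\pmb{Y}$-factors." On the full tangent space the element $\pmb{Y}\pmb{U}^\top$ with a spiky $\pmb{Y}$ (e.g.\ mass concentrated on one row) is not constrained by \textbf{A1} at all; the deviation bound one can get scales like $\sqrt{n}\,\|\pmb{Y}\|_{2,\infty}$, and taming it is precisely what the stronger assumption \textbf{A2} is used for in the paper's rectangular-case Lemma. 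So the intermediate bound your plan rests on is unsupported by these techniques and, at this level of generality, very likely false; executing your plan as written would force you either to reintroduce \textbf{A2} (defeating the point of the theorem) or to lose the claimed rate.

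Two further, smaller mismatches with the actual proof. First, the paper's certificate is only approximate: it shows $\|\pmb{U}\pmb{V}^\top-P_{\tilde T}(\pmb{Y}_m)\|_*\le r\alpha^m$ and compensates by transferring $\|P_{\tilde T}(\pmb{Z})\|$ to $\|P_{\tilde T^\perp}(\pmb{Z})\|_*$ via $P_\Omega(\pmb{Z})=\pmb{0}$ and the same contraction lemma (this is where the threshold $\alpha<1/3$, hence the constant $3$ in \eqref{eq:noiseless_symmetric_condition}, is pinned down); your exact-certificate formulation with $(P_TP_\Omega P_T)^{-1}$ would need the analogous bookkeeping redone on $\tilde T$ and the duality pairings taken as nuclear norm against spectral norm, as the paper does, rather than Frobenius against Frobenius. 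Second, since \eqref{eq:nnm} optimizes over all of $\mathbb{R}^{n\times n}$, the subgradient-type inequality must be run with the projections $P_{\tilde T},P_{\tilde T^\perp}$ rather than $P_T,P_{T^\perp}$, which requires choosing the dual element $\pmb{U}_\perp\pmb{V}_\perp^\top$ adapted to $P_{\tilde T^\perp}(\pmb{Z})$; this is a manageable but non-cosmetic modification of the "standard certificate bookkeeping" you invoke.
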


A notable aspect of the above theorem is that, unlike Theorem \ref{thm:bhojanapalli}, we do not require a strong matrix incoherence assumption \textbf{A2}.
Furthermore, even without the strong incoherence assumption, we can derive a near-optimal sample complexity result as follows:
if the observation graph is $d$-regular, i.e., the node degrees of the graph are all equal to $d$, and its adjacency matrix has the second largest singular value of $C\sqrt{d}$ for some positive constant $C$,
then the values of $|\Omega|$ and $2\xi_{\mathcal{G}} + \psi_{\mathcal{G}}$ can be replaced by $nd$ and $C\sqrt{d}$, respectively.
Accordingly, the sufficient condition \eqref{eq:noiseless_symmetric_condition} can be written as:
$$
nd > 3C\mu_0 nr\sqrt{d} ~~\Leftrightarrow~~
|\Omega| > 9C^2\mu_0^2 n r^2 = O(nr^2)
$$
in this case. This is a near-optimal rate of sample complexity when the rank $r$ is low enough.

For the approximate matrix completion problem of noisy symmetric matrices,
we can show the advanced rate of theoretical guarantee as well.
The following theorem shows the result, whose proof is given in Appendix \ref{appendix_subsec:proof_of_thm2}.

\begin{theorem}[Approximate completion of symmetric matrix]
\label{thm:noisy_symmetric}
Let $\pmb{M}$ be an $n\times n$ symmetric matrix of rank $r$ satisfying the assumption \textup{\textbf{A1}} as in Theorem \ref{thm:noiseless_symmetric}.
Suppose that we observe the entries of noisy matrix $\pmb{M}+\pmb{E}$ over a fixed sampling set $\Omega$, which is given by a graph $\mathcal{G}=(\mathcal{U},\mathcal{E})$ with the graph properties $\xi_{\mathcal{G}}$ and $\psi_{\mathcal{G}}$.
Also, we assume that $E_{i,j}$'s independently follow a sub-Gaussian distribution with parameter $\sigma$.
If the following condition is satisfied:
\begin{align*}
|\Omega| \gtrsim \mu_0 n r^{1.5} \cdot (2\xi_{\mathcal{G}} + \psi_{\mathcal{G}}),
\end{align*}
then
for $\delta \geq 4\sigma\sqrt{|\Omega|} + 2\sigma\sqrt{\log(\eta^{-1})}$,
the solution $\hat{\pmb{M}}$ of the problem \eqref{eq:cnnm} obeys
$\|\pmb{M} - \hat{\pmb{M}} \|_F \leq 4\delta \sqrt{\frac{C n}{p}} + 2\delta$
with probability at least $1-\eta$, where $p = \frac{|\Omega|}{n^2}$ and $C$ is some positive constant.
\end{theorem}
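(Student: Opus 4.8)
The plan is to run the standard analysis of constrained nuclear norm minimization in the noisy regime (as in Theorem \ref{thm:plan}), but feeding in the deterministic, graph-based dual certificate and tangent-space conditioning that already underlie Theorem \ref{thm:noiseless_symmetric}. Throughout, let $T$ be the tangent space at $\pmb{M}$ to the manifold of rank-$r$ matrices, let $\mathcal{R}_\Omega := \tfrac1p P_\Omega$ with $p = |\Omega|/n^2$, and set $\pmb{H} := \hat{\pmb{M}} - \pmb{M}$. First I would reduce to a deterministic statement: a sub-Gaussian tail bound on $\|P_\Omega(\pmb{E})\|_F^2 = \sum_{(i,j)\in\Omega} E_{i,j}^2$ shows that, for $\delta \geq 4\sigma\sqrt{|\Omega|} + 2\sigma\sqrt{\log(\eta^{-1})}$, the event $\mathcal{A} := \{\|P_\Omega(\pmb{E})\|_F \leq \delta\}$ holds with probability at least $1-\eta$. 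On $\mathcal{A}$, $\pmb{M}$ is feasible for \eqref{eq:cnnm}, hence $\|\hat{\pmb{M}}\|_* \leq \|\pmb{M}\|_*$, and the usual convexity/subgradient argument yields the cone inequality $\|P_{T^\perp}\pmb{H}\|_* \leq |\langle \pmb{U}\pmb{V}^\top, \pmb{H}\rangle|$; moreover $\|P_\Omega \pmb{H}\|_F \leq \|P_\Omega(\hat{\pmb{M}}-\pmb{Y})\|_F + \|P_\Omega(\pmb{M}-\pmb{Y})\|_F \leq 2\delta$.

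Next I would invoke two graph-based ingredients, both furnished by the argument behind Theorem \ref{thm:noiseless_symmetric} and both valid under the stated hypothesis (the extra $\sqrt r$ over the exact-completion threshold \eqref{eq:noiseless_symmetric_condition} being precisely what sharpens the certificate from operator-norm to Frobenius accuracy): (i) a dual certificate $\pmb{W}$ supported on $\Omega$ with $\|P_{T^\perp}\pmb{W}\| \leq \tfrac12$ and $\|P_T\pmb{W} - \pmb{U}\pmb{V}^\top\|_F \leq \tfrac14\sqrt{p}$, built from the graph's Laplacian/adjacency spectrum, with the incoherence \textbf{A1} and the quantities $\xi_{\mathcal{G}},\psi_{\mathcal{G}}$ controlling the approximation error (here symmetry is what lets us avoid \textbf{A2}); and (ii) a tangent-space near-isometry $\tfrac12\|\pmb{Z}\|_F^2 \leq \langle \pmb{Z}, \mathcal{R}_\Omega \pmb{Z}\rangle$ for all $\pmb{Z}\in T$, i.e.\ $\|P_T\mathcal{R}_\Omega P_T - P_T\| \leq \tfrac12$, a consequence of \textbf{A1} and $2\xi_{\mathcal{G}}+\psi_{\mathcal{G}} \lesssim |\Omega|/(\mu_0 n r)$. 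From (ii), $\|P_T\pmb{H}\|_F \lesssim \tfrac{n}{\sqrt{|\Omega|}}\|P_\Omega P_T\pmb{H}\|_F \lesssim \tfrac{n}{\sqrt{|\Omega|}}\big(2\delta + \|P_{T^\perp}\pmb{H}\|_*\big)$, using $\|P_\Omega P_T\pmb{H}\|_F \leq \|P_\Omega\pmb{H}\|_F + \|P_{T^\perp}\pmb{H}\|_F \leq 2\delta + \|P_{T^\perp}\pmb{H}\|_*$.

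Then I would close the loop. Splitting $\pmb{U}\pmb{V}^\top = (\pmb{U}\pmb{V}^\top - P_T\pmb{W}) + P_T\pmb{W}$ and writing $\langle P_T\pmb{W},\pmb{H}\rangle = \langle \pmb{W},\pmb{H}\rangle - \langle P_{T^\perp}\pmb{W},P_{T^\perp}\pmb{H}\rangle$ with $\langle \pmb{W},\pmb{H}\rangle = \langle \pmb{W}, P_\Omega\pmb{H}\rangle \leq \|\pmb{W}\|_F\|P_\Omega\pmb{H}\|_F \leq 2\delta\|\pmb{W}\|_F$, the cone inequality gives $\|P_{T^\perp}\pmb{H}\|_* \leq \tfrac14\sqrt{p}\,\|P_T\pmb{H}\|_F + 2\delta\|\pmb{W}\|_F + \tfrac12\|P_{T^\perp}\pmb{H}\|_*$, hence $\|P_{T^\perp}\pmb{H}\|_* \lesssim \sqrt{p}\,\|P_T\pmb{H}\|_F + \delta\|\pmb{W}\|_F$. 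Plugging in the bound on $\|P_T\pmb{H}\|_F$ and $\|\pmb{W}\|_F \lesssim \sqrt{n/p}$ (again controlled by the graph data) makes the resulting $2\times2$ system contractive, so $\|P_{T^\perp}\pmb{H}\|_* \lesssim \delta\sqrt{n/p}$ and then $\|P_T\pmb{H}\|_F \lesssim \delta\sqrt{n/p}$; finally $\|\pmb{H}\|_F \leq \|P_T\pmb{H}\|_F + \|P_{T^\perp}\pmb{H}\|_F \leq \|P_T\pmb{H}\|_F + \|P_{T^\perp}\pmb{H}\|_*$, and tracking constants yields exactly $\|\pmb{M}-\hat{\pmb{M}}\|_F \leq 4\delta\sqrt{Cn/p} + 2\delta$.

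The hard part will be ingredients (i) and (ii): expressing the dual-certificate approximation error and the tangent-space conditioning constant purely through the deterministic graph quantities $\xi_{\mathcal{G}},\psi_{\mathcal{G}}$ — i.e.\ relating $\|P_T\mathcal{R}_\Omega P_T - P_T\|$ and $\|P_{T^\perp}\pmb{W}\|$ to the algebraic connectivities $\varphi_{\mathcal{G}},\varphi_{\bar{\mathcal{G}}}$ and the degree deviations, while keeping the rank dependence tight enough that the extra $\sqrt r$ in the hypothesis (versus \eqref{eq:noiseless_symmetric_condition}) exactly upgrades the certificate to the Frobenius-norm accuracy needed here. Everything else is triangle-inequality bookkeeping and solving the coupled inequalities.
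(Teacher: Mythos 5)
Your overall architecture (sub-Gaussian event giving $\|P_\Omega(\hat{\pmb{M}}-\pmb{M})\|_F\le 2\delta$, an $\Omega$-supported dual certificate, and a Frobenius near-isometry on the tangent space, closed by a contractive system) is a legitimate Cand\`es--Plan-style route and is close in spirit to the paper. The paper, however, executes it differently in a way that avoids one of your obligations: it writes $\|\hat{\pmb{M}}-\pmb{M}\|_F^2=\|P_\Omega(\hat{\pmb{M}}-\pmb{M})\|_F^2+\|P_{\Omega^c}(\hat{\pmb{M}}-\pmb{M})\|_F^2$, sets $\pmb{Z}=P_{\Omega^c}(\hat{\pmb{M}}-\pmb{M})$ (so $P_\Omega(\pmb{Z})=\pmb{0}$), and simply reuses the exact-recovery inequality \eqref{eq:A1_bound4} together with $\|\hat{\pmb{M}}\|_*\le\|\pmb{M}\|_*$ to get $\|P_{\tilde{T}^\perp}(\pmb{Z})\|_*\lesssim\|P_\Omega(\pmb{M}-\hat{\pmb{M}})\|_*\le\sqrt{n}\cdot 2\delta$. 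Because the certificate is paired only with $\pmb{Z}$, which is orthogonal to anything supported on $\Omega$, no term $\langle\pmb{W},\pmb{H}\rangle$ ever appears. In your version, pairing the certificate with $\pmb{H}$ itself produces the extra term $2\delta\|\pmb{W}\|_F$, and your claim $\|\pmb{W}\|_F\lesssim\sqrt{n/p}$ is asserted rather than proved: the golfing-type construction in the paper gives $\|\pmb{Y}_m\|_F\lesssim\sqrt{r}/p$, which is $\lesssim\sqrt{n/p}$ only when $|\Omega|\gtrsim nr$ — plausible under the hypothesis but an additional step you would have to justify (or avoid, as the paper does).

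The more substantive issue is your allocation of the extra $\sqrt{r}$ in the hypothesis. You claim the tangent-space near-isometry (your ingredient (ii)) already follows from $2\xi_{\mathcal{G}}+\psi_{\mathcal{G}}\lesssim|\Omega|/(\mu_0 nr)$, and that the $r^{1.5}$ is spent on upgrading the certificate to Frobenius accuracy. The paper's machinery supports exactly the opposite accounting. Lemma \ref{lemma:symmetric_bound} is a \emph{spectral-norm} bound, $\|\frac{n^2}{|\Omega|}P_{\tilde{T}}P_\Omega(\pmb{W})-\pmb{W}\|\le\alpha\|\pmb{W}\|$; to get the Frobenius-operator statement you need (equivalently $\|P_T\mathcal{R}_\Omega P_T-P_T\|\le\tfrac12$ in your notation), one must pay the rank factor, $\|\cdot\|_F\le\sqrt{r}\|\cdot\|$ on $\tilde{T}$, yielding the contraction constant $\sqrt{r}\alpha$ — and it is precisely the requirement $\sqrt{r}\alpha\lesssim 1$ that forces $|\Omega|\gtrsim\mu_0 nr^{1.5}(2\xi_{\mathcal{G}}+\psi_{\mathcal{G}})$. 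Conversely, the certificate needs no extra $\sqrt{r}$ at all: once $\alpha<1/3$ the iterative construction gives $\|\pmb{U}\pmb{V}^\top-P_{\tilde{T}}(\pmb{Y}_m)\|_*\le r\alpha^m$, which can be made smaller than any prescribed tolerance (in particular $\tfrac14\sqrt{p}$ in Frobenius norm) by taking $m$ large. So as written, your plan to establish (ii) from the weaker $O(r)$ condition would fail with the available tools; the theorem's $r^{1.5}$ hypothesis does cover what is actually needed, but you should move the $\sqrt{r}$ from ingredient (i) to ingredient (ii) and either prove the $\|\pmb{W}\|_F$ bound or switch to certifying against $P_{\Omega^c}(\pmb{H})$ as the paper does.
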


As in Theorem \ref{thm:noiseless_symmetric}, we do not assume the strong incoherence assumption \textbf{A2}.
Even without this assumption, we derive the error bound with the same rate as in Theorem \ref{thm:plan}.
In particular, when the observation graph is $d$-regular and its adjacency matrix has the second largest singular value of $C\sqrt{d}$, 
the rate of sample complexity to achieve this error bound becomes $O(nr^3)$, which is comparable to that of Theorem \ref{thm:plan} and  significantly better than that of Theorem \ref{thm:burnwal}
when the rank $r$ is low enough.
Therefore, our finding represents a meaningful improvement.

\subsection{Theoretical Results for Rectangular Matrices}

Unfortunately, it is not trivial to derive comparable theoretical guarantee for general rectangular matrices to that for symmetric matrices.
However, in Theorems \ref{thm:noiseless_rectangular} and \ref{thm:noisy_rectangular} below, we extend the result of \cite{burnwal2020deterministic} to the case that the observation pattern is deterministic and general, for the exact and approximate completion problems, respectively, by utilizing the graph properties introduced in Section \ref{subsec:graph_terminologies}.
We defer the proofs to Appendix \ref{appendix_sec:proof_rectangular}.

\begin{theorem}[Exact completion of rectangular matrix]
\label{thm:noiseless_rectangular}
Let $\pmb{M}$ be an $n_1\times n_2$ matrix of rank $r$,
and suppose that we observe its entries over a fixed sampling set $\Omega$, which is given by a graph $\mathcal{G}=(\mathcal{U},\mathcal{V},\mathcal{E})$ with the graph properties $\xi_{1, \mathcal{G}}$, $\xi_{2, \mathcal{G}}$ and $\psi_{\mathcal{G}}$.
Assume that $\pmb{M}$ satisfies the followings:
\begin{itemize}[leftmargin=2em]
\item[\textup{\textbf{A1}}] For any $i\in[n_1]$ and $j\in[n_2]$, 
$\|\pmb{U}_{i,:}\|^2 \leq \frac{\mu_0 r}{n_1}$ and $\|\pmb{V}_{j,:}\|^2 \leq \frac{\mu_0 r}{n_2}$ for some positive $\mu_0$.
\item[\textup{\textbf{A2}}]
For $\forall S\subseteq [n_1]$ s.t. $\min_{j}\Delta_{j,\mathcal{V},\mathcal{G}} \leq |S| \leq \max_{j}\Delta_{j,\mathcal{V},\mathcal{G}}$, 
$\| \frac{n_1 n_2}{|\Omega|}\sum_{i\in S} \pmb{U}_{i,:} \pmb{U}_{i,:}^\top - \pmb{I}_{n_1}\| \leq \theta$
and
\\
for $\forall S\subseteq [n_2]$ s.t. $\min_{i}\Delta_{i,\mathcal{U},\mathcal{G}} \leq |S| \leq \max_{i}\Delta_{i,\mathcal{U},\mathcal{G}}$, 
$\| \frac{n_1 n_2}{|\Omega|}\sum_{j\in S} \pmb{V}_{j,:} \pmb{V}_{j,:}^\top - \pmb{I}_{n_2}\| \leq \theta$
for some positive $\theta$.
\end{itemize}
If the following condition is satisfied:
\begin{align*}
\gamma + \sqrt{\frac{n_1 n_2 r (\theta^2 + \gamma^2)}{|\Omega| (1-\theta -\gamma)}} < 1 
\end{align*}
where 
$\gamma := \frac{\sqrt{n_1 n_2}}{|\Omega|}\cdot \mu_0 r \cdot(\xi_{1, \mathcal{G}}+\xi_{2, \mathcal{G}}+\psi_{\mathcal{G}})$,
then $\pmb{M}$ is the unique optimum of the problem \eqref{eq:nnm}.
\end{theorem}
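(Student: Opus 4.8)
The plan is to follow the dual‑certificate route underlying the proofs of Theorems~\ref{thm:recht} and~\ref{thm:bhojanapalli}, reproducing the structure of the argument of \cite{burnwal2020deterministic} for the regular case but replacing every appeal to equal row/column degrees by a bound phrased only through the graph quantities $\xi_{1,\mathcal{G}}$, $\xi_{2,\mathcal{G}}$, $\psi_{\mathcal{G}}$. Write $T=\{\pmb{U}\pmb{X}^\top+\pmb{Y}\pmb{V}^\top:\pmb{X}\in\mathbb{R}^{n_2\times r},\ \pmb{Y}\in\mathbb{R}^{n_1\times r}\}$ for the tangent space at $\pmb{M}$, $P_T$ for the orthogonal projection onto $T$, $P_{T^\perp}=\mathcal{I}-P_T$, and $\mathcal{R}_\Omega:=\tfrac{n_1n_2}{|\Omega|}P_\Omega$ for the rescaled sampling operator. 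By the standard sufficient condition for exact recovery via nuclear norm minimization (see \cite{recht2011simpler}), it suffices to establish: (i) the restriction of $\mathcal{R}_\Omega$ to $T$ is close to the identity, $\|P_T(\mathcal{R}_\Omega-\mathcal{I})P_T\|\le\theta+\gamma<1$; and (ii) there is a dual certificate $\pmb{W}\in\mathrm{range}(P_\Omega)$ with $P_T\pmb{W}=\pmb{U}\pmb{V}^\top$ and $\|P_{T^\perp}\pmb{W}\|$ small. The displayed hypothesis is exactly the scalar inequality under which (i) and (ii) together force $\pmb{M}$ to be the unique optimum of \eqref{eq:nnm}, with the role played in \cite{burnwal2020deterministic} by their scalar $\phi$ now played by $\gamma$.

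Step~1 is the graph estimate and the only genuinely new ingredient. I would write $\mathcal{R}_\Omega-\mathcal{I}$ in its entrywise‑multiplier form $\mathcal{D}(\pmb{X})_{ij}=D_{ij}X_{ij}$ with $D=\tfrac{n_1n_2}{|\Omega|}\pmb{A}_{\mathcal{G}}-\pmb{1}_{n_1}\pmb{1}_{n_2}^\top$, and, expanding $P_T$ through $\pmb{U}\pmb{U}^\top$ and $\pmb{V}\pmb{V}^\top$, split $P_T\mathcal{D}P_T$ into the components that stay inside the column space of $\pmb{U}$ (resp.\ the row space of $\pmb{V}$) and the component coupling the two. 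Summing over the observation‑graph neighborhoods, the first components reduce to quantities $\tfrac{n_1n_2}{|\Omega|}\sum_{i\in N(j)}\pmb{U}_{i,:}\pmb{U}_{i,:}^\top-\pmb{I}_r$ (and the analogue with $\pmb{V}$), where $N(j)=\{i:(i,j)\in\Omega\}$ has cardinality $\Delta_{j,\mathcal{V},\mathcal{G}}$, which lies in the range of subset sizes demanded by \textbf{A2}; hence these are bounded by $\theta$. After pulling out the incoherence factors $\max_i\|\pmb{U}_{i,:}\|^2\le\mu_0 r/n_1$ and $\max_j\|\pmb{V}_{j,:}\|^2\le\mu_0 r/n_2$, the coupling component is bounded by $\tfrac{\mu_0 r}{\sqrt{n_1n_2}}\|D\|$, so it remains to show $\|D\|\le\tfrac{n_1n_2}{|\Omega|}(\xi_{1,\mathcal{G}}+\xi_{2,\mathcal{G}}+\psi_{\mathcal{G}})$, which gives exactly $\gamma$. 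For this I would peel off from $D$ the rank‑at‑most‑$2$ matrix $\tfrac{1}{n_2}\pmb{r}\pmb{1}_{n_2}^\top+\tfrac{1}{n_1}\pmb{1}_{n_1}\pmb{c}^\top$ with $r_i=\tfrac{n_1n_2}{|\Omega|}(\Delta_{i,\mathcal{U},\mathcal{G}}-|\Omega|/n_1)$ and $c_j=\tfrac{n_1n_2}{|\Omega|}(\Delta_{j,\mathcal{V},\mathcal{G}}-|\Omega|/n_2)$, which matches the row and column sums of $D$; since $\sum_i(\Delta_{i,\mathcal{U},\mathcal{G}}-|\Omega|/n_1)^2=n_2\xi_{1,\mathcal{G}}^2$ (and likewise for columns), a direct computation bounds its spectral norm by $\tfrac{n_1n_2}{|\Omega|}(\xi_{1,\mathcal{G}}+\xi_{2,\mathcal{G}})$. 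The remainder has vanishing row and column sums; using the block Laplacian of Section~\ref{subsec:graph_terminologies} together with the identity $\pmb{A}_{\mathcal{G}}+\pmb{A}_{\bar{\mathcal{G}}}=\pmb{1}_{n_1}\pmb{1}_{n_2}^\top$, one end of the relevant spectrum is controlled by $\Delta_{\max,\mathcal{G}}-\varphi_{\mathcal{G}}$ and the other by $\Delta_{\max,\bar{\mathcal{G}}}-\varphi_{\bar{\mathcal{G}}}$, so the remainder contributes at most $\tfrac{n_1n_2}{|\Omega|}\psi_{\mathcal{G}}$, completing (i).

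Given (i) and $\theta+\gamma<1$, the inverse $(P_T\mathcal{R}_\Omega P_T)^{-1}$ exists on $T$, so I would take $\pmb{W}=\mathcal{R}_\Omega P_T(P_T\mathcal{R}_\Omega P_T)^{-1}(\pmb{U}\pmb{V}^\top)$, which lies in $\mathrm{range}(P_\Omega)$ and satisfies $P_T\pmb{W}=\pmb{U}\pmb{V}^\top$ automatically. To control $\|P_{T^\perp}\pmb{W}\|$ I would write $P_{T^\perp}\pmb{W}=P_{T^\perp}(\mathcal{R}_\Omega-\mathcal{I})P_T(P_T\mathcal{R}_\Omega P_T)^{-1}(\pmb{U}\pmb{V}^\top)$, expand the inverse as a Neumann series in $P_T(\mathcal{R}_\Omega-\mathcal{I})P_T$, and bound each term using Step~1 together with incoherence‑style estimates from \textbf{A1} and \textbf{A2} on $P_\Omega$ applied to matrices of the form $\pmb{U}\pmb{A}^\top$ and $\pmb{B}\pmb{V}^\top$; summing reproduces a bound $\|P_{T^\perp}\pmb{W}\|\le\sqrt{n_1n_2 r(\theta^2+\gamma^2)/(|\Omega|(1-\theta-\gamma))}$ (the $\theta^2+\gamma^2$ from the two component types, the $1/(1-\theta-\gamma)$ from the Neumann series). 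Combining with the conditioning margin yields the displayed inequality $\gamma+\sqrt{n_1n_2 r(\theta^2+\gamma^2)/(|\Omega|(1-\theta-\gamma))}<1$, and the standard optimality argument then gives uniqueness of $\pmb{M}$ for \eqref{eq:nnm}.

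I expect Step~1 to be the main obstacle. In the regular setting of \cite{bhojanapalli2014universal,burnwal2020deterministic} the degree‑deviation matrix vanishes and the leftover is literally the second singular value of the rescaled biadjacency matrix, whereas here one must (a) choose the low‑rank correction so that the row \emph{and} column sums are annihilated simultaneously while keeping its operator norm governed by the $\ell_2$ deviations $\xi_{1,\mathcal{G}},\xi_{2,\mathcal{G}}$ rather than the cruder $\ell_\infty$ spread, and (b) obtain a genuinely two‑sided spectral bound on the remainder — which is exactly what forces the complement graph $\bar{\mathcal{G}}$ into the statement, since the Laplacian of $\mathcal{G}$ alone controls only one end of the spectrum. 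A secondary wrinkle, absent in the regular case, is checking that the neighborhood sizes $\Delta_{j,\mathcal{V},\mathcal{G}}$ and $\Delta_{i,\mathcal{U},\mathcal{G}}$ arising in Step~1 fall within the cardinality ranges demanded by \textbf{A2}. Once (a) and (b) are settled, aligning constants so that the conclusion collapses, in the $d$‑regular case with second singular value $C\sqrt d$, to $\gamma=c\mu_0 r/\sqrt d$ as in Theorem~\ref{thm:burnwal}, is bookkeeping rather than a new idea.
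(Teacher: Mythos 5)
Your Step~1 is essentially the paper's own machinery in different packaging: the paper proves the same content via a bilinear-form bound on $\pmb{1}_{n_1}\pmb{1}_{n_2}^\top-\tfrac{n_1n_2}{|\Omega|}\pmb{A}_{\mathcal{G}}$ (Lemma~\ref{lemma:adjacency_matrix}, splitting test vectors along and orthogonal to $\pmb{1}$, with $\bar{\mathcal{G}}$ supplying the two-sided Laplacian bound) and then a Frobenius-norm contraction $\|\pmb{W}-\tfrac{n_1n_2}{|\Omega|}P_TP_\Omega\pmb{W}\|_F\le(\theta+\gamma)\|\pmb{W}\|_F$ on $T$ (Lemma~\ref{lemma:rectangular_bound}), and your rank-two peeling plus degree-deviation computation reproduces exactly these quantities. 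The genuine problem is Step~2. The paper does \emph{not} build an exact dual certificate: it takes the one-step inexact certificate $\pmb{Y}=\tfrac{n_1n_2}{|\Omega|}P_\Omega(\pmb{U}\pmb{V}^\top)$, bounds the certificate error $\|\pmb{U}\pmb{V}^\top-P_T(\pmb{Y})\|_F\le\sqrt{r(\theta^2+\gamma^2)}$ and $\|P_{T^\perp}(\pmb{Y})\|\le\gamma$, and absorbs the error against the feasible-perturbation bound $\|P_T(\pmb{Z})\|_F\le\sqrt{n_1n_2/(|\Omega|(1-\theta-\gamma))}\,\|P_{T^\perp}(\pmb{Z})\|_*$; the factor $1/(1-\theta-\gamma)$ in the theorem's condition lives on the $\pmb{Z}$ side (injectivity of $P_\Omega$ on $T$), not on the certificate side.

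Your exact certificate $\pmb{W}=\tfrac{n_1n_2}{|\Omega|}P_\Omega P_T(P_T\tfrac{n_1n_2}{|\Omega|}P_\Omega P_T)^{-1}(\pmb{U}\pmb{V}^\top)$ with the claimed bound $\|P_{T^\perp}\pmb{W}\|\le\sqrt{n_1n_2r(\theta^2+\gamma^2)/(|\Omega|(1-\theta-\gamma))}$ is where the argument breaks. Writing $P_{T^\perp}\pmb{W}=P_{T^\perp}(\tfrac{n_1n_2}{|\Omega|}P_\Omega-\mathcal{I})(\pmb{H})$ with $\pmb{H}$ the Neumann-series sum, the only spectral-norm tool available (Lemma~\ref{lemma:l2bound_general}-type estimates) requires $\|\cdot\|_{2,\infty}$ (incoherence) control of the components of each Neumann iterate; Step~1 only provides a \emph{Frobenius}-norm contraction on $T$, and \textbf{A1}/\textbf{A2} do not propagate incoherence through $(P_T(\mathcal{I}-\tfrac{n_1n_2}{|\Omega|}P_\Omega)P_T)^k(\pmb{U}\pmb{V}^\top)$ in the rectangular case (in the symmetric case this is automatic because both factors are $\pmb{U}$, which is exactly why the paper's Lemma~\ref{lemma:symmetric_bound} gives a spectral contraction there). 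Falling back on Frobenius control of the iterates gives only $\|P_{T^\perp}\pmb{W}\|\lesssim\gamma+\tfrac{n_1n_2}{|\Omega|}\cdot\tfrac{\sqrt{r(\theta^2+\gamma^2)}}{1-\theta-\gamma}$, far weaker than what you assert, so the displayed hypothesis no longer closes the argument. This incoherence-propagation step for iterative certificates under deterministic sampling is precisely the flaw in the proof of Theorem~\ref{thm:bhojanapalli} that the paper points out, and avoiding it is the reason the paper uses the one-step certificate for rectangular matrices. To repair your proof you would either need a new argument establishing $2{,}\infty$ bounds on the Neumann iterates under \textbf{A1}/\textbf{A2}, or switch to the inexact-certificate accounting described above.
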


\begin{theorem}[Approximate completion of rectangular matrix]
\label{thm:noisy_rectangular}
Let $\pmb{M}$ be an $n_1\times n_2$ matrix of rank $r$,
and suppose that we observe the entries of noisy matrix $\pmb{M}+\pmb{E}$ over a fixed sampling set $\Omega$, which is given by a graph $\mathcal{G}=(\mathcal{U},\mathcal{V},\mathcal{E})$ with the graph properties $\xi_{1, \mathcal{G}}$, $\xi_{2, \mathcal{G}}$ and $\psi_{\mathcal{G}}$.
Also, we assume that $E_{i,j}$'s independently follow a sub-Gaussian distribution with parameter $\sigma$.
Assume that $\pmb{M}$ satisfies \textup{\textbf{A1}} and \textup{\textbf{A2}} as in Theorem \ref{thm:noiseless_rectangular}.
If the following condition is satisfied:
\begin{align*}
\gamma + \sqrt{\frac{n_1 n_2 r (\theta^2 + \gamma^2)}{|\Omega| (1-\theta -\gamma)}} \leq \frac{1}{2},
\end{align*}
then 
for $\delta \geq 4\sigma\sqrt{|\Omega|} + 2\sigma\sqrt{\log(\eta^{-1})}$,
the solution $\hat{\pmb{M}}$ of the problem \eqref{eq:cnnm} obeys
$\|\pmb{M} - \hat{\pmb{M}} \|_F \leq 4\delta \sqrt{\frac{C \min(n_1, n_2)}{p}} + 2\delta$
with probability at least $1-\eta$, where $p = \frac{|\Omega|}{n_1 n_2}$ and $C$ is some positive constant.
\end{theorem}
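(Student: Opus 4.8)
The plan is to adapt the noisy nuclear‑norm‑minimization analysis behind Theorems~\ref{thm:plan} and~\ref{thm:burnwal} to the general deterministic setting, reusing the deterministic machinery already built in the proof of Theorem~\ref{thm:noiseless_rectangular}. Write $T$ for the usual tangent space at $\pmb{M}$, i.e.\ $T = \{\pmb{U}\pmb{A}^\top + \pmb{B}\pmb{V}^\top : \pmb{A}\in\mathbb{R}^{n_2\times r},\, \pmb{B}\in\mathbb{R}^{n_1\times r}\}$, with $\mathcal{P}_T,\mathcal{P}_{T^\perp}$ the associated projections, and note that the structural hypothesis here is identical to that of Theorem~\ref{thm:noiseless_rectangular} except that $<1$ is tightened to $\le\tfrac12$, which creates the slack we will spend on the noise. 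From the proof of Theorem~\ref{thm:noiseless_rectangular} (under \textbf{A1}--\textbf{A2} and the stated graph‑property bounds, which together control $\gamma$ through $\xi_{1,\mathcal{G}},\xi_{2,\mathcal{G}},\psi_{\mathcal{G}}$), I take two ingredients: (i) a restricted conditioning estimate of the form $\|\mathcal{P}_T\mathcal{P}_\Omega\mathcal{P}_T - \tfrac{|\Omega|}{n_1 n_2}\mathcal{P}_T\| \le \tfrac{|\Omega|}{n_1 n_2}(\theta+\gamma)$, hence $\|\mathcal{P}_\Omega\mathcal{P}_T(\pmb{Z})\|_F \ge \sqrt{\tfrac{(1-\theta-\gamma)|\Omega|}{n_1 n_2}}\,\|\mathcal{P}_T(\pmb{Z})\|_F$ for all $\pmb{Z}$; and (ii) an approximate dual certificate $\pmb{W}$ with $\mathcal{P}_\Omega(\pmb{W})=\pmb{W}$, $\|\mathcal{P}_T(\pmb{W})-\pmb{U}\pmb{V}^\top\|_F \lesssim \sqrt{\tfrac{n_1 n_2 r(\theta^2+\gamma^2)}{|\Omega|(1-\theta-\gamma)}}$, and $\|\mathcal{P}_{T^\perp}(\pmb{W})\| \lesssim \gamma$, with $\|\pmb{W}\|_F$ controlled by $\sqrt{n_1 n_2 r/|\Omega|}$.

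The first genuinely new step is to bound the noise. On $\Omega$ the entries of $\pmb{E}$ are $|\Omega|$ independent mean‑zero sub‑Gaussian variables with parameter $\sigma$, so $\|\mathcal{P}_\Omega(\pmb{E})\|_F^2$ is sub‑exponential and concentrates; a standard tail bound gives $\|\mathcal{P}_\Omega(\pmb{E})\|_F \le 2\sigma\sqrt{|\Omega|} + 2\sigma\sqrt{\log(\eta^{-1})} \le \delta$ with probability at least $1-\eta$ under the stated choice of $\delta$. On this event $\hat{\pmb{M}}$ is feasible for \eqref{eq:cnnm}, and since $\pmb{Y}=\mathcal{P}_\Omega(\pmb{M}+\pmb{E})$, setting $\pmb{H}:=\hat{\pmb{M}}-\pmb{M}$ gives $\mathcal{P}_\Omega(\pmb{H}) = \bigl(\mathcal{P}_\Omega(\hat{\pmb{M}})-\pmb{Y}\bigr) + \mathcal{P}_\Omega(\pmb{E})$, hence $\|\mathcal{P}_\Omega(\pmb{H})\|_F \le 2\delta$.

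The remaining deterministic error analysis follows the template of the proofs of Theorems~\ref{thm:plan} and~\ref{thm:burnwal}. From optimality $\|\hat{\pmb{M}}\|_* \le \|\pmb{M}\|_*$, a subgradient inequality for the nuclear norm at $\pmb{M}$, the certificate $\pmb{W}$, and the bound $|\langle\pmb{W},\pmb{H}\rangle| = |\langle\pmb{W},\mathcal{P}_\Omega(\pmb{H})\rangle| \le \|\pmb{W}\|_F\,\|\mathcal{P}_\Omega(\pmb{H})\|_F \le 2\delta\|\pmb{W}\|_F$, one derives a cone‑type inequality $\|\mathcal{P}_{T^\perp}(\pmb{H})\|_* \le c_1\|\mathcal{P}_T(\pmb{H})\|_F + c_2\,\delta\|\pmb{W}\|_F$, where $c_1,c_2$ depend on $\theta,\gamma$ through $\|\mathcal{P}_T(\pmb{W})-\pmb{U}\pmb{V}^\top\|_F$ and $\|\mathcal{P}_{T^\perp}(\pmb{W})\|$ and stay finite by hypothesis. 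Combining this with the conditioning estimate (i) and the trivial bound $\|\mathcal{P}_\Omega\mathcal{P}_T(\pmb{H})\|_F \le \|\mathcal{P}_\Omega(\pmb{H})\|_F + \|\mathcal{P}_{T^\perp}(\pmb{H})\|_F$, one solves for $\|\mathcal{P}_T(\pmb{H})\|_F$ — this is where strengthening $<1$ to $\le\tfrac12$ is used, since it keeps the $\|\mathcal{P}_T(\pmb{H})\|_F$ contributions on the two sides separated with an absolute gap. Then $\|\pmb{H}\|_F \le \|\mathcal{P}_T(\pmb{H})\|_F + \|\mathcal{P}_{T^\perp}(\pmb{H})\|_F$, and using $\|\mathcal{P}_T(\pmb{H})\|_* \le \sqrt{2r}\,\|\mathcal{P}_T(\pmb{H})\|_F$ together with $r \le \min(n_1,n_2)$ and $p=|\Omega|/(n_1 n_2)$ yields, after tracking absolute constants exactly as in \cite{candes2010matrix} and \cite{burnwal2020deterministic}, the bound $\|\pmb{M}-\hat{\pmb{M}}\|_F \le 4\delta\sqrt{\tfrac{C\min(n_1,n_2)}{p}} + 2\delta$.

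I expect the main obstacle to be bookkeeping rather than a new idea: the deterministic ingredients (i)--(ii), and in particular the graph‑spectral estimates that bound $\|\mathcal{P}_{T^\perp}(\pmb{W})\|$, $\|\mathcal{P}_T(\pmb{W})-\pmb{U}\pmb{V}^\top\|_F$, and the conditioning constant in terms of $\xi_{1,\mathcal{G}},\xi_{2,\mathcal{G}},\psi_{\mathcal{G}}$, are all set up in the proof of Theorem~\ref{thm:noiseless_rectangular}, so the work is to verify that the margin between $\tfrac12$ and $1$ in the hypothesis is wide enough to absorb both noise contributions (the $2\delta$ from feasibility and the $\delta\|\pmb{W}\|_F$ from the certificate) while keeping the output constants clean, and to confirm that the sub‑Gaussian concentration delivers precisely the threshold $\delta \ge 4\sigma\sqrt{|\Omega|} + 2\sigma\sqrt{\log(\eta^{-1})}$. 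A secondary point requiring care is that estimate (i) must hold on all of $T$, not merely at $\pmb{H}$, which is exactly why \textbf{A2} is stated uniformly over subsets $S$ whose sizes range over the node‑degree interval of $\mathcal{G}$.
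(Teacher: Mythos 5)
Your route is genuinely different from the paper's, and the difference is exactly where the sketch fails to reproduce the stated bound. The paper (following Cand\`es--Plan and Burnwal--Vidyasagar) first splits the error by support: $\hat{\pmb{M}}-\pmb{M}=P_{\Omega}(\hat{\pmb{M}}-\pmb{M})+P_{\Omega^c}(\hat{\pmb{M}}-\pmb{M})$, bounds $\|P_{\Omega}(\hat{\pmb{M}}-\pmb{M})\|_F\le 2\delta$ by feasibility plus the sub-Gaussian tail bound, and then controls $\pmb{Z}:=P_{\Omega^c}(\hat{\pmb{M}}-\pmb{M})$ by rerunning the exact-recovery argument of Theorem~\ref{thm:noiseless_rectangular} on $\pmb{Z}$, which satisfies $P_\Omega(\pmb{Z})=\pmb{0}$. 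Because the one-step certificate $\pmb{Y}=\frac{n_1n_2}{|\Omega|}P_\Omega(\pmb{U}\pmb{V}^\top)$ is supported on $\Omega$, $\langle\pmb{Y},\pmb{Z}\rangle=0$ exactly, so no certificate--noise cross term ever appears; the only price is $\|P_\Omega(\pmb{M}-\hat{\pmb{M}})\|_*\le\sqrt{\min(n_1,n_2)}\,\|P_\Omega(\pmb{M}-\hat{\pmb{M}})\|_F\le 2\delta\sqrt{\min(n_1,n_2)}$, which is the sole source of the factor $\sqrt{\min(n_1,n_2)/p}$ in the conclusion.

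You instead pair the certificate with the full error $\pmb{H}$, which creates the extra term $\delta\|\pmb{W}\|_F$ in your cone inequality, and this is the genuine gap. Propagating it through your conditioning step (i) gives $\|P_T(\pmb{H})\|_F\lesssim\frac{1}{\sqrt{p(1-\theta-\gamma)}}\bigl(\delta+\delta\|\pmb{W}\|_F\bigr)$, and even granting your claim $\|\pmb{W}\|_F\lesssim\sqrt{r/p}$ (which does follow from \textbf{A2} for the one-step certificate; with \textbf{A1} alone one only gets $\mu_0 r/\sqrt{p}$), this contributes a term of order $\delta\sqrt{r}/p$ to $\|\pmb{H}\|_F$. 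That term is dominated by the claimed $\delta\sqrt{C\min(n_1,n_2)/p}$ only if $|\Omega|\gtrsim r\max(n_1,n_2)$, a condition the hypothesis $\gamma+\sqrt{n_1n_2r(\theta^2+\gamma^2)/(|\Omega|(1-\theta-\gamma))}\le\frac12$ does not explicitly impose (both $\theta$ and $\gamma$ can in principle be small without that many samples). So this is not constant bookkeeping: the extra term scales like $1/p$ rather than $1/\sqrt{p}$, and your route as sketched proves a different, in some regimes weaker, bound. The fix is precisely the paper's support-splitting step: test the subgradient/certificate inequality only against $P_{\Omega^c}(\pmb{H})$, where the certificate term vanishes identically, and let the on-$\Omega$ part enter only through $\|P_\Omega(\pmb{M}-\hat{\pmb{M}})\|_*\le 2\delta\sqrt{\min(n_1,n_2)}$. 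Two minor points: the feasibility needed on the high-probability event is that of $\pmb{M}$ (to get $\|\hat{\pmb{M}}\|_*\le\|\pmb{M}\|_*$), not of $\hat{\pmb{M}}$, which is feasible by definition; and, like the paper, you will also need $\theta+\gamma$ bounded away from $1$ to extract the clean constant $C$.
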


As in Theorems \ref{thm:noiseless_symmetric} and \ref{thm:noisy_symmetric},
the sufficient conditions involve the graph properties $\xi_{1, \mathcal{G}}$, $\xi_{2, \mathcal{G}}$ and $\psi_{\mathcal{G}}$ which are applicable to any graphs, that is, they can address any observation patterns.
In the case where the observation graph is $d$-regular, the parameter $\phi$ in Theorem \ref{thm:burnwal} and $\gamma$ in the above theorems coincide, meaning that our result generalizes the result of \cite{burnwal2020deterministic}.
However, Theorems \ref{thm:noiseless_rectangular} and \ref{thm:noisy_rectangular} still suffer from the limitations of Theorem \ref{thm:burnwal}, namely they depend on the strong incoherence assumption and hold the sub-optimal sample complexity rate.
It remains an open question whether the results can be improved as in the case of symmetric matrices.

\section{Experimental Results}

\begin{figure}[t]
	\centering
	\includegraphics[width=1\textwidth]{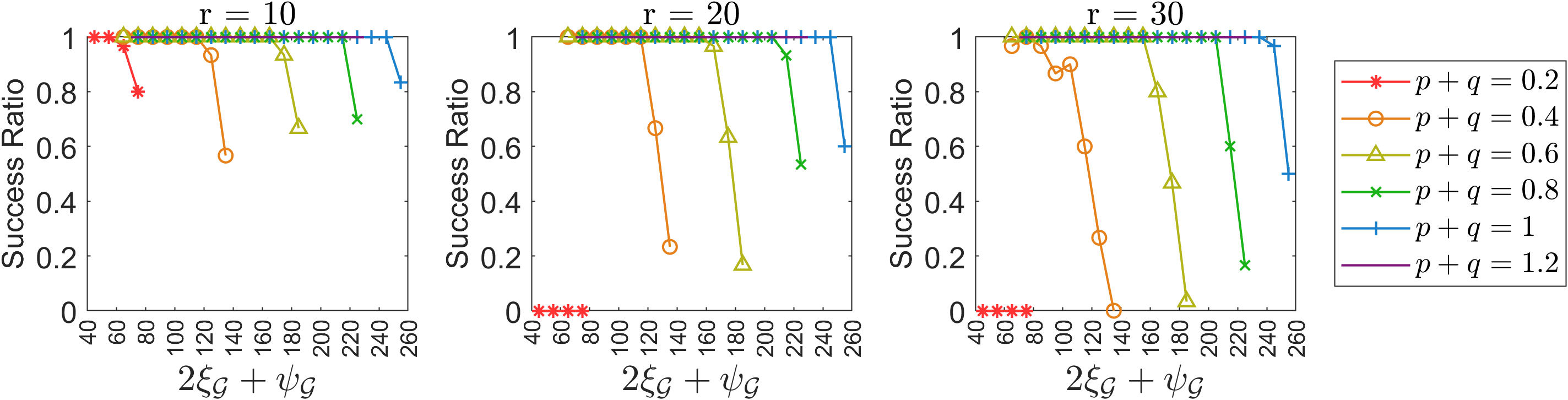}
	\caption{Success ratio of exact matrix completion versus graph property $2\xi_{\mathcal{G}} + \psi_{\mathcal{G}}$ for different rank $r$. Different line colors or markers indicate different values of observation probability $p+q$.}
	\label{fig:noiseless_different_rank}
\end{figure}

\subsection{Simulations}

The purpose of the simulation study is to illustrate the effects of the factors (e.g., graph properties such as $\xi_{\mathcal{G}}$ and $\psi_{\mathcal{G}}$) 
shown in our theorem, validating that they have an impact on the success of the matrix completion algorithm.
Due to space limitations, we only present the results for symmetric matrices, and defer the results for rectangular matrices to Appendix \ref{appendix_sec:experimental_results}.

We create synthetic data matrix as follows. 
We first generate the singular matrix $\pmb{U}\in \mathbb{R}^{500\times r}$ using standard normal distribution.
We then generate the rank-$r$ symmetric matrix $\pmb{M} \in \mathbb{R}^{500\times 500}$ using $\pmb{M} = \pmb{U}\pmb{U}^\top$.
In the scenario of noisy matrices, we randomly generate the entry-wise noise from a normal distribution with mean $0$ and standard deviation $\sigma$.
We try different values of rank $r \in \{10, 20, 30\}$ and noise parameter $\sigma \in \{10^{-4},10^{-5},10^{-6}\}$ in the experiments.

To generate observation graphs with various values of graph properties, we employ the stochastic block model. 
We first divide the nodes into two clusters and sample inter-cluster edges with a probability of $p \in (0,1)$ and intra-cluster edges with a probability of $q \in (0,1)$. 
For each $p+q \in \{0.2, 0.4, \dots, 1.2\}$, we try different values of $p$ and $q$ so that the graphs have diverse values of $2\xi_{\mathcal{G}} + \psi_{\mathcal{G}}$, the quantity influencing the solvability of matrix completion according to our theorem.
Specifically, we have $2\xi_{\mathcal{G}} + \psi_{\mathcal{G}}$ fall within one of the ranges $40$ to $50$, $50$ to $60$, $\dots$, or $250$ to $260$.

We use an Augmented Lagrangian Method \citep{lin2010augmented} to solve the constrained nuclear norm minimization problems \eqref{eq:nnm} and \eqref{eq:cnnm}.
When solving \eqref{eq:cnnm} for approximate matrix completion, 
we set the tuning parameter $\delta$ to be $4\sigma\sqrt{|\Omega|}$ as proven in Theorem \ref{thm:noisy_symmetric}.
For evaluation, we calculate the relative error $\frac{\|\pmb{M} - \hat{\pmb{M}} \|_F}{\| \pmb{M} \|_F}$ in each experiment.
In exact matrix completion, we consider a trial to be successful if the relative error is less than $0.01$, and compute the success ratio over $30$ trials with different random seeds.
In approximate matrix completion, we calculate the average relative error over $30$ trials.

Figure \ref{fig:noiseless_different_rank} shows the result of exact matrix completion in noiseless matrix case.
We can observe that as $2\xi_{\mathcal{G}} + \psi_{\mathcal{G}}$ or $r$ increases, or $p+q$ decreases (i.e., the number of observed entries decreases), the success ratio decreases, which supports Theorem \ref{thm:noiseless_symmetric}.
Figure \ref{fig:noisy_different_rank_sigma} demonstrates the result of approximate matrix completion in noisy matrix case.
In the three plots above, we can observe that as $2\xi_{\mathcal{G}} + \psi_{\mathcal{G}}$ or $r$ increases, or $p+q$ decreases, the average of relative errors increases.
In the three plots below, we can see that as the noise parameter $\sigma$ decreases, the average of relative errors decreases.
These observations are consistent with our findings in Theorem \ref{thm:noisy_symmetric}.

\begin{figure}[t]
	\centering
	\includegraphics[width=1\textwidth]{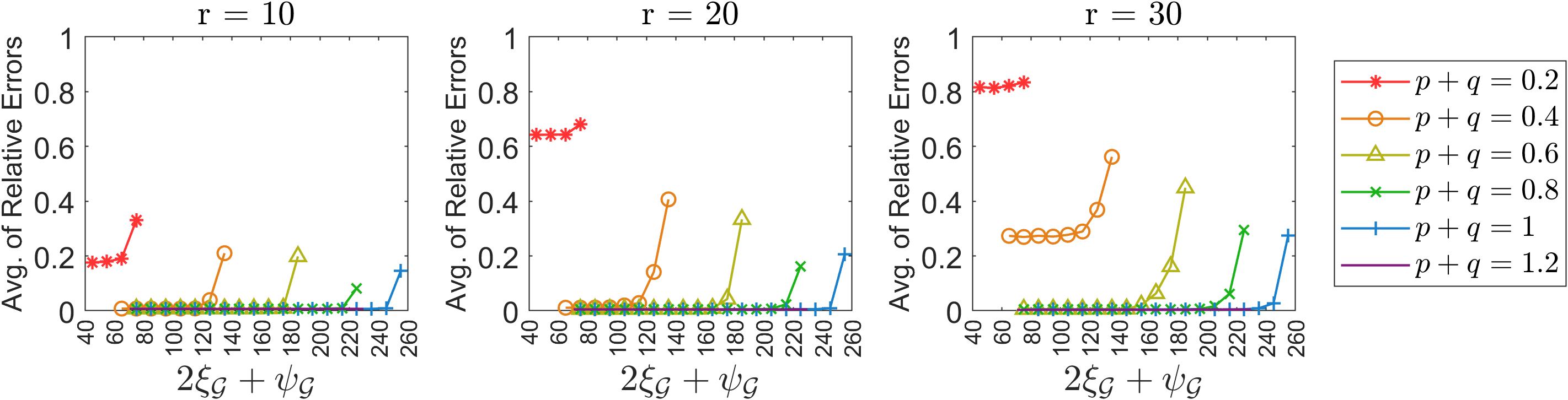}
	\\[1em]
	\includegraphics[width=1\textwidth]{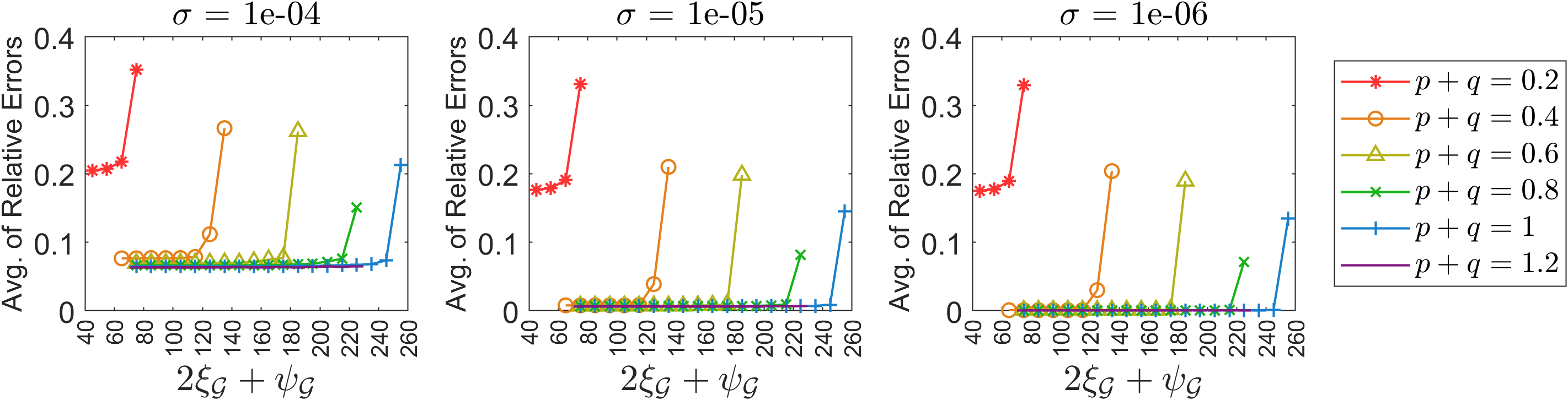}
	\caption{Average of relative errors in approximate matrix completion versus graph property $2\xi_{\mathcal{G}} + \psi_{\mathcal{G}}$.
	Three plots above show results for different rank $r$ with fixed noise parameter $\sigma=10^{-5}$.
	Three plots below are of different noise parameter $\sigma$ with fixed rank $r=10$.
Different line colors or markers indicate different values of observation probability $p+q$.}
	\label{fig:noisy_different_rank_sigma}
\end{figure}

Lastly, we want to verify whether the performance of the algorithm is solely determined by the factors derived in our theorems. 
Here, we focus on the case of noiseless symmetric matrices, while the results of other cases are deferred to Appendix \ref{appendix_sec:experimental_results}.
Our strategy is to utilize the rescaled parameter
$\frac{\textup{LHS of \eqref{eq:noiseless_symmetric_condition}}}{\textup{RHS of \eqref{eq:noiseless_symmetric_condition} without constant}} = \frac{|\Omega|}{\mu_0 r (2\xi_{\mathcal{G}} + \psi_{\mathcal{G}})}$.
If the pattern of the success ratio versus this rescaled parameter is the same across different settings, then we can empirically justify that the performance is solely determined by the factors in the rescaled parameter. 
This kind of approach has been used in \cite{wainwright2009sharp} for sparse linear regression.

In Figure \ref{fig:noiseless_overlap}, we use the same data set as in Figure \ref{fig:noiseless_different_rank} but calculate the rescaled parameter for each setting and plot the success ratio against the rescaled parameter.
We can observe that the curves share almost the same pattern across different settings of rank $r$. 
This empirical finding justifies the necessity and tightness of condition \eqref{eq:noiseless_symmetric_condition} in Theorem \ref{thm:noiseless_symmetric}.

\begin{figure}[t]
	\centering
	\includegraphics[width=0.3\textwidth]{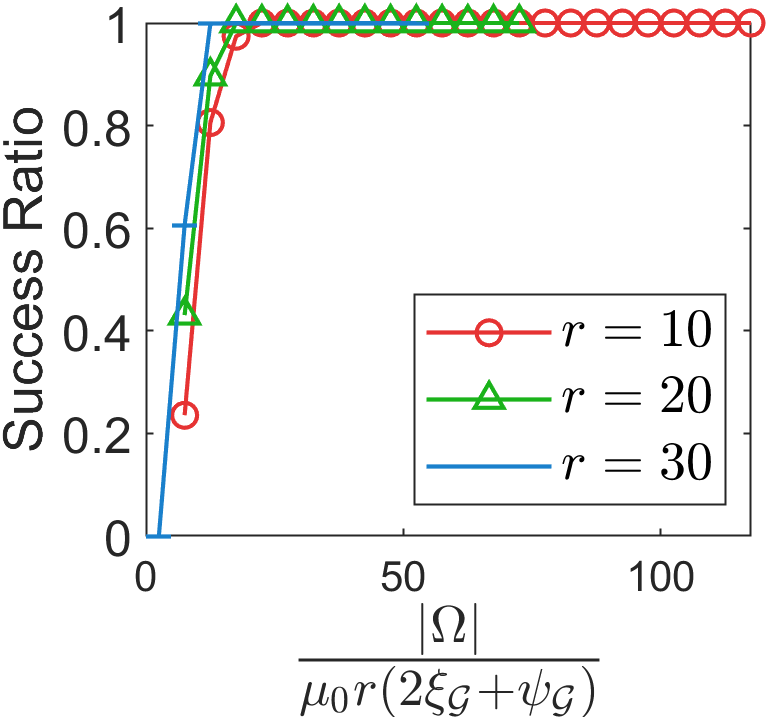}
	\caption{Success ratio of exact matrix completion versus rescaled parameter for different rank $r$.}
	\label{fig:noiseless_overlap}
\end{figure}

\subsection{Real Data Analysis}

The goal of the real data analysis is to demonstrate that the observation patterns of the actual data sets deviate from the uniform random sampling, which will support the rationale of our research.
We utilize the graph properties we introduced in our theorems for the comparison.

We consider the following common benchmark data sets for collaborative filtering: MovieLens (100K and 1M) \citep{maxwell2015movielens}, Flixster, and Douban.
MovieLens 100K (ML 100K) and 1M (ML 1M) data sets consist of 100,000 ratings from 943 users on 1682 movies, and 1,000,209 ratings from 6040 users on 3952 movies, respectively.
For Flixster and Douban, we use preprocessed subsets of these data sets provided by \cite{monti2017geometric}.
Flixster and Douban data sets consist of 26,173 and 136,891 ratings from 3000 users on 3000 items, respectively.

For comparison with uniform random sampling, we first generate graphs from the Erd\H{o}s-R\'enyi model with the same dimension as the observation graph of each data set, and pick 30 graphs with similar density to the real observation graph.
We then calculate the averages of the graph properties $\xi_{1,\mathcal{G}}$, $\xi_{2,\mathcal{G}}$ and $\psi_{\mathcal{G}}$ for these 30 graphs and compare them to those of the real observation graph.

Table \ref{table:real_data_graph_properties} summarizes the graph property values.
We can see a significant difference between the values of the real observation graph and the synthetic random graphs in each data set.
In particular, we find that the real observation graph has larger values of the graph properties than those of the random graphs in each case,
indicating that it is more difficult to satisfy the sufficient condition for matrix completion according to our theorem for the real observation.

To examine whether this is true,
we conduct experiment using synthetic rank-$1$ noiseless matrices with the same dimension as each data set.
We apply the real observation graph and the random graphs to the synthetic matrices to create incomplete matrices,
and then run the algorithm \eqref{eq:nnm}.
We repeat 30 trials with different random seeds, and calculate the average of relative errors.

\begin{table}[t]
\caption{Graph properties of real observation graphs of four benchmark data sets and averages of graph properties of corresponding random graphs generated from Erd\H{o}s-R\'enyi model.}
\label{table:real_data_graph_properties}
\centering
\begin{tabular}{lcccccccc}
\toprule
& \multicolumn{4}{c}{Real Observation Graph} & \multicolumn{4}{c}{Random Graphs from Erd\H{o}s-R\'enyi Model} \\ \cmidrule(lr){2-5} \cmidrule(lr){6-9}
Dataset & Density & $\psi_{\mathcal{G}}$ & $\xi_{1,\mathcal{G}}$ & $\xi_{2,\mathcal{G}}$ & \makecell{Avg. of \\ Density} & \makecell{Avg. of \\ $\psi_{\mathcal{G}}$} & \makecell{Avg. of \\ $\xi_{1,\mathcal{G}}$} & \makecell{Avg. of \\ $\xi_{2,\mathcal{G}}$} \\ \midrule
ML 100K  & 0.0630  & 942.08  & 75.53  & 107.32 & 0.0630 & 401.57 & 7.51 & 10.02   \\
ML 1M   & 0.0419  & 3348.11 & 238.27 & 305.38  & 0.0419 & 1098.11 & 15.57 & 12.59     \\
Flixster  & 0.0029  & 173.94  & 13.79  & 7.09  & 0.0029 & 22.70 & 2.95 & 2.95   \\
Douban  & 0.0152  & 120.89  & 20.10  & 20.05   & 0.0152 & 51.39 & 6.73 & 6.70  \\ 
\bottomrule 
\end{tabular}
\end{table}

\begin{table}[t]
\caption{Average of relative errors in matrix completion for synthetic data set
corresponding to each of four benchmark data sets, where real observation graph or random graph is applied.}
\label{table:real_data_error}
\centering
\begin{tabular}{lcc}
\toprule
         & \multicolumn{2}{c}{Average of Relative Errors}                                  \\ \cmidrule(lr){2-3} 
Dataset  & \multicolumn{1}{c}{Real Observation Graph} & Random Graphs from Erd\H{o}s-R\'enyi Model \\ \midrule
ML 100K  & \multicolumn{1}{c}{0.2335} & 0.000025  \\
ML 1M    & \multicolumn{1}{c}{0.3050} & 0.000032  \\
Flixster & \multicolumn{1}{c}{0.5582} & 0.062454  \\
Douban   & \multicolumn{1}{c}{0.0174} & 0.000018 \\ 
\bottomrule 
\end{tabular}
\end{table}

Table \ref{table:real_data_error} demonstrates the experimental results.
We can check that the relative error is much larger when the real observation graph is applied.
This result implies that it is more difficult to achieve successful matrix completion when the graph properties of $\xi_{1,\mathcal{G}}$, $\xi_{2,\mathcal{G}}$ and $\psi_{\mathcal{G}}$ have larger values,
which supports our theorem.
Furthermore, it shows that matrix completion in a real-world scenario is indeed more challenging than what is expected under a uniform random sampling scheme.
This highlights the importance of research on general sampling schemes, such as the one in our paper.

\section{Concluding Remarks}

In this paper, we establish the provable guarantees of exact and approximate matrix completion algorithms based on nuclear norm minimization, 
without any probabilistic or structural assumptions on the sampling schemes.
We utilize the observation graph and its properties to address a general non-random sampling scheme.
By using the graph properties, we theoretically and experimentally demonstrate that the nuclear norm minimization method is successful when the observation graph is well-connected and has similar node degrees.
It is notable that for symmetric matrices, our theorem significantly improves the existing works, which is supported by empirical evidence.
It remains an open question whether our result on symmetric matrices can be extended for general rectangular matrices.

\bibliography{ref}

\begin{thebibliography}{20}
\providecommand{\natexlab}[1]{#1}
\providecommand{\url}[1]{\texttt{#1}}
\expandafter\ifx\csname urlstyle\endcsname\relax
  \providecommand{\doi}[1]{doi: #1}\else
  \providecommand{\doi}{doi: \begingroup \urlstyle{rm}\Url}\fi

\bibitem[Bhojanapalli and Jain(2014)]{bhojanapalli2014universal}
Srinadh Bhojanapalli and Prateek Jain.
\newblock Universal matrix completion.
\newblock In \emph{International Conference on Machine Learning}, pages
  1881--1889. PMLR, 2014.

\bibitem[Bishop and Yu(2014)]{bishop2014deterministic}
William~E Bishop and Byron~M Yu.
\newblock Deterministic symmetric positive semidefinite matrix completion.
\newblock \emph{Advances in Neural Information Processing Systems}, 27, 2014.

\bibitem[Burnwal and Vidyasagar(2020)]{burnwal2020deterministic}
Shantanu~Prasad Burnwal and Mathukumalli Vidyasagar.
\newblock Deterministic completion of rectangular matrices using asymmetric
  ramanujan graphs: Exact and stable recovery.
\newblock \emph{IEEE Transactions on Signal Processing}, 68:\penalty0
  3834--3848, 2020.

\bibitem[Cand\`es and Plan(2010)]{candes2010matrix}
Emmanuel~J Cand\`es and Yaniv Plan.
\newblock Matrix completion with noise.
\newblock \emph{Proceedings of the IEEE}, 98\penalty0 (6):\penalty0 925--936,
  2010.

\bibitem[Cand\`es and Recht(2009)]{candes2009exact}
Emmanuel~J Cand\`es and Benjamin Recht.
\newblock Exact matrix completion via convex optimization.
\newblock \emph{Foundations of Computational mathematics}, 9\penalty0
  (6):\penalty0 717--772, 2009.

\bibitem[Candes et~al.(2015)Candes, Eldar, Strohmer, and
  Voroninski]{candes2015phase}
Emmanuel~J Candes, Yonina~C Eldar, Thomas Strohmer, and Vladislav Voroninski.
\newblock Phase retrieval via matrix completion.
\newblock \emph{SIAM review}, 57\penalty0 (2):\penalty0 225--251, 2015.

\bibitem[Chatterjee(2020)]{chatterjee2020deterministic}
Sourav Chatterjee.
\newblock A deterministic theory of low rank matrix completion.
\newblock \emph{IEEE Transactions on Information Theory}, 66\penalty0
  (12):\penalty0 8046--8055, 2020.

\bibitem[Chen and Suter(2004)]{chen2004recovering}
Pei Chen and David Suter.
\newblock Recovering the missing components in a large noisy low-rank matrix:
  Application to sfm.
\newblock \emph{IEEE transactions on pattern analysis and machine
  intelligence}, 26\penalty0 (8):\penalty0 1051--1063, 2004.

\bibitem[Foucart et~al.(2020)Foucart, Needell, Pathak, Plan, and
  Wootters]{foucart2020weighted}
Simon Foucart, Deanna Needell, Reese Pathak, Yaniv Plan, and Mary Wootters.
\newblock Weighted matrix completion from non-random, non-uniform sampling
  patterns.
\newblock \emph{IEEE Transactions on Information Theory}, 67\penalty0
  (2):\penalty0 1264--1290, 2020.

\bibitem[Goldberg et~al.(1992)Goldberg, Nichols, Oki, and
  Terry]{goldberg1992using}
David Goldberg, David Nichols, Brian~M Oki, and Douglas Terry.
\newblock Using collaborative filtering to weave an information tapestry.
\newblock \emph{Communications of the ACM}, 35\penalty0 (12):\penalty0 61--70,
  1992.

\bibitem[Harper and Konstan(2015)]{maxwell2015movielens}
F.~Maxwell Harper and Joseph~A. Konstan.
\newblock The movielens datasets: History and context.
\newblock \emph{ACM Trans. Interact. Intell. Syst.}, 5\penalty0 (4), dec 2015.
\newblock ISSN 2160-6455.
\newblock \doi{10.1145/2827872}.
\newblock URL \url{https://doi.org/10.1145/2827872}.

\bibitem[Heiman et~al.(2014)Heiman, Schechtman, and
  Shraibman]{heiman2014deterministic}
Eyal Heiman, Gideon Schechtman, and Adi Shraibman.
\newblock Deterministic algorithms for matrix completion.
\newblock \emph{Random Structures \& Algorithms}, 45\penalty0 (2):\penalty0
  306--317, 2014.

\bibitem[Lee and Shraibman(2013)]{lee2013matrix}
Troy Lee and Adi Shraibman.
\newblock Matrix completion from any given set of observations.
\newblock \emph{Advances in Neural Information Processing Systems}, 26, 2013.

\bibitem[Lin et~al.(2010)Lin, Chen, and Ma]{lin2010augmented}
Zhouchen Lin, Minming Chen, and Yi~Ma.
\newblock The augmented lagrange multiplier method for exact recovery of
  corrupted low-rank matrices.
\newblock \emph{arXiv preprint arXiv:1009.5055}, 2010.

\bibitem[Monti et~al.(2017)Monti, Bronstein, and Bresson]{monti2017geometric}
Federico Monti, Michael Bronstein, and Xavier Bresson.
\newblock Geometric matrix completion with recurrent multi-graph neural
  networks.
\newblock \emph{Advances in neural information processing systems}, 30, 2017.

\bibitem[Pimentel-Alarc{\'o}n et~al.(2016)Pimentel-Alarc{\'o}n, Boston, and
  Nowak]{pimentel2016characterization}
Daniel~L Pimentel-Alarc{\'o}n, Nigel Boston, and Robert~D Nowak.
\newblock A characterization of deterministic sampling patterns for low-rank
  matrix completion.
\newblock \emph{IEEE Journal of Selected Topics in Signal Processing},
  10\penalty0 (4):\penalty0 623--636, 2016.

\bibitem[Recht(2011)]{recht2011simpler}
Benjamin Recht.
\newblock A simpler approach to matrix completion.
\newblock \emph{Journal of Machine Learning Research}, 12\penalty0 (12), 2011.

\bibitem[Shapiro et~al.(2018)Shapiro, Xie, and Zhang]{shapiro2018matrix}
Alexander Shapiro, Yao Xie, and Rui Zhang.
\newblock Matrix completion with deterministic pattern: A geometric
  perspective.
\newblock \emph{IEEE Transactions on Signal Processing}, 67\penalty0
  (4):\penalty0 1088--1103, 2018.

\bibitem[Vershynin(2018)]{vershynin2018high}
Roman Vershynin.
\newblock \emph{High-dimensional probability: An introduction with applications
  in data science}, volume~47.
\newblock Cambridge university press, 2018.

\bibitem[Wainwright(2009)]{wainwright2009sharp}
Martin~J Wainwright.
\newblock Sharp thresholds for high-dimensional and noisy sparsity recovery
  using $\ell_1$-constrained quadratic programming (lasso).
\newblock \emph{IEEE transactions on information theory}, 55\penalty0
  (5):\penalty0 2183--2202, 2009.

\end{thebibliography}
\bibliographystyle{plainnat}

\newpage

\appendix

\section{Notation}

Matrices are bold capital (e.g., $\pmb{A}$), vectors are bold lowercase (e.g., $\pmb{a}$), and scalars or entries are not bold.
$A_{i,j}$ and $a_i$ represent the $(i,j)$-th and $i$-th entries of $\pmb{A}$ and $\pmb{a}$, respectively. 
$\pmb{A}_{i,:}$ represents the $i$-th row of $\pmb{A}$ (but in column format) and $\pmb{A}_{:,j}$ represents the $j$-th column of $\pmb{A}$.
For any positive integer $n$, we denote $[n]:=\{1,\dots,n\}$.
$\|\pmb{a}\|$ represents the $l_2$ norm of $\pmb{a}$.
$\|\pmb{A}\|$, $\|\pmb{A}\|_F$ and $\|\pmb{A}\|_*$ indicate the spectral, Frobenius and nuclear norms of $\pmb{A}$, respectively.
We let $\|\pmb{A}\|_{2,\infty} = \max_{i} \| \pmb{A}_{i,:}\|$.
$\pmb{A}^\top$ is the transpose of $\pmb{A}$.
$\langle \pmb{A}, \pmb{B} \rangle$ and $\pmb{A}\circ\pmb{B}$ represent the inner product and the Hadamard product of $\pmb{A}$ and $\pmb{B}$, respectively.
For any positive integer $n$, we denote $\pmb{1}_{n} = (1,1,\dots,1)^\top\in\mathbb{R}^{n}$
and by $\pmb{I}_n$ the $n$-dimensional identity matrix.
$\{\pmb{e}_i~;~ i \in [n]\}$ indicates the standard basis of $\mathbb{R}^n$ where the dimension $n$ is determined properly by the context.

\section{Related Works on Deterministic Sampling Schemes}
\label{appendix_sec:existing_works}

We discuss several works on the matrix completion problem under deterministic sampling schemes, and the difference from our work.
\cite{pimentel2016characterization} provided algebraic criteria on the sampling pattern under which there are only finitely many low-rank matrices that agree with the underlying matrix over the observed entries.
\cite{shapiro2018matrix} investigated the minimum rank matrix completion problem under deterministic sampling schemes and provided conditions to find a locally-unique solution of the problem.
However, these papers did not give a tractable algorithm to achieve a desirable solution.
Even though \cite{lee2013matrix} and \cite{foucart2020weighted} suggested tractable algorithms for matrix completion under deterministic sampling schemes,
they only derived the error bounds based on a weighted error metric, which is not useful when we want to guarantee accuracy for all entries in the entire matrix. 
\cite{chatterjee2020deterministic} gave conditions for a sequence of the matrix completion problems with arbitrary missing patterns to be asymptotically solvable.
However, our interest lies in the non-asymptotic solvability of a single matrix completion problem.

\section{Proof of Theorems for Symmetric Matrices}

In this section, we consider the case that the underlying rank-$r$ matrix $\pmb{M}$ is symmetric.
We let $\pmb{M} = \pmb{U}\pmb{\Sigma}\pmb{V}^\top$ be the singular value decomposition of $\pmb{M} \in \mathbb{R}^{n\times n}$, where each $\pmb{V}_{:,k}$ is equal to $\pmb{U}_{:,k}$ or $-\pmb{U}_{:,k}$ for $k\in [r]$.
That is, $\pmb{U}\pmb{U}^\top$ equals $\pmb{V}\pmb{V}^\top$.

We introduce the orthogonal decomposition of the space of symmetric matrices, $\tilde{T}\oplus\tilde{T}^\perp$,
where $\tilde{T}$ is span of all matrices of form $\pmb{U}\pmb{X}\pmb{U}^\top$ for an arbitrary symmetric matrix $\pmb{X}\in \mathbb{R}^{r\times r}$.
$\tilde{T}^\perp$ is its orthogonal complement.
Then, the orthogonal projection $P_{\tilde{T}}$ onto $\tilde{T}$ is given by
$$
P_{\tilde{T}}(\pmb{Z}) = \pmb{U}\pmb{U}^\top \pmb{Z} \pmb{U}\pmb{U}^\top = \pmb{U}\pmb{U}^\top \pmb{Z} \pmb{V}\pmb{V}^\top,
$$
and the orthogonal projection onto $\tilde{T}^\perp$ is given by
$$
P_{\tilde{T}^\perp}(\pmb{Z}) = (\pmb{I}_n-\pmb{U}\pmb{U}^\top) \pmb{Z} (\pmb{I}_n-\pmb{U}\pmb{U}^\top) = (\pmb{I}_n-\pmb{U}\pmb{U}^\top) \pmb{Z} (\pmb{I}_n-\pmb{V}\pmb{V}^\top).
$$

\medskip

Before presenting the proofs of Theorems \ref{thm:noiseless_symmetric} and \ref{thm:noisy_symmetric},
we first introduce a lemma which will be used in the proofs.

\begin{lemma}
\label{lemma:symmetric_bound}
Suppose that the assumption \textup{\textbf{A1}} holds. For any matrix $\pmb{W} \in \tilde{T}$, 
$$
\bigg\| \frac{n^2}{|\Omega|} P_{\tilde{T}} P_{\Omega} (\pmb{W}) - \pmb{W} \bigg\|
\leq 
\bigg\| \frac{n^2}{|\Omega|} P_{\Omega} (\pmb{W}) - \pmb{W} \bigg\|
\leq
\alpha \| \pmb{W} \|
$$
where $\alpha = \frac{\mu_0 r n}{|\Omega|}\cdot (2 \xi_{\mathcal{G}}+\psi_{\mathcal{G}})$.
\end{lemma}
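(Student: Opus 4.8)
\textbf{Proof plan for Lemma \ref{lemma:symmetric_bound}.}

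The first inequality, $\| \frac{n^2}{|\Omega|} P_{\tilde{T}} P_{\Omega}(\pmb{W}) - \pmb{W}\| \le \| \frac{n^2}{|\Omega|} P_{\Omega}(\pmb{W}) - \pmb{W}\|$, is the easy part: since $\pmb{W}\in\tilde{T}$ we have $P_{\tilde{T}}(\pmb{W}) = \pmb{W}$, so the left-hand operand is $P_{\tilde{T}}(\frac{n^2}{|\Omega|}P_\Omega(\pmb{W}) - \pmb{W})$, and $P_{\tilde{T}}$ is an orthogonal projection on the space of symmetric matrices and hence a contraction in spectral norm (this is a standard fact; alternatively one notes $\|\pmb{U}\pmb{U}^\top \pmb{Z}\pmb{U}\pmb{U}^\top\|\le\|\pmb{Z}\|$ since $\pmb{U}\pmb{U}^\top$ has spectral norm $1$). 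So the whole content is the second inequality, bounding $\| \frac{n^2}{|\Omega|}P_\Omega(\pmb{W}) - \pmb{W}\|$. My plan is to reduce this to a spectral-norm estimate on a "centered, rescaled adjacency" operator acting on $\pmb{W}$, and then split that operator into a degree-deviation piece controlled by $\xi_{\mathcal{G}}$ and a connectivity piece controlled by $\psi_{\mathcal{G}}$.

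Concretely: write $\frac{n^2}{|\Omega|}P_\Omega(\pmb{W}) - \pmb{W} = (\frac{n^2}{|\Omega|}\pmb{A}_{\mathcal{G}} - \pmb{J})\circ \pmb{W}$ schematically, where $\pmb{J}$ is the all-ones matrix and $\circ$ is the Hadamard product; more precisely $P_\Omega(\pmb{W}) = \pmb{A}_{\mathcal{G}}\circ\pmb{W}$ (using the symmetric observation graph's adjacency matrix, loops included) and $\pmb{W} = \pmb{J}\circ\pmb{W}$. Since $\pmb{W}\in\tilde{T}$, $\pmb{W} = \pmb{U}\pmb{U}^\top\pmb{W}\pmb{U}\pmb{U}^\top$, so $\|\pmb{W}_{i,:}\|$ is controlled by the incoherence bound $\|\pmb{U}_{i,:}\|^2\le \mu_0 r/n$ together with $\|\pmb{W}\|$; in fact $\|\pmb{W}\pmb{U}\|\le\|\pmb{W}\|$ and each row of $\pmb{U}\pmb{U}^\top\pmb{W}\pmb{U}\pmb{U}^\top$ has norm $\le \|\pmb{U}_{i,:}\|\cdot\|\pmb{W}\|\le \sqrt{\mu_0 r/n}\,\|\pmb{W}\|$. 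This is the mechanism that converts a bound on an $\ell_2\to\ell_2$ operator norm of the graph-difference matrix into the desired $\|\pmb{W}\|$-proportional bound, picking up the factor $\mu_0 r$ (and one factor of $n$ from the normalization).

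The heart of the argument is therefore: bound $\|\frac{n^2}{|\Omega|}\pmb{A}_{\mathcal{G}} - \pmb{J}\|$ (or rather its action, compressed to the relevant $n\times r$ or $r\times r$ blocks after sandwiching by $\pmb{U}$). I would write $\frac{n^2}{|\Omega|}\pmb{A}_{\mathcal{G}} - \pmb{J} = \frac{n^2}{|\Omega|}(\pmb{A}_{\mathcal{G}} - \frac{1}{n}\pmb{D}_{\mathcal{G}}\pmb{J} ) + (\frac{n}{|\Omega|}\pmb{D}_{\mathcal{G}} - \pmb{I})\pmb{J}$-type decomposition — i.e. separate the part coming from non-uniform degrees (which, on $\pmb{J}$ or rank-one-ish pieces, has spectral contribution governed by the degree-deviation quantities $\xi_{\mathcal{G}}$, since $\xi_{\mathcal{G}}$ is essentially $\frac{1}{\sqrt n}\|\pmb{d}-\bar d\pmb{1}\|$) from the part measuring how far the graph is from a complete graph after degree-correction. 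For the latter, the relation between $\|\pmb{A}_{\mathcal{G}} + \pmb{A}_{\bar{\mathcal{G}}} - \pmb{J}\|$-type expressions and the Laplacians of $\mathcal{G}$ and $\bar{\mathcal{G}}$ enters: on the subspace orthogonal to $\pmb{1}$, $\|\pmb{A}_{\mathcal{G}} - \frac{|\Omega|}{n^2}\pmb{J}\|$ is comparable to $\max\{\Delta_{\max,\mathcal{G}} - \varphi_{\mathcal{G}},\ \Delta_{\max,\bar{\mathcal{G}}} - \varphi_{\bar{\mathcal{G}}}\} = \psi_{\mathcal{G}}$, because the Laplacian eigenvalues of $\mathcal{G}$ lie in $[\varphi_{\mathcal{G}}, \Delta_{\max}]$-ish ranges and $L_{\mathcal{G}} + L_{\bar{\mathcal{G}}} = L_{K_n}$ pins down the complement. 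Assembling, $\|\frac{n^2}{|\Omega|}\pmb{A}_{\mathcal{G}} - \pmb{J}\|$ acting on the $\tilde T$-piece contributes $\frac{n}{|\Omega|}(2\xi_{\mathcal{G}}+\psi_{\mathcal{G}})$ after the incoherence sandwiching supplies the $\mu_0 r$, matching $\alpha = \frac{\mu_0 r n}{|\Omega|}(2\xi_{\mathcal{G}}+\psi_{\mathcal{G}})$. The coefficient $2$ in front of $\xi_{\mathcal{G}}$ presumably arises because both a row-degree correction and a column-degree correction (or, in the symmetric case, the same correction appearing from the left and from the right after the two-sided projection $\pmb{U}\pmb{U}^\top(\cdot)\pmb{U}\pmb{U}^\top$) are needed, each costing one $\xi_{\mathcal{G}}$.

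\textbf{Main obstacle.} The delicate step is the passage from the degree-corrected graph-difference matrix to the Laplacian connectivity quantity $\psi_{\mathcal{G}}$, and in particular making the bookkeeping tight enough that the constant is exactly as claimed (so that the downstream Theorem \ref{thm:noiseless_symmetric} gets the clean constant $3$). One must be careful that the "mean" being subtracted in $\xi_{\mathcal{G}}$ is $\frac{1}{n}\sum_i\Delta_i$ rather than the edge-density, that loops in the undirected symmetric observation graph are handled consistently in $\pmb{A}_{\mathcal{G}}$ and in the degrees, and that the component of $\pmb{W}$ (or of $\pmb{U}$) along the all-ones direction $\pmb{1}$ — which the Laplacian bound does not see — is separately absorbed into the degree-deviation term. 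I expect the proof to isolate $\pmb{1}$, bound its contribution by $\xi_{\mathcal{G}}$-type quantities via Cauchy–Schwarz against the incoherence bound, and bound the orthogonal-to-$\pmb{1}$ contribution by $\psi_{\mathcal{G}}$ via the Laplacian spectral inequality; reconciling these two regimes cleanly is where the real work lies.
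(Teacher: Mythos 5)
Your plan is correct and takes essentially the same route as the paper: the paper also handles the first inequality by the contraction of $P_{\tilde{T}}$, writes $\pmb{W}=\pmb{U}\pmb{X}^\top$ and uses the incoherence of $\pmb{U}$ together with $\|\pmb{X}\|_{2,\infty}\leq\sqrt{\mu_0 r/n}\,\|\pmb{W}\|$ to reduce everything to a bilinear-form bound on $\pmb{1}_n\pmb{1}_n^\top-\frac{n^2}{|\Omega|}\pmb{A}_{\mathcal{G}}$ (Lemma \ref{lemma:l2bound_general}). That bilinear form is then controlled exactly as you anticipate in your final paragraph (Lemma \ref{lemma:adjacency_matrix}): decompose the test vectors into their $\pmb{1}$-components and orthogonal parts, bound the mixed terms by the degree deviations (giving $\xi_{1,\mathcal{G}}+\xi_{2,\mathcal{G}}=2\xi_{\mathcal{G}}$, one from each side) and the orthogonal-orthogonal term by $\psi_{\mathcal{G}}$ via the Laplacians of $\mathcal{G}$ and $\bar{\mathcal{G}}$, with the constant-constant terms cancelling since $\pmb{1}_n^\top\pmb{A}_{\mathcal{G}}\pmb{1}_n=|\Omega|$.
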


\begin{proof}
For any $\pmb{W} \in \tilde{T}$, we can write that $\pmb{W} = \pmb{U}\pmb{U}^\top \pmb{W} \pmb{U}\pmb{U}^\top$. Let $\pmb{X} = \pmb{U}\pmb{U}^\top \pmb{W} \pmb{U}$, i.e., $\pmb{W} = \pmb{U}\pmb{X}^\top$.
Then we can derive that
\begin{align*}
&\bigg\| \frac{n^2}{|\Omega|} P_{\tilde{T}} P_{\Omega} (\pmb{W}) - \pmb{W} \bigg\|
= 
\bigg\| \frac{n^2}{|\Omega|} \pmb{U}\pmb{U}^\top P_{\Omega} (\pmb{W}) \pmb{U}\pmb{U}^\top - \pmb{U}\pmb{U}^\top \pmb{W} \pmb{U}\pmb{U}^\top \bigg\|
\\&\leq
\bigg\| \frac{n^2}{|\Omega|} P_{\Omega} (\pmb{W}) - \pmb{W} \bigg\|
=
\bigg\| \frac{n^2}{|\Omega|} P_{\Omega} (\pmb{U} \pmb{X}^\top) - \pmb{U} \pmb{X}^\top \bigg\|
\leq
\frac{\sqrt{\mu_0 r n^2}}{|\Omega|} \cdot (2\xi_{\mathcal{G}} + \psi_{\mathcal{G}} ) \cdot \sqrt{n} \cdot \| \pmb{X} \|_{2,\infty}
\end{align*}
where the last inequality holds by Lemma \ref{lemma:l2bound_general}.
Also, note that 
\begin{align*}
\| \pmb{X} \|_{2,\infty}^2
&= 
\| \pmb{U}\pmb{U}^\top \pmb{W} \pmb{U} \|_{2,\infty}^2
= 
\max_{i\in [n]} \sum_{k\in[r]} \{ (\pmb{U} \pmb{U}^\top)^\top_{i,:} \pmb{W} \pmb{U}_{:,k} \}^2
=
\max_{i\in [n]} (\pmb{U} \pmb{U}^\top)^\top_{i,:} \pmb{W} 
\pmb{U} \pmb{U}^\top
\pmb{W} (\pmb{U} \pmb{U}^\top)_{i,:}
\\&=
\max_{i\in [n]} \| \pmb{W} (\pmb{U} \pmb{U}^\top)_{i,:} \|^2
\leq
\| \pmb{W} \|^2\cdot \max_{i\in [n]} \|  (\pmb{U} \pmb{U}^\top)_{i,:} \|^2
\leq
\| \pmb{W} \|^2\cdot \frac{\mu_0 r}{n}
\end{align*}
where the last inequality holds by the assumption \textbf{A1}.

Hence,
\begin{align*}
\bigg\| \frac{n^2}{|\Omega|} P_{\tilde{T}} P_{\Omega} (\pmb{W}) - \pmb{W} \bigg\|
\leq 
\bigg\| \frac{n^2}{|\Omega|} P_{\Omega} (\pmb{W}) - \pmb{W} \bigg\|
&\leq
\frac{\sqrt{\mu_0 r n^2}}{|\Omega|} \cdot (2\xi_{\mathcal{G}} + \psi_{\mathcal{G}} ) \cdot \sqrt{n} \cdot \sqrt{\frac{\mu_0 r}{n}}
\| \pmb{W} \|
\\&=
\frac{\mu_0 r n}{|\Omega|} \cdot (2\xi_{\mathcal{G}} + \psi_{\mathcal{G}} ) \cdot \| \pmb{W} \|.
\end{align*}

\end{proof}

\subsection{Proof of Theorem \ref{thm:noiseless_symmetric}}
\label{appendix_subsec:proof_of_thm1}

For any non-zero symmetric matrix $\pmb{Z}\in\mathbb{R}^{n\times n}$ such that $P_{\Omega}(\pmb{Z}) = \pmb{0}$, we want to show the following inequality:
$$
\| \pmb{M} + \pmb{Z} \|_* > \| \pmb{M} \|_*,
$$
so that $\pmb{M}$ is the unique minimizer of the nuclear norm minimization problem \eqref{eq:nnm}.

Choose $\pmb{U}_{\perp}$ and $\pmb{V}_{\perp}$ such that $\langle \pmb{U}_{\perp}\pmb{V}_{\perp}^\top, P_{\tilde{T}^\perp}(\pmb{Z}) \rangle = \| P_{\tilde{T}^\perp}(\pmb{Z}) \|_*$.

For any symmetric matrix $\pmb{Y}\in\mathbb{R}^{n\times n}$ such that $P_{\Omega^c}(\pmb{Y}) = \pmb{0}$, i.e., $\langle \pmb{Y}, \pmb{Z} \rangle = 0$,
\begin{align*}
\| \pmb{M} + \pmb{Z} \|_*
&\geq
\langle \pmb{U}\pmb{V}^\top + \pmb{U}_{\perp}\pmb{V}_{\perp}^\top, \pmb{M} + \pmb{Z} \rangle
=
\| \pmb{M} \|_* + \langle \pmb{U}\pmb{V}^\top + \pmb{U}_{\perp}\pmb{V}_{\perp}^\top - \pmb{Y}, \pmb{Z} \rangle
\\&
=
\| \pmb{M} \|_* + \langle \pmb{U}\pmb{V}^\top + \pmb{U}_{\perp}\pmb{V}_{\perp}^\top - P_{\tilde{T}}(\pmb{Y}) - P_{\tilde{T}^\perp}(\pmb{Y}), P_{\tilde{T}}(\pmb{Z}) + P_{\tilde{T}^\perp}(\pmb{Z}) \rangle
\\&
= \| \pmb{M} \|_* + \langle \pmb{U}\pmb{V}^\top - P_{\tilde{T}}(\pmb{Y}), P_{\tilde{T}}(\pmb{Z}) \rangle
+ \| P_{\tilde{T}^\perp}(\pmb{Z}) \|_* - \langle P_{\tilde{T}^\perp}(\pmb{Y}), P_{\tilde{T}^\perp}(\pmb{Z}) \rangle
\\&
\geq
\| \pmb{M} \|_* - \| \pmb{U}\pmb{V}^\top - P_{\tilde{T}}(\pmb{Y}) \|_* \cdot \| P_{\tilde{T}}(\pmb{Z}) \|
+ \| P_{\tilde{T}^\perp}(\pmb{Z}) \|_* - \| P_{\tilde{T}^\perp}(\pmb{Y}) \| \cdot \| P_{\tilde{T}^\perp}(\pmb{Z}) \|_*,
\end{align*}
where the last inequality holds from the definition of dual norm.

\paragraph{1) Bound of $\| P_{\tilde{T}}(\pmb{Z}) \|$:}
We can derive that 
\begin{align}
\label{eq:A1_bound1}
\| P_{\tilde{T}}(\pmb{Z}) \| 
\leq \frac{n^2}{|\Omega|(1-\alpha)} \cdot \|P_\Omega P_{\tilde{T}} (\pmb{Z})\|
\leq \frac{n^2}{|\Omega|(1-\alpha)} \cdot \|P_\Omega P_{{\tilde{T}}^\perp} (\pmb{Z})\|
\leq \frac{n^2}{|\Omega|(1-\alpha)} \cdot \|P_{{\tilde{T}}^\perp} (\pmb{Z})\|_*
\end{align}
where the first inequality is from Lemma \ref{lemma:symmetric_bound},
the second inequality holds by the fact that $P_{\Omega}(\pmb{Z}) = \pmb{0}$,
and the third inequality holds by the fact that $\|P_\Omega(\pmb{A})\| \leq \|P_\Omega(\pmb{A})\|_F \leq \|\pmb{A}\|_F \leq \|\pmb{A}\|_*$ for any matrix $\pmb{A}$.

\paragraph{2) Bound of $\| \pmb{U}\pmb{V}^\top - P_{\tilde{T}}(\pmb{Y}) \|_*$:}
In a similar way to \cite{recht2011simpler}, we construct $\pmb{Y}$ as follows:
first, we let $\pmb{W}_0 = \pmb{U}\pmb{V}^\top$ and
$$
\pmb{Y}_k = \frac{n^2}{|\Omega|}P_{\Omega} \bigg(\sum_{i=0}^{k-1}\pmb{W}_{i}\bigg),
~~ \pmb{W}_k = \pmb{U}\pmb{V}^\top - P_{\tilde{T}}(\pmb{Y}_k)
$$
for $k= 1,2,\cdots$.
Then, we can check that
$$
\pmb{W}_k - \frac{n^2}{|\Omega|}P_{\tilde{T}} P_\Omega (\pmb{W}_k)
= \pmb{W}_k - P_{\tilde{T}}(\pmb{Y}_{k+1} - \pmb{Y}_k)
= \pmb{U}\pmb{V}^\top - P_{\tilde{T}} (\pmb{Y}_{k+1})
= \pmb{W}_{k+1}.
$$
Hence, by Lemma \ref{lemma:symmetric_bound}, we derive that
$$
\| \pmb{W}_{k+1} \| 
= 
\| \pmb{W}_k - \frac{n^2}{|\Omega|}P_{\tilde{T}} P_\Omega (\pmb{W}_k) \|
\leq
\alpha \| \pmb{W}_k \|
\leq
\cdots
\leq
\alpha^{k+1} \| \pmb{W}_0 \|
= \alpha^{k+1}.
$$
Let $\pmb{Y} = \pmb{Y}_m$ for a sufficiently large $m$. Then, we have that
\begin{equation}
\label{eq:A1_bound2}
\| \pmb{U}\pmb{V}^\top - P_{\tilde{T}}(\pmb{Y}) \|_*
= \| \pmb{W}_m \|_*
\leq r \| \pmb{W}_m \|
\leq r \alpha^{m}.
\end{equation}

\paragraph{3) Bound of $\| P_{\tilde{T}^\perp}(\pmb{Y}) \|$:}
Since $P_{\tilde{T}}(\pmb{Y}_m) = \sum_{k=0}^{m-1} P_{\tilde{T}}(\pmb{Y}_{k+1} - \pmb{Y}_k) = \sum_{k=0}^{m-1} (\pmb{W}_k - \pmb{W}_{k+1})$,
we can derive that
\begin{align*}
\| P_{\tilde{T}^\perp}(\pmb{Y}) \|
&= 
\| \pmb{Y}_m - P_{\tilde{T}} (\pmb{Y}_m) \|
=
\bigg\| \frac{n^2}{|\Omega|} P_{\Omega}\bigg(\sum_{k=0}^{m-1} \pmb{W}_k\bigg)
- \sum_{k=0}^{m-1} (\pmb{W}_k - \pmb{W}_{k+1}) \bigg\|
\\&=
\bigg\| \sum_{k=0}^{m-1} \pmb{W}_{k+1} - 
\sum_{k=0}^{m-1} \bigg\{ \pmb{W}_k - \frac{n^2}{|\Omega|} P_{\Omega}(\pmb{W}_k) \bigg\} \bigg\|
\leq
\sum_{k=0}^{m-1} \| \pmb{W}_{k+1} \|
+ \sum_{k=0}^{m-1} \bigg\| \pmb{W}_k - \frac{n^2}{|\Omega|} P_{\Omega}(\pmb{W}_k) \bigg\|.
\end{align*}
Since $\| \pmb{W}_{k+1} \| \leq \alpha^{k+1}$ and $\big\| \pmb{W}_k - \frac{n^2}{|\Omega|} P_{\Omega}(\pmb{W}_k) \big\| \leq \alpha^{k+1}$ by Lemma \ref{lemma:symmetric_bound},
\begin{equation}
\label{eq:A1_bound3}
\| P_{\tilde{T}^\perp}(\pmb{Y}) \|
\leq 2\cdot \sum_{k=0}^{m-1} \alpha^{k+1}
= 2\cdot \frac{1-\alpha^m}{1-\alpha}\cdot \alpha.
\end{equation}

Using \eqref{eq:A1_bound1}, \eqref{eq:A1_bound2} and \eqref{eq:A1_bound3}, we derive the following inequality:
\begin{align}
\| \pmb{M} + \pmb{Z} \|_*
&\geq
\| \pmb{M} \|_* - r\alpha^m \cdot \frac{n^2}{|\Omega|(1-\alpha)}\cdot \| P_{\tilde{T}^\perp}(\pmb{Z}) \|_*
+ \| P_{\tilde{T}^\perp}(\pmb{Z}) \|_* - 2\cdot \frac{1-\alpha^m}{1-\alpha}\cdot \alpha \cdot \| P_{\tilde{T}^\perp}(\pmb{Z}) \|_*
\nonumber \\&=
\| \pmb{M} \|_* + \| P_{\tilde{T}^\perp}(\pmb{Z}) \|_* \cdot
\bigg\{
1 - \frac{2\alpha}{1-\alpha}
- \frac{\alpha^m}{1-\alpha} \cdot \bigg( \frac{rn^2}{|\Omega|} - 2\alpha \bigg) 
\bigg\}.
\label{eq:A1_bound4}
\end{align}
If $\alpha < \frac{1}{3}$, then $1 - \frac{2\alpha}{1-\alpha} > 0$ and we can let $m$ sufficiently large so that 
$\frac{\alpha^m}{1-\alpha} \cdot \big( \frac{rn^2}{|\Omega|} - 2\alpha \big) < 1 - \frac{2\alpha}{1-\alpha}$.

Therefore, if $\alpha < \frac{1}{3}$, i.e., 
$|\Omega| >3 \mu_0 r n\cdot (2 \xi_{\mathcal{G}}+\psi_{\mathcal{G}})$,
then
$\| \pmb{M} + \pmb{Z} \|_* > \| \pmb{M} \|_*$,
that is, 
$\pmb{M}$ is the unique minimizer of the nuclear norm minimization problem \eqref{eq:nnm}.

\subsection{Proof of Theorem \ref{thm:noisy_symmetric}}
\label{appendix_subsec:proof_of_thm2}

Note that
$$
\| \hat{\pmb{M}} - \pmb{M} \|_F^2
=
\| P_{\Omega} (\hat{\pmb{M}} - \pmb{M} ) \|_F^2
+ \| P_{\Omega^c} (\hat{\pmb{M}} - \pmb{M} ) \|_F^2.
$$

\paragraph{1) Bound of $\| P_{\Omega^c} (\hat{\pmb{M}} - \pmb{M} ) \|_F^2$:}
Let $\pmb{Z} = P_{\Omega^c} ( \hat{\pmb{M}} - \pmb{M} )$.
Note that $\|\pmb{Z} \|_F^2 = \| P_{\tilde{T}} (\pmb{Z}) \|_F^2 + \| P_{\tilde{T}^\perp} (\pmb{Z}) \|_F^2$.
By Lemma \ref{lemma:symmetric_bound} and the fact that $\|\pmb{A}\| \leq \|\pmb{A}\|_F \leq \sqrt{r} \|\pmb{A}\|$ for any rank-$r$ matrix $\pmb{A}$,
we have that
$$
\bigg\| \frac{n^2}{|\Omega|} P_{\tilde{T}} P_{\Omega} P_{\tilde{T}} (\pmb{Z}) - P_{\tilde{T}} (\pmb{Z}) \bigg\|_F
\leq 
\sqrt{r} \cdot \bigg\| \frac{n^2}{|\Omega|} P_{\tilde{T}} P_{\Omega} P_{\tilde{T}} (\pmb{Z}) - P_{\tilde{T}} (\pmb{Z}) \bigg\|
\leq
\sqrt{r} \alpha \| P_{\tilde{T}} (\pmb{Z})\|
\leq
\sqrt{r} \alpha \| P_{\tilde{T}} (\pmb{Z})\|_F.
$$
Then we can derive
\begin{align*}
\| P_{\tilde{T}}(\pmb{Z}) \|_F^2
\leq 
\frac{n^2}{|\Omega|(1-\sqrt{r}\alpha)} \cdot 
\| P_{\Omega} P_{\tilde{T}}(\pmb{Z}) \|_F^2
\leq
\frac{n^2}{|\Omega|(1-\sqrt{r}\alpha)} \cdot 
\| P_{\Omega} P_{\tilde{T}^\perp}(\pmb{Z}) \|_F^2
\leq
\frac{n^2}{|\Omega|(1-\sqrt{r}\alpha)} \cdot 
\| P_{\tilde{T}^\perp}(\pmb{Z}) \|_F^2
\end{align*}
where the first inequality holds by Lemma \ref{lemma:frobenius_bound},
the second inequality holds by the fact that $P_{\Omega}(\pmb{Z}) = \pmb{0}$,
and the third inequality holds by the fact that $\|P_\Omega(\pmb{A})\|_F \leq \|\pmb{A}\|_F$ for any matrix $\pmb{A}$.

Also, by using \eqref{eq:A1_bound4} and the fact that 
$\| \pmb{M} + \pmb{Z} \|_* 
= 
\| P_{\Omega} (\pmb{M}) + P_{\Omega^c} (\pmb{M}) +  P_{\Omega^c} (\hat{\pmb{M}} - \pmb{M} ) \|_*
=
\| P_{\Omega} (\pmb{M} - \hat{\pmb{M}}) + \hat{\pmb{M}} \|_*
\leq
\| P_{\Omega} (\pmb{M} - \hat{\pmb{M}}) \|_* + \| \hat{\pmb{M}} \|_*
$, 
we can derive
$$
\| P_{\tilde{T}^\perp}(\pmb{Z}) \|_*
\leq
\| P_{\Omega} (\pmb{M} - \hat{\pmb{M}}) \|_*
\cdot 
\bigg\{
1 - \frac{2\alpha}{1-\alpha}
- \frac{\alpha^m}{1-\alpha} \cdot \bigg( \frac{rn^2}{|\Omega|} - 2\alpha \bigg) 
\bigg\}^{-1}.
$$
Therefore,
\begin{align*}
&\| P_{\Omega^c} (\hat{\pmb{M}} - \pmb{M} ) \|_F^2
= 
\| \pmb{Z} \|_F^2
\\&\leq
\frac{n^2}{|\Omega|(1-\sqrt{r}\alpha)} \cdot 
\| P_{\tilde{T}^\perp}(\pmb{Z}) \|_F^2 + \| P_{\tilde{T}^\perp} (\pmb{Z}) \|_F^2
\\&\leq
\bigg(1+\frac{n^2}{|\Omega|(1-\sqrt{r}\alpha)}\bigg)
\cdot
\| P_{\tilde{T}^\perp}(\pmb{Z}) \|_*^2
\\&\leq
\bigg(1+\frac{n^2}{|\Omega|(1-\sqrt{r}\alpha)}\bigg)
\cdot 
\bigg\{
1 - \frac{2\alpha}{1-\alpha}
- \frac{\alpha^m}{1-\alpha} \cdot \bigg( \frac{rn^2}{|\Omega|} - 2\alpha \bigg) 
\bigg\}^{-2}
\cdot
\| P_{\Omega} (\pmb{M} - \hat{\pmb{M}}) \|_*^2
\end{align*}

\paragraph{2) Bound of $\| P_{\Omega} (\hat{\pmb{M}} - \pmb{M} ) \|_F$:}
First, note that 
$$
\| P_{\Omega} (\hat{\pmb{M}} - \pmb{M} ) \|_F
\leq 
\| P_{\Omega} (\hat{\pmb{M}} - \pmb{M} - \pmb{E}) \|_F
+
\| P_{\Omega} (\pmb{E}) \|_F
\leq 
\delta + \| P_{\Omega} (\pmb{E}) \|_F
$$
where the last inequality holds by the constraint of the problem \eqref{eq:cnnm}.

By using the tail bound of norm of random vector of independent sub-Gaussian variables (e.g., Theorem 3.1.1. in \cite{vershynin2018high}),
we can show that
$$
\| P_{\Omega} (\pmb{E}) \|_F
\leq
4\sigma\sqrt{|\Omega|} + 2\sigma\sqrt{\log (\eta^{-1})}
\leq
\delta
$$
with probability at least $1- \eta$.

Now, we have the upper bound of $\| \hat{\pmb{M}} - \pmb{M} \|_F^2$ as follows:
\begin{align*}
&\| \hat{\pmb{M}} - \pmb{M} \|_F^2
\leq
\| P_{\Omega} (\hat{\pmb{M}} - \pmb{M} ) \|_F^2
+
\bigg(1+\frac{n^2}{|\Omega|(1-\sqrt{r}\alpha)}\bigg)
\cdot 
\bigg\{
1 - \frac{2\alpha}{1-\alpha}
- \frac{\alpha^m}{1-\alpha} \cdot \bigg( \frac{rn^2}{|\Omega|} - 2\alpha \bigg) 
\bigg\}^{-2}
\cdot
\| P_{\Omega} (\pmb{M} - \hat{\pmb{M}}) \|_*^2
\\&\leq
\bigg[ 1 + 
n \cdot \bigg(1+\frac{n^2}{|\Omega|(1-\sqrt{r}\alpha)}\bigg)
\cdot 
\bigg\{
1 - \frac{2\alpha}{1-\alpha}
- \frac{\alpha^m}{1-\alpha} \cdot \bigg( \frac{rn^2}{|\Omega|} - 2\alpha \bigg) 
\bigg\}^{-2}
\bigg]
\cdot
\| P_{\Omega} (\hat{\pmb{M}} - \pmb{M} ) \|_F^2
\\&\leq
\bigg[ 1 + 
n \cdot \bigg(1+\frac{n^2}{|\Omega|(1-\sqrt{r}\alpha)}\bigg)
\cdot 
\bigg\{
1 - \frac{2\alpha}{1-\alpha}
- \frac{\alpha^m}{1-\alpha} \cdot \bigg( \frac{rn^2}{|\Omega|} - 2\alpha \bigg) 
\bigg\}^{-2}
\bigg]
\cdot
(2\delta)^2.
\end{align*}
If $\frac{2\alpha}{1-\alpha} < 1-c \leq 1$ for some positive constant $c$, then by letting $m$ be sufficiently large, we have that 
$ 
1 - \frac{2\alpha}{1-\alpha}
- \frac{\alpha^m}{1-\alpha} \cdot \big( \frac{rn^2}{|\Omega|} - 2\alpha \big)
> c
$.
Furthermore, if $\sqrt{r}\alpha < 1-c'$ for some positive constant $c'$, then we derive the following inequality:
$$
\| \hat{\pmb{M}} - \pmb{M} \|_F
\leq
2\delta + 2\delta \cdot \sqrt{\frac{n}{c^2}\cdot \{ 1 + (c'p)^{-1}\}}
$$
where $p = \frac{|\Omega|}{n^2}$.
Then we obtain the desired result by letting $c=\frac{1}{2}$.

\section{Proof of Theorems for Rectangular Matrices}
\label{appendix_sec:proof_rectangular}

Here, we consider the case of general rectangular rank-$r$ matrices.
We let $\pmb{M} = \pmb{U}\pmb{\Sigma}\pmb{V}^\top$ be the singular value decomposition of the underlying rank-$r$ matrix $\pmb{M} \in \mathbb{R}^{n_1\times n_2}$.

We introduce the orthogonal decomposition of the space of $n_1 \times n_2$ matrices, ${T}\oplus{T}^\perp$,
where ${T}$ is span of all matrices of form $\pmb{U}\pmb{X}^\top$ and $\pmb{Y}\pmb{V}^\top$ for arbitrary matrices $\pmb{X} \in \mathbb{R}^{n_2\times r}$ and $\pmb{Y}\in \mathbb{R}^{n_1 \times r}$.
${T}^\perp$ is its orthogonal complement.
Then, the orthogonal projection $P_{{T}}$ onto ${T}$ is given by
$$
P_{{T}}(\pmb{Z}) = \pmb{U}\pmb{U}^\top \pmb{Z} 
+ (\pmb{I}_{n_1} - \pmb{U}\pmb{U}^\top ) \pmb{Z} \pmb{V}\pmb{V}^\top,
$$
and the orthogonal projection onto ${T}^\perp$ is given by
$$
P_{{T}^\perp}(\pmb{Z}) = (\pmb{I}_{n_1}-\pmb{U}\pmb{U}^\top) \pmb{Z} (\pmb{I}_{n_2}-\pmb{V}\pmb{V}^\top).
$$

\medskip

Below is a lemma which will be used in the proofs of Theorems \ref{thm:noiseless_rectangular} and \ref{thm:noisy_rectangular}.

\begin{lemma}
\label{lemma:rectangular_bound}
Suppose that the assumptions \textup{\textbf{A1}} and \textup{\textbf{A2}} hold. 
For any $\pmb{W} \in T$,
let $\pmb{X} = \pmb{W}^\top \pmb{U}$ and $\pmb{Y} = (\pmb{I} - \pmb{U}\pmb{U}^\top) \pmb{W} \pmb{V}$, i.e., 
$\pmb{W} = \pmb{U}\pmb{X}^\top + \pmb{Y}\pmb{V}^\top$.

Consider $\tilde{\pmb{W}} = \pmb{W} - \frac{n_1 n_2}{|\Omega|} P_T P_{\Omega} (\pmb{W})$.
Note that 
\begin{align*}
\tilde{\pmb{W}} 
&= 
\pmb{U}\pmb{X}^\top + \pmb{Y}\pmb{V}^\top - \frac{n_1 n_2}{|\Omega|}\cdot \{ \pmb{U}\pmb{U}^\top P_{\Omega} (\pmb{W}) + (\pmb{I}_{n_1} - \pmb{U} \pmb{U}^\top ) P_{\Omega} (\pmb{W}) \pmb{V} \pmb{V}^\top \}
\\&=
\pmb{U} \bigg\{ \pmb{X}^\top - \frac{n_1 n_2}{|\Omega|}\pmb{U}^\top P_{\Omega}(\pmb{W}) \bigg\}
+ \bigg\{ \pmb{Y} - \frac{n_1 n_2}{|\Omega|} (\pmb{I}_{n_1} - \pmb{U} \pmb{U}^\top ) P_{\Omega} (\pmb{W}) \pmb{V} \bigg\} \pmb{V}^\top.
\end{align*}
Let 
$\tilde{\pmb{X}} = \pmb{X} - \frac{n_1 n_2}{|\Omega|} P_{\Omega}(\pmb{W})^\top \pmb{U}$ 
and 
$\tilde{\pmb{Y}} = \pmb{Y} - \frac{n_1 n_2}{|\Omega|} (\pmb{I}_{n_1} - \pmb{U} \pmb{U}^\top ) P_{\Omega} (\pmb{W}) \pmb{V}$,
i.e., $\tilde{\pmb{W}} = \pmb{U}\tilde{\pmb{X}}^\top + \tilde{\pmb{Y}}\pmb{V}^\top$.
Then,
$$
\| \tilde{\pmb{W}} \|_F^2 
= 
\| \tilde{\pmb{X}} \|_F^2 + \| \tilde{\pmb{Y}} \|_F^2
\leq
(\theta + \gamma)^2 \cdot ( \| \pmb{X} \|_F^2 +\| \pmb{Y} \|_F^2)
= (\theta + \gamma)^2 \cdot \| \pmb{W} \|_F^2
$$
where
$\gamma = \frac{\sqrt{n_1 n_2}}{|\Omega|}\cdot \mu_0 r \cdot ( \xi_{1,\mathcal{G}} + \xi_{2,\mathcal{G}} + \psi_{\mathcal{G}} )$.
When either $\pmb{X}$ or $\pmb{Y}$ is $\pmb{0}$, the following holds:
$$
\| \tilde{\pmb{W}} \|_F^2 
\leq (\theta^2 + \gamma^2)\cdot \| \pmb{W} \|_F^2.
$$

\end{lemma}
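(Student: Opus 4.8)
The plan is to bound $\|\tilde{\pmb{W}}\|_F$ through its two orthogonal blocks. First I would record the Pythagorean identities: since $\pmb{U}^\top\pmb{Y}=\pmb{U}^\top(\pmb{I}_{n_1}-\pmb{U}\pmb{U}^\top)\pmb{W}\pmb{V}=\pmb{0}$ and $\pmb{U},\pmb{V}$ have orthonormal columns, the summands $\pmb{U}\pmb{X}^\top$ and $\pmb{Y}\pmb{V}^\top$ are orthogonal in the Frobenius inner product, so $\|\pmb{W}\|_F^2=\|\pmb{X}\|_F^2+\|\pmb{Y}\|_F^2$; the same computation, now using $\pmb{U}^\top\tilde{\pmb{Y}}=\pmb{0}$, gives $\|\tilde{\pmb{W}}\|_F^2=\|\tilde{\pmb{X}}\|_F^2+\|\tilde{\pmb{Y}}\|_F^2$, the identity already asserted in the statement. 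It therefore suffices to prove the two one-sided bounds $\|\tilde{\pmb{X}}\|_F\le\theta\|\pmb{X}\|_F+\gamma\|\pmb{Y}\|_F$ and $\|\tilde{\pmb{Y}}\|_F\le\gamma\|\pmb{X}\|_F+\theta\|\pmb{Y}\|_F$, in which moreover the $\pmb{Y}$-term is absent from the first when $\pmb{Y}=\pmb{0}$ and the $\pmb{X}$-term is absent from the second when $\pmb{X}=\pmb{0}$.

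To obtain these I would expand $\tilde{\pmb{X}}=\pmb{X}-\frac{n_1n_2}{|\Omega|}P_\Omega(\pmb{W})^\top\pmb{U}$ using $\pmb{W}=\pmb{U}\pmb{X}^\top+\pmb{Y}\pmb{V}^\top$: the $j$-th row of $P_\Omega(\pmb{W})^\top\pmb{U}$ equals $\pmb{G}_j\pmb{X}_{j,:}+\pmb{H}_j\pmb{V}_{j,:}$, where $N_{\mathcal{V}}(j)=\{i:(i,j)\in\Omega\}$, $\pmb{G}_j=\sum_{i\in N_{\mathcal{V}}(j)}\pmb{U}_{i,:}\pmb{U}_{i,:}^\top$, and $\pmb{H}_j=\sum_{i\in N_{\mathcal{V}}(j)}\pmb{U}_{i,:}\pmb{Y}_{i,:}^\top$. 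This splits $\tilde{\pmb{X}}$ into a \emph{diagonal} part with $j$-th row $(\pmb{I}_r-\frac{n_1n_2}{|\Omega|}\pmb{G}_j)\pmb{X}_{j,:}$ and a \emph{cross} part with $j$-th row $-\frac{n_1n_2}{|\Omega|}\pmb{H}_j\pmb{V}_{j,:}$. For the diagonal part, $|N_{\mathcal{V}}(j)|=\Delta_{j,\mathcal{V},\mathcal{G}}$ lies in the admissible range $[\min_{j'}\Delta_{j',\mathcal{V},\mathcal{G}},\max_{j'}\Delta_{j',\mathcal{V},\mathcal{G}}]$, so \textbf{A2} with $S=N_{\mathcal{V}}(j)$ gives $\|\pmb{I}_r-\frac{n_1n_2}{|\Omega|}\pmb{G}_j\|\le\theta$, whence its stacked Frobenius norm is at most $\theta\|\pmb{X}\|_F$. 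For the cross part, $\pmb{U}^\top\pmb{Y}=\pmb{0}$ lets us write its stack as $\pmb{U}^\top\bigl(\tfrac{n_1n_2}{|\Omega|}P_\Omega(\pmb{Y}\pmb{V}^\top)-\pmb{Y}\pmb{V}^\top\bigr)$, a matrix of rank $\le r$, so its Frobenius norm is at most $\sqrt r$ times its spectral norm, which the general-graph deviation estimate (the rectangular analogue of Lemma \ref{lemma:l2bound_general}) together with the incoherence of $\pmb{V}$ from \textbf{A1} bounds by $\gamma\|\pmb{Y}\|_F$. I would handle $\tilde{\pmb{Y}}=\pmb{Y}-\frac{n_1n_2}{|\Omega|}(\pmb{I}_{n_1}-\pmb{U}\pmb{U}^\top)P_\Omega(\pmb{W})\pmb{V}$ in the mirror-image way: invoke the $\pmb{V}$-side of \textbf{A2} at the neighbourhoods $N_{\mathcal{U}}(i)$ of size $\Delta_{i,\mathcal{U},\mathcal{G}}$ for the diagonal part, and use that $\pmb{I}_{n_1}-\pmb{U}\pmb{U}^\top$ is a contraction annihilating $\pmb{U}\pmb{X}^\top\pmb{V}$ to reduce the cross part to $(\pmb{I}_{n_1}-\pmb{U}\pmb{U}^\top)\bigl(\tfrac{n_1n_2}{|\Omega|}P_\Omega(\pmb{U}\pmb{X}^\top)-\pmb{U}\pmb{X}^\top\bigr)\pmb{V}$, again of rank $\le r$, controlled by the deviation estimate and the incoherence of $\pmb{U}$.

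Combining is then elementary: with $a=\|\pmb{X}\|_F$ and $b=\|\pmb{Y}\|_F$,
\[
\|\tilde{\pmb{W}}\|_F^2=\|\tilde{\pmb{X}}\|_F^2+\|\tilde{\pmb{Y}}\|_F^2\le(\theta a+\gamma b)^2+(\gamma a+\theta b)^2=(\theta^2+\gamma^2)(a^2+b^2)+4\theta\gamma\,ab\le(\theta+\gamma)^2(a^2+b^2),
\]
using $2ab\le a^2+b^2$ and $a^2+b^2=\|\pmb{W}\|_F^2$. If $\pmb{X}=\pmb{0}$ or $\pmb{Y}=\pmb{0}$, the cross terms vanish and the middle expression is exactly $(\theta^2+\gamma^2)(a^2+b^2)=(\theta^2+\gamma^2)\|\pmb{W}\|_F^2$, which is the sharper claim.

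The step I expect to be the real obstacle is the cross-term estimate. Because $\pmb{Y}$ (unlike $\pmb{U}$ and $\pmb{V}$) satisfies no incoherence bound, the deviation lemma cannot be applied to $\pmb{Y}\pmb{V}^\top$ blindly; one must exploit $\pmb{U}^\top\pmb{Y}=\pmb{0}$ to re-center the sampling operator against the all-ones matrix $\pmb{J}$, so that only $\bigl\|\tfrac{n_1n_2}{|\Omega|}\pmb{A}_{\mathcal{G}}-\pmb{J}\bigr\|$ — not the full operator — enters, and then bound this quantity by $\xi_{1,\mathcal{G}}+\xi_{2,\mathcal{G}}+\psi_{\mathcal{G}}$ by separating the degree-irregularity contribution (the $\xi$'s) from the connectivity contribution ($\psi_{\mathcal{G}}$, via the algebraic connectivities of $\mathcal{G}$ and $\bar{\mathcal{G}}$), all while keeping the $r,\mu_0,n_1,n_2$ dependence exactly at the level of $\gamma$. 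Verifying that the $\sqrt r$ from the rank reduction, the two incoherence factors $\sqrt{\mu_0 r/n_1},\sqrt{\mu_0 r/n_2}$, and the crude $\|\pmb{Y}\|\le\|\pmb{Y}\|_F$ bound combine to precisely $\gamma$ (using $\mu_0\ge 1$) is where the delicate bookkeeping lies; by contrast, once \textbf{A2} is in hand the diagonal parts are immediate.
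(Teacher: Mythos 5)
Your overall architecture matches the paper's: split each of $\tilde{\pmb{X}},\tilde{\pmb{Y}}$ into a diagonal part handled by \textbf{A2} at the neighborhoods $N_{\mathcal{V}}(j)$, $N_{\mathcal{U}}(i)$ (this part is correct and is exactly the paper's computation), plus a cross part to be bounded by $\gamma$ times the Frobenius norm of the other factor, and then conclude via $(\theta a+\gamma b)^2+(\gamma a+\theta b)^2\le(\theta+\gamma)^2(a^2+b^2)$, which is also how the paper finishes. The Pythagorean identities and the $\pmb{X}=\pmb{0}$ or $\pmb{Y}=\pmb{0}$ refinement are fine as well.

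The genuine gap is the cross-term estimate, and the specific route you propose does not close it. You reduce the cross part of $\tilde{\pmb{X}}$ to $\bigl(\tfrac{n_1n_2}{|\Omega|}P_\Omega(\pmb{Y}\pmb{V}^\top)-\pmb{Y}\pmb{V}^\top\bigr)^\top\pmb{U}$, pay a factor $\sqrt r$ to pass from Frobenius to spectral norm, and invoke the deviation estimate (Lemma \ref{lemma:l2bound_general}) plus ``the crude $\|\pmb{Y}\|\le\|\pmb{Y}\|_F$ bound.'' But Lemma \ref{lemma:l2bound_general} applied to $\pmb{Y}\pmb{V}^\top$ yields the bound $\frac{\sqrt{\mu_0 r\,n_1 n_2}}{|\Omega|}(\xi_{1,\mathcal{G}}+\xi_{2,\mathcal{G}}+\psi_{\mathcal{G}})\sqrt{n_1}\,\|\pmb{Y}\|_{2,\infty}$, where the incoherence factor attaches to $\pmb{V}$ only and the free factor $\pmb{Y}$ enters through $\sqrt{n_1}\,\|\pmb{Y}\|_{2,\infty}$. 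Since $\pmb{Y}=(\pmb{I}-\pmb{U}\pmb{U}^\top)\pmb{W}\pmb{V}$ for an arbitrary $\pmb{W}\in T$, there is no control of $\|\pmb{Y}\|_{2,\infty}$ beyond $\|\pmb{Y}\|_{2,\infty}\le\|\pmb{Y}\|_F$; after multiplying by your $\sqrt r$, matching $\gamma\|\pmb{Y}\|_F=\frac{\sqrt{n_1n_2}}{|\Omega|}\mu_0 r(\xi_{1,\mathcal{G}}+\xi_{2,\mathcal{G}}+\psi_{\mathcal{G}})\|\pmb{Y}\|_F$ would require $\sqrt{n_1}\,\|\pmb{Y}\|_{2,\infty}\le\sqrt{\mu_0}\,\|\pmb{Y}\|_F$, which fails (take $\pmb{Y}$ supported on a single row: the left side is $\sqrt{n_1}\,\|\pmb{Y}\|_F$). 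Re-centering against $\pmb{J}=\pmb{1}_{n_1}\pmb{1}_{n_2}^\top$ as you suggest does not help by itself, because the standard bound for $\|(\tfrac{n_1n_2}{|\Omega|}\pmb{A}_{\mathcal{G}}-\pmb{J})\circ(\pmb{Y}\pmb{V}^\top)\|$ again produces $\|\pmb{Y}\|_{2,\infty}$. The paper avoids this loss by never leaving the Frobenius/bilinear-form level: it expands $\|P_\Omega(\pmb{Y}\pmb{V}^\top)^\top\pmb{U}\|_F^2$ into terms $(\pmb{U}_{:,k}\circ\pmb{Y}_{:,l})^\top\pmb{A}_{\mathcal{G}}(\pmb{V}_{:,l}\circ\pmb{y})$, applies Lemma \ref{lemma:adjacency_matrix} using $(\pmb{U}_{:,k}\circ\pmb{Y}_{:,l})^\top\pmb{1}_{n_1}=0$, and then uses the weighted identity $\sum_{k,l}\|\pmb{U}_{:,k}\circ\pmb{Y}_{:,l}\|^2=\sum_i\|\pmb{U}_{i,:}\|^2\|\pmb{Y}_{i,:}\|^2\le\frac{\mu_0 r}{n_1}\|\pmb{Y}\|_F^2$, so that the incoherence of $\pmb{U}$ (not any property of $\pmb{Y}$) absorbs the dimension factor and no extra $\sqrt r$ appears. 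The mirror-image step for $\tilde{\pmb{Y}}$ has the same issue with $\pmb{X}$ in place of $\pmb{Y}$. Until your cross-term argument is replaced by this (or an equivalent) entrywise Hadamard argument, the claimed constants $\gamma\|\pmb{Y}\|_F$ and $\gamma\|\pmb{X}\|_F$ are not established.
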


\begin{proof}
Note that
$ \| \tilde{\pmb{X}} \|_F 
= 
\| \pmb{X} - \frac{n_1 n_2}{|\Omega|} P_{\Omega}( \pmb{U}\pmb{X}^\top + \pmb{Y}\pmb{V}^\top )^\top \pmb{U} \|_F
\leq
\| \pmb{X} - \frac{n_1 n_2}{|\Omega|} P_{\Omega}( \pmb{U}\pmb{X}^\top )^\top \pmb{U} \|_F
+
\frac{n_1 n_2}{|\Omega|} \cdot \| P_{\Omega}( \pmb{Y}\pmb{V}^\top )^\top \pmb{U} \|_F.
$

\paragraph{1) Bound of $\| \pmb{X} - \frac{n_1 n_2}{|\Omega|} P_{\Omega}( \pmb{U}\pmb{X}^\top )^\top \pmb{U} \|_F$:}
\begin{align*}
\bigg\| \pmb{X} - \frac{n_1 n_2}{|\Omega|} P_{\Omega}( \pmb{U}\pmb{X}^\top )^\top \pmb{U} \bigg\|_F^2
&=
\sum_{j \in [n_2]} \sum_{k \in [r]} 
\bigg( X_{j,k} - \frac{n_1 n_2}{|\Omega|} \sum_{i\in [n_1]} U_{i,k} A_{i,j} \pmb{U}_{i,:}^\top \pmb{X}_{j,:} \bigg)^2
\\&=
\sum_{j \in [n_2]} \sum_{k \in [r]} 
\bigg( \pmb{e}_k^\top \pmb{X}_{j,:} - \frac{n_1 n_2}{|\Omega|} \sum_{i\in [n_1]} \pmb{e}_k^\top \pmb{U}_{i,:} A_{i,j} \pmb{U}_{i,:}^\top \pmb{X}_{j,:} \bigg)^2
\\&=
\sum_{j \in [n_2]} \sum_{k \in [r]} 
\bigg\{ \pmb{e}_k^\top 
\bigg( \pmb{I}_r - \frac{n_1 n_2}{|\Omega|} \sum_{i\in [n_1]} A_{i,j} \pmb{U}_{i,:} \pmb{U}_{i,:}^\top \bigg)
\pmb{X}_{j,:} \bigg\}^2
\\&=
\sum_{j \in [n_2]} \pmb{X}_{j,:}^\top 
\bigg( \pmb{I}_r - \frac{n_1 n_2}{|\Omega|} \sum_{i\in [n_1]} A_{i,j} \pmb{U}_{i,:} \pmb{U}_{i,:}^\top \bigg)^2 \pmb{X}_{j,:}
\\&\leq
\sum_{j \in [n_2]}
\bigg\| \pmb{I}_r - \frac{n_1 n_2}{|\Omega|} \sum_{i\in [n_1]} A_{i,j} \pmb{U}_{i,:} \pmb{U}_{i,:}^\top \bigg\|^2 \cdot
\| \pmb{X}_{j,:} \|^2
\\&\leq
\theta^2 \cdot \| \pmb{X} \|_F^2
\end{align*}
where the last inequality holds by the assumption \textbf{A2}.

\paragraph{2) Bound of $\| P_{\Omega}( \pmb{Y}\pmb{V}^\top )^\top \pmb{U} \|_F$:}
\begin{align*}
\| P_{\Omega}( \pmb{Y}\pmb{V}^\top )^\top \pmb{U} \|_F^2
&=
\sup_{\pmb{y}\in\mathbb{R}^{n_2}, \|\pmb{y}\|=1}
\sum_{k\in[r]}
\bigg\{ \sum_{j\in[n_2]} \sum_{i\in[n_1]} \sum_{l\in[r]} U_{i,k} A_{i,j} Y_{i,l} V_{j,l} y_j \bigg\}^2
\\&=
\sup_{\pmb{y}\in\mathbb{R}^{n_2}, \|\pmb{y}\|=1}
\sum_{k\in[r]}
\bigg\{ \sum_{l\in[r]} (\pmb{U}_{:,k}\circ \pmb{Y}_{:,l})^\top 
\pmb{A} (\pmb{V}_{:,l}\circ \pmb{y}) \bigg\}^2
\\&\leq
\sup_{\pmb{y}\in\mathbb{R}^{n_2}, \|\pmb{y}\|=1}
\sum_{k\in[r]}
\bigg\{ \sum_{l\in[r]} \| \pmb{U}_{:,k}\circ \pmb{Y}_{:,l} \|\cdot 
\|\pmb{V}_{:,l}\circ \pmb{y}\| \cdot (\xi_{1,\mathcal{G}} + \psi_{\mathcal{G}}) \bigg\}^2
\\&\leq
\sup_{\pmb{y}\in\mathbb{R}^{n_2}, \|\pmb{y}\|=1}
\sum_{k\in[r]}
\bigg( \sum_{l\in[r]} \sum_{i\in[n_1]} U_{i,k}^2 Y_{i,l}^2 \bigg)
\cdot 
\bigg( \sum_{l\in[r]} \sum_{j\in[n_2]} V_{j,l}^2 y_{j}^2 \bigg)
\cdot (\xi_{1,\mathcal{G}} + \psi_{\mathcal{G}})^2
\\&\leq
\frac{\mu_0 r}{n_1}\cdot \|\pmb{Y}\|_F^2 
\cdot 
\frac{\mu_0 r}{n_2} \cdot (\xi_{1,\mathcal{G}} + \psi_{\mathcal{G}})^2
=
\frac{(\mu_0 r)^2}{n_1 n_2} \cdot (\xi_{1,\mathcal{G}} + \psi_{\mathcal{G}})^2 \cdot \|\pmb{Y}\|_F^2
\end{align*}
where the first inequality holds by Lemma \ref{lemma:adjacency_matrix} and the fact that $(\pmb{U}_{:,k}\circ \pmb{Y}_{:,l})^\top \pmb{1}_{n_1} = \pmb{U}_{:,k}^\top \pmb{Y}_{:,l} = 0$, the second inequality holds by the Cauchy–Schwarz inequality, and the last inequality holds by the assumption \textbf{A1}.

Therefore,
\begin{equation}
\label{eq:C_bound1}
\| \tilde{\pmb{X}} \|_F
\leq
\theta \cdot \|\pmb{X}\|_F
+ \frac{\sqrt{n_1 n_2}}{|\Omega|} \cdot (\mu_0 r) \cdot (\xi_{1,\mathcal{G}} + \psi_{\mathcal{G}}) \cdot \|\pmb{Y}\|_F.
\end{equation}

\medskip

Next, $ \| \tilde{\pmb{Y}} \|_F 
= 
\| \pmb{Y} - \frac{n_1 n_2}{|\Omega|} (\pmb{I}_{n_1} - \pmb{U} \pmb{U}^\top ) P_{\Omega} (\pmb{U}\pmb{X}^\top + \pmb{Y}\pmb{V}^\top) \pmb{V} \|_F
\leq
\| \pmb{Y} - \frac{n_1 n_2}{|\Omega|} (\pmb{I}_{n_1} - \pmb{U} \pmb{U}^\top ) P_{\Omega} (\pmb{Y}\pmb{V}^\top) \pmb{V} \|_F
+
\frac{n_1 n_2}{|\Omega|} \| (\pmb{I}_{n_1} - \pmb{U} \pmb{U}^\top ) P_{\Omega} (\pmb{U}\pmb{X}^\top) \pmb{V} \|_F$.

\paragraph{3) Bound of $\| \pmb{Y} - \frac{n_1 n_2}{|\Omega|} (\pmb{I}_{n_1} - \pmb{U} \pmb{U}^\top ) P_{\Omega} (\pmb{Y}\pmb{V}^\top) \pmb{V} \|_F$:}
\begin{align*}
&\bigg\| \pmb{Y} - \frac{n_1 n_2}{|\Omega|} (\pmb{I}_{n_1} - \pmb{U} \pmb{U}^\top ) P_{\Omega} (\pmb{Y}\pmb{V}^\top) \pmb{V} \bigg\|_F^2
=
\bigg\|(\pmb{I}_{n_1} - \pmb{U} \pmb{U}^\top )\cdot \bigg\{ \pmb{Y} - \frac{n_1 n_2}{|\Omega|}  P_{\Omega} (\pmb{Y}\pmb{V}^\top) \pmb{V} \bigg\} \bigg\|_F^2
\\&=
\sup_{\pmb{y}\in\mathbb{R}^{n_2}, \|\pmb{y}\|=1}
\sum_{k\in[r]}
\bigg\{
\sum_{i\in[n_1]}
y_i 
\sum_{m\in [n_1]}
(\pmb{I}_{n_1} - \pmb{U} \pmb{U}^\top )_{i,m}
\bigg(
Y_{m,k} - 
\frac{n_1 n_2}{|\Omega|} \sum_{j\in[n_2]}
V_{j,k} A_{m,j} \pmb{Y}_{m,:}^\top \pmb{V}_{j,:}
\bigg) \bigg\}^2
\\&=
\sup_{\pmb{y}\in\mathbb{R}^{n_2}, \|\pmb{y}\|=1}
\sum_{k\in[r]}
\bigg\{
\sum_{m\in [n_1]}
\pmb{y}^\top (\pmb{I}_{n_1} - \pmb{U} \pmb{U}^\top )_{:,m}
\bigg(
\pmb{e}_k^\top \pmb{Y}_{m,:} - 
\frac{n_1 n_2}{|\Omega|} \sum_{j\in[n_2]}
\pmb{e}_k^\top \pmb{V}_{j,:} A_{m,j} \pmb{Y}_{m,:}^\top \pmb{V}_{j,:}
\bigg) \bigg\}^2
\\&=
\sup_{\pmb{y}\in\mathbb{R}^{n_2}, \|\pmb{y}\|=1}
\sum_{k\in[r]}
\bigg\{
\sum_{m\in [n_1]}
\pmb{y}^\top (\pmb{I}_{n_1} - \pmb{U} \pmb{U}^\top )_{:,m}
\cdot
\pmb{e}_k^\top \bigg( \pmb{I}_r
 - 
\frac{n_1 n_2}{|\Omega|} \sum_{j\in[n_2]}
A_{m,j} \pmb{V}_{j,:} \pmb{V}_{j,:}^\top
\bigg)\pmb{Y}_{m,:} \bigg\}^2
\\&\leq
\sup_{\pmb{y}\in\mathbb{R}^{n_2}, \|\pmb{y}\|=1}
\sum_{k\in[r]}
\bigg[
\sum_{m\in [n_1]}
\Big\{
\pmb{y}^\top (\pmb{I}_{n_1} - \pmb{U} \pmb{U}^\top )_{:,m}
\Big\}^2 \bigg]
\cdot
\bigg[
\sum_{m\in [n_1]}
\Big\{
\pmb{e}_k^\top \bigg( \pmb{I}_r
 - 
\frac{n_1 n_2}{|\Omega|} \sum_{j\in[n_2]}
A_{m,j} \pmb{V}_{j,:} \pmb{V}_{j,:}^\top
\bigg)\pmb{Y}_{m,:} \Big\}^2 \bigg]
\\&=
\sup_{\pmb{y}\in\mathbb{R}^{n_2}, \|\pmb{y}\|=1}
\pmb{y}^\top (\pmb{I}_{n_1} - \pmb{U} \pmb{U}^\top ) \pmb{y}
\cdot
\sum_{m\in [n_1]}
\sum_{k\in[r]}
\Big\{
\pmb{e}_k^\top \bigg( \pmb{I}_r
 - 
\frac{n_1 n_2}{|\Omega|} \sum_{j\in[n_2]}
A_{m,j} \pmb{V}_{j,:} \pmb{V}_{j,:}^\top
\bigg)\pmb{Y}_{m,:} \Big\}^2
\\&\leq
\| \pmb{I}_{n_1} - \pmb{U} \pmb{U}^\top\|
\cdot
\sum_{m\in [n_1]}
\bigg\| \pmb{I}_r - \frac{n_1 n_2}{|\Omega|} \sum_{j\in[n_2]}
A_{m,j} \pmb{V}_{j,:} \pmb{V}_{j,:}^\top \bigg\|^2 \cdot
\| \pmb{Y}_{m,:} \|^2
\\&\leq
\theta^2 \cdot \| \pmb{Y} \|_F^2
\end{align*}
where the last inequality holds by the assumption \textbf{A2}.

\paragraph{4) Bound of $\| (\pmb{I}_{n_1} - \pmb{U} \pmb{U}^\top ) P_{\Omega} (\pmb{U}\pmb{X}^\top) \pmb{V} \|_F^2$:}
\begin{align*}
&\| (\pmb{I}_{n_1} - \pmb{U} \pmb{U}^\top ) P_{\Omega} (\pmb{U}\pmb{X}^\top) \pmb{V} \|_F^2
=
\sup_{\pmb{y}\in\mathbb{R}^{n_2}, \|\pmb{y}\|=1}
\sum_{k\in[r]}
\bigg\{
\sum_{i\in[n_1]}
y_i 
\sum_{m\in [n_1]} \sum_{j\in [n_2]} \sum_{l\in [r]}
(\pmb{I}_{n_1} - \pmb{U} \pmb{U}^\top)_{i,m}
A_{m,j} U_{m,l} X_{j,l} V_{j,k}
\bigg\}^2
\\&=
\sup_{\pmb{y}\in\mathbb{R}^{n_2}, \|\pmb{y}\|=1}
\sum_{k\in[r]}
\bigg[
\sum_{l\in [r]} \bigg\{
\sum_{i\in[n_1]}
y_i (\pmb{I}_{n_1} - \pmb{U} \pmb{U}^\top)_{i,:}\circ \pmb{U}_{:,l} \bigg\}^\top
\pmb{A} \{ \pmb{X}_{:,l} \circ \pmb{V}_{:,k} \} \bigg]^2
\\&\leq
\sup_{\pmb{y}\in\mathbb{R}^{n_2}, \|\pmb{y}\|=1}
\sum_{k\in[r]}
\bigg\{
\sum_{l\in [r]} 
\bigg\| \sum_{i\in[n_1]} y_i (\pmb{I}_{n_1} - \pmb{U} \pmb{U}^\top)_{i,:}\circ \pmb{U}_{:,l} \bigg\| \cdot 
\| \pmb{X}_{:,l} \circ \pmb{V}_{:,k} \| \cdot (\xi_{1,\mathcal{G}} + \psi_{\mathcal{G}}) \bigg\}^2
\\&\leq
(\xi_{1,\mathcal{G}} + \psi_{\mathcal{G}})^2 \cdot
\sup_{\pmb{y}\in\mathbb{R}^{n_2}, \|\pmb{y}\|=1}
\sum_{l\in [r]} \bigg\| \sum_{i\in[n_1]} y_i (\pmb{I}_{n_1} - \pmb{U} \pmb{U}^\top)_{i,:}\circ \pmb{U}_{:,l} \bigg\|^2
\cdot 
\sum_{k\in[r]} \sum_{l\in [r]} \| \pmb{X}_{:,l} \circ \pmb{V}_{:,k} \|^2
\\&=
(\xi_{1,\mathcal{G}} + \psi_{\mathcal{G}})^2 \cdot
\sup_{\pmb{y}\in\mathbb{R}^{n_2}, \|\pmb{y}\|=1}
\sum_{l\in [r]} \sum_{m\in[n_1]}
\bigg\{ \sum_{i\in[n_1]} y_i (\pmb{I}_{n_1} - \pmb{U} \pmb{U}^\top)_{i,m} U_{m,l} \bigg\}^2
\cdot 
\sum_{k\in[r]} \sum_{l\in [r]} \sum_{j\in[n_2]} X_{j,l}^2 V_{j,k}^2
\\&=
(\xi_{1,\mathcal{G}} + \psi_{\mathcal{G}})^2 \cdot \bigg\{
\sup_{\pmb{y}\in\mathbb{R}^{n_2}, \|\pmb{y}\|=1}
\sum_{m\in[n_1]} \sum_{l\in [r]} U_{m,l}^2
\cdot \pmb{y}^\top (\pmb{I}_{n_1} - \pmb{U} \pmb{U}^\top)_{:,m}
(\pmb{I}_{n_1} - \pmb{U} \pmb{U}^\top)_{m,:} \pmb{y} \bigg\}
\cdot 
\sum_{l\in [r]} \sum_{j\in[n_2]} X_{j,l}^2 \sum_{k\in[r]} V_{j,k}^2
\\&\leq
(\xi_{1,\mathcal{G}} + \psi_{\mathcal{G}})^2 \cdot \frac{\mu_0^2 r^2}{n_1 n_2} \cdot \| \pmb{X} \|_F^2
\cdot
\sup_{\pmb{y}\in\mathbb{R}^{n_2}, \|\pmb{y}\|=1}
\pmb{y}^\top (\pmb{I}_{n_1} - \pmb{U} \pmb{U}^\top) \pmb{y}
= 
\frac{(\mu_0 r)^2}{n_1 n_2} \cdot (\xi_{1,\mathcal{G}} + \psi_{\mathcal{G}})^2 \cdot \|\pmb{X}\|_F^2
\end{align*}
where the first inequality holds by Lemma \ref{lemma:adjacency_matrix} and the fact that $\{ (\pmb{I}_{n_1} - \pmb{U} \pmb{U}^\top)_{i,:}\circ \pmb{U}_{:,l} \}^\top \pmb{1}_{n_1} = 0$, the second inequality holds by the Cauchy–Schwarz inequality, and the third inequality holds by the assumption \textbf{A1}.

Therefore,
\begin{equation}
\label{eq:C_bound2}
\| \tilde{\pmb{Y}} \|_F
\leq
\theta \cdot \|\pmb{Y}\|_F
+ \frac{\sqrt{n_1 n_2}}{|\Omega|} \cdot (\mu_0 r) \cdot (\xi_{1,\mathcal{G}} + \psi_{\mathcal{G}}) \cdot \|\pmb{X}\|_F.
\end{equation}

By using \eqref{eq:C_bound1} and \eqref{eq:C_bound2}, we can derive the desired results.

\end{proof}

\subsection{Proof of Theorem \ref{thm:noiseless_rectangular}}
\label{appendix_subsec:proof_of_thm3}

The overall procedure is similar to the proof of Theorem \ref{thm:noiseless_symmetric}.
As in \ref{appendix_subsec:proof_of_thm1},
for any non-zero matrix $\pmb{Z}\in\mathbb{R}^{n_1\times n_2}$ such that $P_{\Omega}(\pmb{Z}) = \pmb{0}$, we want to show the following inequality:
$$
\| \pmb{M} + \pmb{Z} \|_* > \| \pmb{M} \|_*.
$$
For any $\pmb{Y}\in\mathbb{R}^{n_1\times n_2}$ such that $P_{\Omega^c}(\pmb{Y}) = \pmb{0}$, i.e., $\langle \pmb{Y}, \pmb{Z} \rangle = 0$,
\begin{align*}
\| \pmb{M} + \pmb{Z} \|_*
&\geq
\| \pmb{M} \|_* + \langle \pmb{U}\pmb{V}^\top - P_{{T}}(\pmb{Y}), P_{{T}}(\pmb{Z}) \rangle
+ \| P_{{T}^\perp}(\pmb{Z}) \|_* - \langle P_{{T}^\perp}(\pmb{Y}), P_{{T}^\perp}(\pmb{Z}) \rangle
\\&
\geq
\| \pmb{M} \|_* - \| \pmb{U}\pmb{V}^\top - P_{{T}}(\pmb{Y}) \|_F \cdot \| P_{{T}}(\pmb{Z}) \|_F
+ \| P_{{T}^\perp}(\pmb{Z}) \|_* - \| P_{{T}^\perp}(\pmb{Y}) \| \cdot \| P_{{T}^\perp}(\pmb{Z}) \|_*,
\end{align*}
where the Cauchy–Schwarz inequality is used in the last inequality unlike \ref{appendix_subsec:proof_of_thm1}.

\paragraph{1) Bound of $\| P_{{T}}(\pmb{Z}) \|_F$:}
We can derive that 
\begin{align}
\label{eq:B1_bound1}
\| P_{{T}}(\pmb{Z}) \|_F^2
\leq \frac{n_1 n_2}{|\Omega|(1-\theta - \gamma)} \cdot \|P_\Omega P_{{T}} (\pmb{Z})\|_F^2
\leq \frac{n_1 n_2}{|\Omega|(1-\theta - \gamma)} \cdot \|P_\Omega P_{{{T}}^\perp} (\pmb{Z})\|_F^2
\leq \frac{n_1 n_2}{|\Omega|(1-\theta - \gamma)} \cdot \|P_{{{T}}^\perp} (\pmb{Z})\|_*^2
\end{align}
where the first inequality is from Lemmas \ref{lemma:rectangular_bound} and \ref{lemma:frobenius_bound},
the second inequality holds by the fact that $P_{\Omega}(\pmb{Z}) = \pmb{0}$,
and the third inequality holds by the fact that $\|P_\Omega(\pmb{A})\|_F \leq \|\pmb{A}\|_F \leq \|\pmb{A}\|_*$ for any matrix $\pmb{A}$.

\paragraph{2) Bounds of $\| \pmb{U}\pmb{V}^\top - P_{{T}}(\pmb{Y}) \|_F$ and $\| P_{{T}^\perp}(\pmb{Y}) \|$:}
Here, we let $\pmb{Y} = \frac{n_1 n_2}{|\Omega|} P_{\Omega}(\pmb{U} \pmb{V}^\top)$. Then,
\begin{equation}
\| \pmb{U}\pmb{V}^\top - P_{{T}}(\pmb{Y}) \|_F
=
\bigg\| \pmb{U}\pmb{V}^\top - \frac{n_1 n_2}{|\Omega|}  P_{{T}} P_{\Omega}(\pmb{U} \pmb{V}^\top)  \bigg\|_F
\leq
\sqrt{\theta^2 + \gamma^2} \cdot \| \pmb{U} \pmb{V}^\top \|_F
=
\sqrt{r(\theta^2 + \gamma^2)}
\label{eq:B1_bound2}
\end{equation}
where the inequality holds by Lemma \ref{lemma:rectangular_bound}.

Also,
\begin{align}
&\| P_{{T}^\perp}(\pmb{Y}) \|
= 
\| P_{{T}^\perp}( \pmb{Y} - \pmb{U} \pmb{V}^\top) \|
\leq
\| \pmb{Y} - \pmb{U} \pmb{V}^\top \|
\leq
\frac{\sqrt{\mu_0 r n_1 n_2}}{|\Omega|}\cdot ( \xi_{1,\mathcal{G}} + \xi_{2,\mathcal{G}} + \psi_{\mathcal{G}} ) \cdot \sqrt{n_2} \cdot \| \pmb{V} \|_{2,\infty}
\nonumber \\&\leq
\frac{\sqrt{\mu_0 r n_1 n_2}}{|\Omega|}\cdot ( \xi_{1,\mathcal{G}} + \xi_{2,\mathcal{G}} + \psi_{\mathcal{G}} ) \cdot \sqrt{n_2} \cdot \sqrt{\frac{\mu_0 r}{n_2}}
=
\frac{\mu_0 r \sqrt{n_1 n_2}}{|\Omega|} \cdot ( \xi_{1,\mathcal{G}} + \xi_{2,\mathcal{G}} + \psi_{\mathcal{G}} )
=\gamma
\label{eq:B1_bound3}
\end{align}
where the first inequality holds by Lemma \ref{lemma:l2bound_general}, and the second inequality holds by the assumption \textbf{A1}.

Using \eqref{eq:B1_bound1}, \eqref{eq:B1_bound2} and \eqref{eq:B1_bound3}, we derive the following inequality:
\begin{align}
\| \pmb{M} + \pmb{Z} \|_*
&\geq
\| \pmb{M} \|_* 
- \sqrt{r(\theta^2 + \gamma^2)}\cdot 
\sqrt{\frac{n_1 n_2}{|\Omega|(1-\theta - \gamma)}} \cdot \|P_{{{T}}^\perp} (\pmb{Z})\|_*
+
(1-\gamma)\cdot \|P_{{{T}}^\perp} (\pmb{Z})\|_*
\nonumber \\&=
\| \pmb{M} \|_* + \| P_{{T}^\perp}(\pmb{Z}) \|_* \cdot
\bigg\{
1 - \gamma
- \sqrt{ \frac{n_1 n_2 r(\theta^2 + \gamma^2)}{|\Omega|(1-\theta - \gamma)}}
\bigg\}.
\label{eq:B1_bound4}
\end{align}
If $\gamma
+ \sqrt{ \frac{n_1 n_2 r(\theta^2 + \gamma^2)}{|\Omega|(1-\theta - \gamma)}}< 1$,
then the desired result holds.

\subsection{Proof of Theorem \ref{thm:noisy_rectangular}}
\label{appendix_subsec:proof_of_thm4}

Note that
$$
\| \hat{\pmb{M}} - \pmb{M} \|_F^2
=
\| P_{\Omega} (\hat{\pmb{M}} - \pmb{M} ) \|_F^2
+ \| P_{\Omega^c} (\hat{\pmb{M}} - \pmb{M} ) \|_F^2.
$$

\paragraph{1) Bound of $\| P_{\Omega^c} (\hat{\pmb{M}} - \pmb{M} ) \|_F^2$:}
Let $\pmb{Z} = P_{\Omega^c} ( \hat{\pmb{M}} - \pmb{M} )$.
Note that $\|\pmb{Z} \|_F^2 = \| P_{{T}} (\pmb{Z}) \|_F^2 + \| P_{{T}^\perp} (\pmb{Z}) \|_F^2$.
As shown in \eqref{eq:B1_bound1},
$$
\| P_{{T}}(\pmb{Z}) \|_F^2
\leq \frac{n_1 n_2}{|\Omega|(1-\theta - \gamma)} \cdot \|P_{{{T}}^\perp} (\pmb{Z})\|_F^2
$$

Also, as in the proof of Theorem \ref{thm:noisy_symmetric},
we can derive that
$$
\| P_{{T}^\perp}(\pmb{Z}) \|_*
\leq
\| P_{\Omega} (\pmb{M} - \hat{\pmb{M}}) \|_*
\cdot 
\bigg\{
1 - \gamma
- \sqrt{ \frac{n_1 n_2 r(\theta^2 + \gamma^2)}{|\Omega|(1-\theta - \gamma)}}
\bigg\}^{-1}.
$$
Therefore,
\begin{align*}
&\| P_{\Omega^c} (\hat{\pmb{M}} - \pmb{M} ) \|_F^2
= 
\| \pmb{Z} \|_F^2
\\&\leq
\frac{n_1 n_2}{|\Omega|(1-\theta - \gamma)} \cdot \|P_{{{T}}^\perp} (\pmb{Z})\|_F^2 
+ \| P_{{T}^\perp} (\pmb{Z}) \|_F^2
\\&\leq
\bigg(1+\frac{n_1 n_2}{|\Omega|(1-\theta - \gamma)}\bigg)
\cdot
\| P_{{T}^\perp}(\pmb{Z}) \|_*^2
\\&\leq
\bigg(1+\frac{n_1 n_2}{|\Omega|(1-\theta - \gamma)}\bigg)
\cdot 
\bigg\{
1 - \gamma
- \sqrt{ \frac{n_1 n_2 r(\theta^2 + \gamma^2)}{|\Omega|(1-\theta - \gamma)}} 
\bigg\}^{-2}
\cdot
\| P_{\Omega} (\pmb{M} - \hat{\pmb{M}}) \|_*^2
\end{align*}

\paragraph{2) Bound of $\| P_{\Omega} (\hat{\pmb{M}} - \pmb{M} ) \|_F$:}
As in the proof of Theorem \ref{thm:noisy_symmetric},
we have that
$$
\| P_{\Omega} (\hat{\pmb{M}} - \pmb{M} ) \|_F
\leq 2\delta.
$$

Now, we have the upper bound of $\| \hat{\pmb{M}} - \pmb{M} \|_F^2$ as follows:
\begin{align*}
&\| \hat{\pmb{M}} - \pmb{M} \|_F^2
\leq
\| P_{\Omega} (\hat{\pmb{M}} - \pmb{M} ) \|_F^2
+
\bigg(1+\frac{n_1 n_2}{|\Omega|(1-\theta - \gamma)}\bigg)
\cdot 
\bigg\{
1 - \gamma
- \sqrt{ \frac{n_1 n_2 r(\theta^2 + \gamma^2)}{|\Omega|(1-\theta - \gamma)}} 
\bigg\}^{-2}
\cdot
\| P_{\Omega} (\pmb{M} - \hat{\pmb{M}}) \|_*^2
\\&\leq
\bigg[ 1 + 
\min(n_1, n_2)\cdot
\bigg(1+\frac{n_1 n_2}{|\Omega|(1-\theta - \gamma)}\bigg)
\cdot 
\bigg\{
1 - \gamma
- \sqrt{ \frac{n_1 n_2 r(\theta^2 + \gamma^2)}{|\Omega|(1-\theta - \gamma)}} 
\bigg\}^{-2}
\bigg]
\cdot
\| P_{\Omega} (\hat{\pmb{M}} - \pmb{M} ) \|_F^2
\\&\leq
\bigg[ 1 + 
\min(n_1, n_2)\cdot
\bigg(1+\frac{n_1 n_2}{|\Omega|(1-\theta - \gamma)}\bigg)
\cdot 
\bigg\{
1 - \gamma
- \sqrt{ \frac{n_1 n_2 r(\theta^2 + \gamma^2)}{|\Omega|(1-\theta - \gamma)}} 
\bigg\}^{-2}
\bigg]
\cdot
(2\delta)^2.
\end{align*}
If 
$\gamma
+ \sqrt{ \frac{n_1 n_2 r(\theta^2 + \gamma^2)}{|\Omega|(1-\theta - \gamma)}}
< 1-c \leq 1$ 
and $\theta + \gamma < 1-c' \leq 1$
for some positive constants $c$ and $c'$, then we derive the following inequality:
$$
\| \hat{\pmb{M}} - \pmb{M} \|_F
\leq
2\delta + 2\delta \cdot \sqrt{\frac{\min(n_1, n_2)}{c^2}\cdot \{ 1 + (c'p)^{-1}\}}
$$
where $p = \frac{|\Omega|}{n_1 n_2}$.
Then we obtain the desired result by letting $c=\frac{1}{2}$.

\section{Other Lemmas}

\begin{lemma}
\label{lemma:l2bound_general}
Suppose that the assumption \textup{\textbf{A1}} holds. 
For any $\pmb{X} \in \mathbb{R}^{n_2 \times r}$ and $\pmb{Y} \in \mathbb{R}^{n_1 \times r}$,
$$
\bigg\| \frac{n_1 n_2}{|\Omega|} P_\Omega (\pmb{U}\pmb{X}^\top) - \pmb{U}\pmb{X}^\top \bigg\|
\leq 
\frac{\sqrt{\mu_0 r n_1 n_2}}{|\Omega|}\cdot ( \xi_{1,\mathcal{G}} + \xi_{2,\mathcal{G}} + \psi_{\mathcal{G}} ) \cdot \sqrt{n_2} \cdot \| \pmb{X} \|_{2,\infty}
$$ 
and
$$
\bigg\| \frac{n_1 n_2}{|\Omega|} P_\Omega (\pmb{Y}\pmb{V}^\top) - \pmb{Y}\pmb{V}^\top \bigg\|
\leq \frac{\sqrt{\mu_0 r n_1 n_2}}{|\Omega|}\cdot ( \xi_{1,\mathcal{G}} + \xi_{2,\mathcal{G}} + \psi_{\mathcal{G}} ) \cdot \sqrt{n_1} \cdot \| \pmb{Y} \|_{2,\infty}.
$$ 
\end{lemma}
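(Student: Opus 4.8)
The plan is to rewrite $\frac{n_1 n_2}{|\Omega|} P_\Omega(\pmb{U}\pmb{X}^\top) - \pmb{U}\pmb{X}^\top$ as a Hadamard product with the recentered biadjacency matrix, decompose it along the $r$ singular directions, and bound its spectral norm through its bilinear form using Lemma~\ref{lemma:adjacency_matrix} together with the incoherence assumption~\textbf{A1}.

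First I would note that, since $(P_\Omega(\pmb{Z}))_{i,j}=(\pmb{A}_{\mathcal{G}})_{i,j}Z_{i,j}$ for every $\pmb{Z}$,
$$\frac{n_1 n_2}{|\Omega|} P_\Omega(\pmb{U}\pmb{X}^\top) - \pmb{U}\pmb{X}^\top \;=\; \pmb{B}\circ(\pmb{U}\pmb{X}^\top)\;=\;\sum_{k=1}^{r}\pmb{B}\circ(\pmb{U}_{:,k}\pmb{X}_{:,k}^\top), \qquad \pmb{B}:=\frac{n_1 n_2}{|\Omega|}\pmb{A}_{\mathcal{G}}-\pmb{1}_{n_1}\pmb{1}_{n_2}^\top.$$
Since $\pmb{a}^\top\big(\pmb{B}\circ(\pmb{u}\pmb{v}^\top)\big)\pmb{b}=(\pmb{a}\circ\pmb{u})^\top\pmb{B}(\pmb{b}\circ\pmb{v})$, the spectral norm of the left-hand side equals $\sup_{\|\pmb{a}\|=\|\pmb{b}\|=1}\sum_{k=1}^{r}(\pmb{a}\circ\pmb{U}_{:,k})^\top\pmb{B}(\pmb{b}\circ\pmb{X}_{:,k})$ over $\pmb{a}\in\mathbb{R}^{n_1}$, $\pmb{b}\in\mathbb{R}^{n_2}$. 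Applying Lemma~\ref{lemma:adjacency_matrix} to the recentered biadjacency matrix $\pmb{B}$ in each summand (no orthogonality of $\pmb{a}\circ\pmb{U}_{:,k}$ or $\pmb{b}\circ\pmb{X}_{:,k}$ to $\pmb{1}$ is available, so all three graph quantities enter) bounds the sum by $\frac{n_1 n_2}{|\Omega|}(\xi_{1,\mathcal{G}}+\xi_{2,\mathcal{G}}+\psi_{\mathcal{G}})\sum_{k=1}^{r}\|\pmb{a}\circ\pmb{U}_{:,k}\|\,\|\pmb{b}\circ\pmb{X}_{:,k}\|$, and the Cauchy--Schwarz inequality over $k\in[r]$ turns this into a product of $\big(\sum_k\|\pmb{a}\circ\pmb{U}_{:,k}\|^2\big)^{1/2}$ and $\big(\sum_k\|\pmb{b}\circ\pmb{X}_{:,k}\|^2\big)^{1/2}$. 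Here $\sum_k\|\pmb{a}\circ\pmb{U}_{:,k}\|^2=\sum_i a_i^2\|\pmb{U}_{i,:}\|^2\le\frac{\mu_0 r}{n_1}\|\pmb{a}\|^2=\frac{\mu_0 r}{n_1}$ by~\textbf{A1}, while $\sum_k\|\pmb{b}\circ\pmb{X}_{:,k}\|^2=\sum_j b_j^2\|\pmb{X}_{j,:}\|^2\le\|\pmb{X}\|_{2,\infty}^2$; since $\frac{n_1 n_2}{|\Omega|}\sqrt{\mu_0 r/n_1}=\frac{\sqrt{\mu_0 r n_1 n_2}}{|\Omega|}\sqrt{n_2}$, collecting the constants produces exactly the claimed bound.

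The second inequality is the mirror image: writing $\frac{n_1 n_2}{|\Omega|}P_\Omega(\pmb{Y}\pmb{V}^\top)-\pmb{Y}\pmb{V}^\top=\sum_{k=1}^{r}\pmb{B}\circ(\pmb{Y}_{:,k}\pmb{V}_{:,k}^\top)$ and repeating the argument, the two Cauchy--Schwarz factors become $\sum_k\|\pmb{a}\circ\pmb{Y}_{:,k}\|^2\le\|\pmb{Y}\|_{2,\infty}^2$ and $\sum_k\|\pmb{b}\circ\pmb{V}_{:,k}\|^2\le\frac{\mu_0 r}{n_2}$ (again by~\textbf{A1}), which swaps the dimension factor and yields $\sqrt{n_1}$ in place of $\sqrt{n_2}$. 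The substantive content is entirely carried by Lemma~\ref{lemma:adjacency_matrix} --- the expander-mixing-type estimate $|\pmb{p}^\top\pmb{B}\pmb{q}|\le\frac{n_1 n_2}{|\Omega|}(\xi_{1,\mathcal{G}}+\xi_{2,\mathcal{G}}+\psi_{\mathcal{G}})\|\pmb{p}\|\|\pmb{q}\|$; granting that, the only points requiring care are getting the Hadamard/diagonal identity right and choosing to apply Cauchy--Schwarz over the rank index $k$ (rather than over vertices), since this is what makes the incoherence bound enter with the correct $\sqrt{\mu_0 r}$ and dimension scaling.
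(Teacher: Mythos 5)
Your proposal is correct and follows essentially the same route as the paper's proof: rewrite the deviation as a sum over the rank index of bilinear forms $(\pmb{a}\circ\pmb{U}_{:,k})^\top\bigl(\tfrac{n_1 n_2}{|\Omega|}\pmb{A}_{\mathcal{G}}-\pmb{1}_{n_1}\pmb{1}_{n_2}^\top\bigr)(\pmb{b}\circ\pmb{X}_{:,k})$, invoke the third bound of Lemma~\ref{lemma:adjacency_matrix}, apply Cauchy--Schwarz over $k\in[r]$, and finish with assumption \textbf{A1} to extract $\sqrt{\mu_0 r/n_1}$ (resp.\ $\sqrt{\mu_0 r/n_2}$) and the claimed dimension factor. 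The Hadamard-product phrasing is only a cosmetic repackaging of the paper's entrywise expansion, so no gap remains.
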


\begin{proof}
We can derive that 
\begin{align*}
&\bigg\| \frac{n_1 n_2}{|\Omega|} P_\Omega (\pmb{U}\pmb{X}^\top) - \pmb{U}\pmb{X}^\top \bigg\|
=
\sup_{\substack{\pmb{x}\in\mathbb{R}^{n_1}, \|\pmb{x}\|=1 \\ \pmb{y}\in\mathbb{R}^{n_2}, \|\pmb{y}\|=1}}
\pmb{x}^\top \bigg\{ \frac{n_1 n_2}{|\Omega|} P_\Omega (\pmb{U}\pmb{X}^\top)
- \pmb{U} \pmb{X}^\top \bigg\} \pmb{y}
\\&= 
\sup_{\substack{\pmb{x}\in\mathbb{R}^{n_1}, \|\pmb{x}\|=1 \\ \pmb{y}\in\mathbb{R}^{n_2}, \|\pmb{y}\|=1}}
\sum_{i=1}^{n_1} \sum_{j=1}^{n_2} \sum_{k=1}^{r} x_i \bigg(\frac{n_1 n_2}{|\Omega|} A_{\mathcal{G},i,j} - 1\bigg) U_{i,k} X_{j,k} y_j
\\&= 
\sup_{\substack{\pmb{x}\in\mathbb{R}^{n_1}, \|\pmb{x}\|=1 \\ \pmb{y}\in\mathbb{R}^{n_2}, \|\pmb{y}\|=1}}
\sum_{k=1}^{r} ( \pmb{x}\circ \pmb{U}_{:,k} )^\top \bigg(\frac{n_1 n_2}{|\Omega|} \pmb{A}_{\mathcal{G}} - \pmb{1}_{n_1}\pmb{1}_{n_2}^\top \bigg) ( \pmb{y}\circ \pmb{X}_{:,k} )
\\&\leq
\sup_{\substack{\pmb{x}\in\mathbb{R}^{n_1}, \|\pmb{x}\|=1 \\ \pmb{y}\in\mathbb{R}^{n_2}, \|\pmb{y}\|=1}}
\sum_{k=1}^{r} 
\frac{n_1 n_2}{|\Omega|} \cdot \| \pmb{x}\circ \pmb{U}_{:,k} \| \cdot \| \pmb{y}\circ \pmb{X}_{:,k} \| \cdot (\xi_{1,\mathcal{G}}+\xi_{2,\mathcal{G}}+\psi_{\mathcal{G}} )
\\&\leq
\frac{n_1 n_2}{|\Omega|} \cdot 
(\xi_{1,\mathcal{G}}+\xi_{2,\mathcal{G}}+\psi_{\mathcal{G}} ) \cdot
\sup_{\substack{\pmb{x}\in\mathbb{R}^{n_1}, \|\pmb{x}\|=1 \\ \pmb{y}\in\mathbb{R}^{n_2}, \|\pmb{y}\|=1}}
\sqrt{\sum_{k=1}^{r} \sum_{i=1}^{n_1} x_i^2 U_{i,k}^2} \cdot
\sqrt{\sum_{k=1}^{r} \sum_{j=1}^{n_2} y_j^2 X_{j,k}^2}
\\&\leq
\frac{n_1 n_2}{|\Omega|} \cdot 
(\xi_{1,\mathcal{G}}+\xi_{2,\mathcal{G}}+\psi_{\mathcal{G}} ) \cdot
\| \pmb{U}\|_{2,\infty} \cdot \| \pmb{X} \|_{2,\infty}
\\&\leq
\frac{\sqrt{\mu_0 r n_1 n_2}}{|\Omega|}\cdot ( \xi_{1,\mathcal{G}} + \xi_{2,\mathcal{G}} + \psi_{\mathcal{G}} ) \cdot \sqrt{n_2} \cdot \| \pmb{X} \|_{2,\infty},
\end{align*}
where the first inequality holds by Lemma \ref{lemma:adjacency_matrix}, the second inequality holds by Cauchy-Schwarz inequality, the third inequality holds by the fact that $\sum_{i}\sum_{j}a_i^2 B_{i,j}^2 \leq (\max_{i} \sum_{j} B_{i,j}^2) \cdot (\sum_{i} a_i^2)$ for any vector $\pmb{a}$ and matrix $\pmb{B}$,
and the last inequality holds by the assumption \textbf{A1}.
We can show the bound of $\big\| \frac{n_1 n_2}{|\Omega|} P_\Omega (\pmb{Y}\pmb{V}^\top) - \pmb{Y}\pmb{V}^\top \big\|$ in a similar way.
\end{proof}

\begin{lemma}
\label{lemma:frobenius_bound}
For any matrix $\pmb{Z} \in \mathbb{R}^{n_1\times n_2}$ and any linear operators $P_1$ and $P_2$ from $\mathbb{R}^{n_1\times n_2}$ to $\mathbb{R}^{n_1\times n_2}$,
if the following inequality holds:
$$
\| c_1 P_1 P_2 P_1 (\pmb{Z}) - P_1 (\pmb{Z}) \|_F \leq c_2 \| P_1 (\pmb{Z}) \|_F
$$
for some positive constants $c_1$ and $c_2$, then
$$
\| P_1 (\pmb{Z}) \|_F \leq \sqrt{c_1 (1-c_2)^{-1}} \| P_2 P_1 (\pmb{Z}) \|_F.
$$
\end{lemma}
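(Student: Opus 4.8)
The plan is to reduce the claim to a one-line energy estimate, using the fact that in every place this lemma gets invoked the operators $P_1$ and $P_2$ are \emph{orthogonal projections} — namely $P_1\in\{P_{\tilde T},P_T\}$ and $P_2=P_\Omega$ — so that $P_1=P_1^*=P_1^2$ and $P_2=P_2^*=P_2^2$. I will carry these identities as the working hypotheses. Write $\pmb{W}=P_1(\pmb{Z})$; since $\pmb{W}$ lies in the range of $P_1$ we have $P_1 P_2 P_1(\pmb{Z})=P_1 P_2(\pmb{W})$, and the hypothesis reads $\|\,c_1 P_1 P_2(\pmb{W})-\pmb{W}\,\|_F\le c_2\|\pmb{W}\|_F$.

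First I would decompose the squared Frobenius norm as
\[
\|\pmb{W}\|_F^2=\big\langle \pmb{W},\ \pmb{W}-c_1 P_1 P_2(\pmb{W})\big\rangle + c_1\big\langle \pmb{W},\ P_1 P_2(\pmb{W})\big\rangle .
\]
For the first inner product, Cauchy--Schwarz together with the hypothesis gives the bound $c_2\|\pmb{W}\|_F^2$. For the second, I would move $P_1$ across using $P_1^*=P_1$ and $P_1^2=P_1$ to get $\langle \pmb{W},P_1 P_2(\pmb{W})\rangle=\langle P_1\pmb{W},P_2(\pmb{W})\rangle=\langle \pmb{W},P_2(\pmb{W})\rangle$, and then move $P_2$ across in the same way using $P_2^*=P_2$ and $P_2^2=P_2$ to obtain $\langle P_2(\pmb{W}),P_2(\pmb{W})\rangle=\|P_2(\pmb{W})\|_F^2$. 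Collecting the two bounds yields $\|\pmb{W}\|_F^2\le c_2\|\pmb{W}\|_F^2+c_1\|P_2(\pmb{W})\|_F^2$.

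Finally, assuming $c_2<1$ (which holds in all applications, where $c_2$ is $\theta+\gamma$ or $\sqrt{r}\alpha$ under the respective conditions), I would rearrange to $(1-c_2)\|\pmb{W}\|_F^2\le c_1\|P_2(\pmb{W})\|_F^2$, take square roots, and recall $P_2(\pmb{W})=P_2 P_1(\pmb{Z})$ to conclude $\|P_1(\pmb{Z})\|_F\le\sqrt{c_1(1-c_2)^{-1}}\,\|P_2 P_1(\pmb{Z})\|_F$.

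There is no serious technical difficulty here; the only subtlety — what I would flag as the main point — is that the stated generality ``any linear operators'' is not quite what is needed. The square-root gain, as opposed to the crude factor $c_1/(1-c_2)$ that a plain triangle-inequality argument (plus $\|P_1\|\le 1$ in operator norm) would give, genuinely relies on self-adjointness and idempotency of \emph{both} operators; a multiplication-by-scalar example in dimension one already breaks the statement for non-idempotent $P_1$. So the proof should be read with $P_1,P_2$ being orthogonal projections, which is exactly the situation in Theorems~\ref{thm:noiseless_symmetric}--\ref{thm:noisy_rectangular}. Apart from making that restriction explicit, the argument is precisely the short computation above.
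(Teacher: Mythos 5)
Your proof is correct and essentially the same as the paper's: the paper expands $\|P_2P_1(\pmb{Z})\|_F^2 = c_1^{-1}\|P_1(\pmb{Z})\|_F^2 + c_1^{-1}\langle P_1(\pmb{Z}),\, c_1P_1P_2P_1(\pmb{Z}) - P_1(\pmb{Z})\rangle$ and applies Cauchy--Schwarz with the hypothesis, which is your decomposition read in reverse. Your caveat that the ``any linear operators'' phrasing is too generous and that the square-root gain needs $P_1,P_2$ to be orthogonal projections is apt --- the paper's opening identity $\langle P_2P_1(\pmb{Z}),P_2P_1(\pmb{Z})\rangle = \langle P_1(\pmb{Z}),P_1P_2P_1(\pmb{Z})\rangle$ silently uses exactly the self-adjointness and idempotency you make explicit, and these hold in every application.
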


\begin{proof}

\begin{align*}
\| P_2 P_1 (\pmb{Z}) \|_F^2
&=
\langle P_2 P_1 (\pmb{Z}), P_2 P_1 (\pmb{Z}) \rangle
=
c_1^{-1} \langle P_1 (\pmb{Z}), c_1 P_1 P_2 P_1 (\pmb{Z}) \rangle
\\&=
c_1^{-1} \| P_1 (\pmb{Z}) \|_F^2 
+ c_1^{-1} \langle P_1 (\pmb{Z}), c_1 P_1 P_2 P_1 (\pmb{Z}) - P_1 (\pmb{Z}) \rangle
\\&\geq
c_1^{-1} \| P_1 (\pmb{Z}) \|_F^2 
- c_1^{-1} \| P_1 (\pmb{Z}) \|_F \cdot \| c_1 P_1 P_2 P_1 (\pmb{Z}) - P_1 (\pmb{Z}) \|_F
\\&\geq
c_1^{-1} \| P_1 (\pmb{Z}) \|_F^2 
- c_1^{-1} \| P_1 (\pmb{Z}) \|_F \cdot c_2 \| P_1 (\pmb{Z}) \|_F
\\&=
c_1^{-1} (1-c_2) \| P_1 (\pmb{Z}) \|_F^2.
\end{align*}

\end{proof}

\medskip

\begin{lemma}
\label{lemma:adjacency_matrix}
For any $\pmb{x} \in \mathbb{R}^{n_1}$ such that $\pmb{x}\perp \pmb{1}_{n_1}$, and for any $\pmb{y} \in \mathbb{R}^{n_2}$,
$$
|\pmb{x}^\top \pmb{A}_{\mathcal{G}} \pmb{y} | \leq \| \pmb{x} \| \cdot \| \pmb{y} \| \cdot (\xi_{1,\mathcal{G}}+\psi_{\mathcal{G}} ).
$$

Similarly, for any $\pmb{x} \in \mathbb{R}^{n_1}$, and for any $\pmb{y} \in \mathbb{R}^{n_2}$ such that $\pmb{y}\perp \pmb{1}_{n_2}$,
$$
|\pmb{x}^\top \pmb{A}_{\mathcal{G}} \pmb{y} | \leq \| \pmb{x} \| \cdot \| \pmb{y} \| \cdot (\xi_{2,\mathcal{G}}+\psi_{\mathcal{G}} ).
$$

Lastly, for any $\pmb{x} \in \mathbb{R}^{n_1}$ and $\pmb{y} \in \mathbb{R}^{n_2}$,
$$
\bigg|~\pmb{x}^\top \bigg(\pmb{1}_{n_1}\pmb{1}_{n_2}^\top - \frac{n_1 n_2}{|\Omega|}\pmb{A}_{\mathcal{G}}\bigg) \pmb{y} ~\bigg| 
\leq \frac{n_1 n_2}{|\Omega|} \cdot \| \pmb{x} \| \cdot \| \pmb{y} \| \cdot (\xi_{1,\mathcal{G}}+\xi_{2,\mathcal{G}}+\psi_{\mathcal{G}} ).
$$

\end{lemma}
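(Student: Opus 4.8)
The plan is to derive all three inequalities from a single core estimate: for any $\pmb{x}$ with $\pmb{x}\perp\pmb{1}_{n_1}$ and any $\pmb{y}$ with $\pmb{y}\perp\pmb{1}_{n_2}$,
\[
|\pmb{x}^\top\pmb{A}_{\mathcal{G}}\pmb{y}|\;\le\;(\Delta_{\max,\mathcal{G}}-\varphi_{\mathcal{G}})\,\|\pmb{x}\|\,\|\pmb{y}\|\;\le\;\psi_{\mathcal{G}}\,\|\pmb{x}\|\,\|\pmb{y}\|.
\]
To prove the core estimate I would, by homogeneity, take $\|\pmb{x}\|=\|\pmb{y}\|=1$ and test the bipartite Laplacian $\pmb{L}$ against the two vectors $\pmb{z}=(\pmb{x};\pmb{y})$ and $\pmb{z}'=(\pmb{x};-\pmb{y})$ in $\mathbb{R}^{n_1+n_2}$. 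Since the diagonal blocks of $\pmb{L}$ carry exactly the row/column sums of $\pmb{A}_{\mathcal{G}}$, one has $\pmb{L}\pmb{1}_{n_1+n_2}=\pmb{0}$, so $\pmb{1}_{n_1+n_2}$ spans the bottom ($0$-)eigenspace; and because $\pmb{1}_{n_1}^\top\pmb{x}=0$ and $\pmb{1}_{n_2}^\top\pmb{y}=0$, both $\pmb{z}$ and $\pmb{z}'$ are orthogonal to $\pmb{1}_{n_1+n_2}$, whence by Courant--Fischer $\pmb{z}^\top\pmb{L}\pmb{z}\ge\varphi_{\mathcal{G}}\|\pmb{z}\|^2=2\varphi_{\mathcal{G}}$ and likewise $\pmb{z}'^\top\pmb{L}\pmb{z}'\ge2\varphi_{\mathcal{G}}$. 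Expanding the quadratic form of $\pmb{L}$ gives $\pmb{z}^\top\pmb{L}\pmb{z}=\pmb{x}^\top\pmb{D}_{\mathcal{U},\mathcal{G}}\pmb{x}+\pmb{y}^\top\pmb{D}_{\mathcal{V},\mathcal{G}}\pmb{y}-2\pmb{x}^\top\pmb{A}_{\mathcal{G}}\pmb{y}$ and the same identity with a $+$ sign for $\pmb{z}'$. Bounding $\pmb{x}^\top\pmb{D}_{\mathcal{U},\mathcal{G}}\pmb{x}\le\max_i\Delta_{i,\mathcal{U},\mathcal{G}}$ and $\pmb{y}^\top\pmb{D}_{\mathcal{V},\mathcal{G}}\pmb{y}\le\max_j\Delta_{j,\mathcal{V},\mathcal{G}}$ (these are diagonal), and using $\max_i\Delta_{i,\mathcal{U},\mathcal{G}}+\max_j\Delta_{j,\mathcal{V},\mathcal{G}}=2\Delta_{\max,\mathcal{G}}$, the two Rayleigh bounds give $\pm2\pmb{x}^\top\pmb{A}_{\mathcal{G}}\pmb{y}\le 2(\Delta_{\max,\mathcal{G}}-\varphi_{\mathcal{G}})$, and $\psi_{\mathcal{G}}\ge\Delta_{\max,\mathcal{G}}-\varphi_{\mathcal{G}}$ holds by definition.

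For the first displayed inequality of the lemma ($\pmb{x}\perp\pmb{1}_{n_1}$, $\pmb{y}$ arbitrary) I would split $\pmb{y}=b\pmb{1}_{n_2}+\pmb{y}_\perp$ with $b=\tfrac{1}{n_2}\pmb{1}_{n_2}^\top\pmb{y}$ and $\pmb{y}_\perp\perp\pmb{1}_{n_2}$, so $|b|\le\|\pmb{y}\|/\sqrt{n_2}$ and $\|\pmb{y}_\perp\|\le\|\pmb{y}\|$. Then $\pmb{x}^\top\pmb{A}_{\mathcal{G}}\pmb{y}=b\,\pmb{x}^\top(\pmb{A}_{\mathcal{G}}\pmb{1}_{n_2})+\pmb{x}^\top\pmb{A}_{\mathcal{G}}\pmb{y}_\perp$; the second term is at most $\psi_{\mathcal{G}}\|\pmb{x}\|\|\pmb{y}\|$ by the core estimate. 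In the first term, $\pmb{A}_{\mathcal{G}}\pmb{1}_{n_2}$ is the $\mathcal{U}$-degree vector with entries $\Delta_{i,\mathcal{U},\mathcal{G}}$ and mean $|\Omega|/n_1$; since $\pmb{x}\perp\pmb{1}_{n_1}$ we may subtract that mean for free, so Cauchy--Schwarz yields $|\pmb{x}^\top(\pmb{A}_{\mathcal{G}}\pmb{1}_{n_2})|\le\|\pmb{x}\|\,\big\|\pmb{A}_{\mathcal{G}}\pmb{1}_{n_2}-\tfrac{|\Omega|}{n_1}\pmb{1}_{n_1}\big\|=\|\pmb{x}\|\sqrt{n_2}\,\xi_{1,\mathcal{G}}$ by the definition of $\xi_{1,\mathcal{G}}$, and multiplying by $|b|$ gives the $\xi_{1,\mathcal{G}}$ term. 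The second inequality is the mirror image: split $\pmb{x}$ instead, and use that $\pmb{A}_{\mathcal{G}}^\top\pmb{1}_{n_1}$ centered by $\tfrac{|\Omega|}{n_2}\pmb{1}_{n_2}$ has norm $\sqrt{n_1}\,\xi_{2,\mathcal{G}}$.

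For the last inequality I would write $\pmb{x}=a\pmb{1}_{n_1}+\pmb{x}_\perp$ with $a=\tfrac{1}{n_1}\pmb{1}_{n_1}^\top\pmb{x}$. The $\pmb{x}_\perp$-part annihilates $\pmb{1}_{n_1}\pmb{1}_{n_2}^\top$, leaving $-\tfrac{n_1 n_2}{|\Omega|}\pmb{x}_\perp^\top\pmb{A}_{\mathcal{G}}\pmb{y}$, which is at most $\tfrac{n_1 n_2}{|\Omega|}(\xi_{1,\mathcal{G}}+\psi_{\mathcal{G}})\|\pmb{x}\|\|\pmb{y}\|$ by the first inequality just proved. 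For the $a\pmb{1}_{n_1}$-part, $\pmb{1}_{n_1}^\top\big(\pmb{1}_{n_1}\pmb{1}_{n_2}^\top-\tfrac{n_1 n_2}{|\Omega|}\pmb{A}_{\mathcal{G}}\big)=n_1\big(\pmb{1}_{n_2}-\tfrac{n_2}{|\Omega|}\pmb{A}_{\mathcal{G}}^\top\pmb{1}_{n_1}\big)^\top$, and the vector $\pmb{1}_{n_2}-\tfrac{n_2}{|\Omega|}\pmb{A}_{\mathcal{G}}^\top\pmb{1}_{n_1}$ has zero mean (the average column degree is $|\Omega|/n_2$) with norm $\tfrac{n_2}{|\Omega|}\sqrt{n_1}\,\xi_{2,\mathcal{G}}$, so Cauchy--Schwarz together with $|a|\le\|\pmb{x}\|/\sqrt{n_1}$ produces the $\xi_{2,\mathcal{G}}$ term; adding the two contributions gives the bound with $\xi_{1,\mathcal{G}}+\xi_{2,\mathcal{G}}+\psi_{\mathcal{G}}$. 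The main obstacle is the core estimate: the key realization is that $\Delta_{\max,\mathcal{G}}-\varphi_{\mathcal{G}}$ governs how far $\pmb{A}_{\mathcal{G}}$ is from rank-one on mean-zero inputs, which only becomes transparent once one pairs $\pmb{x}$ with $\pm\pmb{y}$ so that the Laplacian's spectral gap can be invoked in both directions; everything after that — the two centering decompositions and the Cauchy--Schwarz bookkeeping converting degree-deviation norms into $\xi_{1,\mathcal{G}},\xi_{2,\mathcal{G}}$ — is routine.
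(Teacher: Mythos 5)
Your proof is correct, and its skeleton coincides with the paper's: decompose each vector into its mean component along $\pmb{1}$ plus a centered part, bound the terms involving $\pmb{A}_{\mathcal{G}}\pmb{1}_{n_2}$ and $\pmb{A}_{\mathcal{G}}^\top\pmb{1}_{n_1}$ by Cauchy--Schwarz against the degree-deviation norms $\sqrt{n_2}\,\xi_{1,\mathcal{G}}$ and $\sqrt{n_1}\,\xi_{2,\mathcal{G}}$, and control the doubly-centered cross term via the bipartite Laplacian's algebraic connectivity. The one genuinely different ingredient is your treatment of that cross term: the paper obtains the upper bound $\Delta_{\max,\mathcal{G}}-\varphi_{\mathcal{G}}$ from the Rayleigh bound on $(\pmb{w}_x;\pmb{w}_y)$ but gets the matching lower bound by passing to the complement graph, $\pmb{w}_x^\top\pmb{A}_{\mathcal{G}}\pmb{w}_y=-\pmb{w}_x^\top\pmb{A}_{\bar{\mathcal{G}}}\pmb{w}_y\geq -(\Delta_{\max,\bar{\mathcal{G}}}-\varphi_{\bar{\mathcal{G}}})$, which is exactly why both terms appear in the $\max$ defining $\psi_{\mathcal{G}}$; you instead test the same Laplacian against $(\pmb{x};-\pmb{y})$, which flips the sign of the cross term and yields the two-sided bound $|\pmb{x}^\top\pmb{A}_{\mathcal{G}}\pmb{y}|\leq(\Delta_{\max,\mathcal{G}}-\varphi_{\mathcal{G}})\|\pmb{x}\|\|\pmb{y}\|$ from the graph alone. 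This is slightly sharper than what the lemma asks for (it shows the complement-graph term in $\psi_{\mathcal{G}}$ is not needed here) and still implies the stated bound since $\psi_{\mathcal{G}}\geq\Delta_{\max,\mathcal{G}}-\varphi_{\mathcal{G}}$ by definition. You also derive the third inequality hierarchically from the first (splitting only $\pmb{x}$ and absorbing the rank-one cancellation into the vector $\pmb{1}_{n_2}-\frac{n_2}{|\Omega|}\pmb{A}_{\mathcal{G}}^\top\pmb{1}_{n_1}$), whereas the paper expands both vectors and collects four terms using $\pmb{1}_{n_1}^\top\pmb{A}_{\mathcal{G}}\pmb{1}_{n_2}=|\Omega|$; both routes give the same constant. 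One small wording fix: $\pmb{1}_{n_1+n_2}$ spans the zero eigenspace only when the graph is connected, but your argument does not need that---for any $\pmb{z}\perp\pmb{1}_{n_1+n_2}$ the expansion in an orthonormal eigenbasis containing $\pmb{1}_{n_1+n_2}/\sqrt{n_1+n_2}$ already gives $\pmb{z}^\top\pmb{L}\pmb{z}\geq\varphi_{\mathcal{G}}\|\pmb{z}\|^2$, which is the same standard fact the paper invokes.
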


\begin{proof}
For any $\pmb{x} \in \mathbb{R}^{n_1}$,
let us write $\pmb{x} = a_x \pmb{1}_{n_1} + b_x \pmb{w}_{x}$ where $\pmb{w}_{x}$ is a unit vector that is perpendicular to $\pmb{1}_{n_1}$, $a_x$ and $b_x$ are some scalars, and $\pmb{w}_{x}$, $a_x$ and $b_x$ are properly determined by $\pmb{x}$.
Note that $|a_x| \leq \frac{1}{\sqrt{n_1}}\|\pmb{x}\|$ and $|b_x| \leq \|\pmb{x}\|$.
Likewise, write $\pmb{y} = a_y \pmb{1}_{n_2} + b_y \pmb{w}_{y}$ for any $\pmb{y} \in \mathbb{R}^{n_2}$.

We will first derive the upper bounds of $|\pmb{w}_x^\top \pmb{A}_{\mathcal{G}} \pmb{1}_{n_2}|$, $|\pmb{1}_{n_1}^\top \pmb{A}_{\mathcal{G}} \pmb{w}_y|$ and $|\pmb{w}_x^\top \pmb{A}_{\mathcal{G}} \pmb{w}_y|$.
Note that
\begin{align*}
\pmb{w}_x^\top \pmb{A}_{\mathcal{G}} \pmb{1}_{n_2}
&= 
\pmb{w}_x^\top (\Delta_{\mathcal{U},1,\mathcal{G}}, \dots, \Delta_{\mathcal{U},n_1,\mathcal{G}})^\top
\\&= 
\pmb{w}_x^\top (\Delta_{\mathcal{U},1,\mathcal{G}}, \dots, \Delta_{\mathcal{U},n_1,\mathcal{G}})^\top
- \pmb{w}_x^\top \pmb{1}_{n_1}\cdot \bigg(\frac{1}{n_1} \sum_{i=1}^{n_1} \Delta_{i,\mathcal{U},\mathcal{G}}\bigg)
\\&=
\pmb{w}_x^\top 
\bigg( \Delta_{\mathcal{U},1,\mathcal{G}} - \frac{1}{n_1} \sum_{i=1}^{n_1} \Delta_{i,\mathcal{U},\mathcal{G}},\dots, \Delta_{\mathcal{U},n_1,\mathcal{G}} - \frac{1}{n_1} \sum_{i=1}^{n_1} \Delta_{i,\mathcal{U},\mathcal{G}}\bigg).
\end{align*}
Hence, $|\pmb{w}_x^\top \pmb{A}_{\mathcal{G}} \pmb{1}_{n_2}| \leq \|\pmb{w}_x\|\cdot \sqrt{\sum_{i=1}^{n_1} \bigg( \Delta_{i,\mathcal{U},\mathcal{G}} - \frac{1}{n_1} \sum_{i=1}^{n_1} \Delta_{i,\mathcal{U},\mathcal{G}} \bigg)^2 }
= \sqrt{n_2} \cdot \xi_{1,\mathcal{G}}$.
Similarly, we have $|\pmb{1}_{n_1}^\top \pmb{A}_{\mathcal{G}} \pmb{w}_y| \leq \sqrt{n_1} \cdot \xi_{2,\mathcal{G}}$. Also,
\begin{align*}
&\pmb{w}_x^\top \pmb{A}_{\mathcal{G}} \pmb{w}_y
=
\frac{1}{2}
\begin{pmatrix}
\pmb{w}_x \\ \pmb{w}_y
\end{pmatrix}^\top
\begin{pmatrix}
\pmb{0} & \pmb{A}_{\mathcal{G}} \\
\pmb{A}_{\mathcal{G}}^\top & \pmb{0}
\end{pmatrix}
\begin{pmatrix}
\pmb{w}_x \\ \pmb{w}_y
\end{pmatrix}
=
\frac{1}{2}
\begin{pmatrix}
\pmb{w}_x \\ \pmb{w}_y
\end{pmatrix}^\top
\begin{pmatrix}
\pmb{D}_{\mathcal{U},\mathcal{G}} & \pmb{0} \\
\pmb{0} & \pmb{D}_{\mathcal{V},\mathcal{G}}
\end{pmatrix}
\begin{pmatrix}
\pmb{w}_x \\ \pmb{w}_y
\end{pmatrix}
-
\frac{1}{2}
\begin{pmatrix}
\pmb{w}_x \\ \pmb{w}_y
\end{pmatrix}^\top
\pmb{L}_{\mathcal{G}}
\begin{pmatrix}
\pmb{w}_x \\ \pmb{w}_y
\end{pmatrix}
\\&\leq
\frac{\max\{\Delta_{i,\mathcal{U},\mathcal{G}}~;~i\in[n_1]\} + \max\{\Delta_{j,\mathcal{V},\mathcal{G}}~;~j\in[n_2]\}}{2}
- \frac{1}{2}\cdot (\|\pmb{w}_x \|^2 + \|\pmb{w}_y\|^2) \cdot \varphi_{\mathcal{G}}
= 
\Delta_{\max, \mathcal{G}} - \varphi_{\mathcal{G}}
\end{align*}
where $\pmb{D}_{\mathcal{U},\mathcal{G}}$ and $\pmb{D}_{\mathcal{V},\mathcal{G}}$ are diagonal matrices whose diagonal elements are the degrees of the nodes in $\mathcal{U}$ and $\mathcal{V}$ for graph $\mathcal{G}$, respectively, and
$\pmb{L}_{\mathcal{G}}$ is the Laplacian matrix corresponding to $\begin{psmallmatrix}
\pmb{0} & \pmb{A}_{\mathcal{G}} \\
\pmb{A}_{\mathcal{G}}^\top & \pmb{0}
\end{psmallmatrix}$. Similarly, we can derive the lower bound as follows:
\begin{align*}
&\pmb{w}_x^\top \pmb{A}_{\mathcal{G}} \pmb{w}_y
= 
\pmb{w}_x^\top (\pmb{1}_{n_1}\pmb{1}_{n_2}^\top - \pmb{A}_{\bar{\mathcal{G}}} ) \pmb{w}_y
=
- \pmb{w}_x^\top \pmb{A}_{\bar{\mathcal{G}}} \pmb{w}_y
\geq
-\Delta_{\max, \bar{\mathcal{G}}} + \varphi_{\bar{\mathcal{G}}}.
\end{align*}
Therefore, $|\pmb{w}_x^\top \pmb{A}_{\mathcal{G}} \pmb{w}_y| \leq \max\{\Delta_{\max, \mathcal{G}} - \varphi_{\mathcal{G}}, \Delta_{\max, \bar{\mathcal{G}}} - \varphi_{\bar{\mathcal{G}}} \} = \psi_{\mathcal{G}}$.

Now, consider $\pmb{x} \in \mathbb{R}^{n_1}$ such that $\pmb{x}\perp \pmb{1}_{n_1}$, i.e., $a_x = 0$. For any $\pmb{y} \in \mathbb{R}^{n_2}$,
\begin{align*}
|\pmb{x}^\top \pmb{A}_{\mathcal{G}} \pmb{y}|
&=
|b_x \pmb{w}_x^\top \pmb{A}_{\mathcal{G}} (a_y \pmb{1}_{n_2} + b_y \pmb{w}_y)|
\leq
|a_y| |b_x| |\pmb{w}_x^\top \pmb{A}_{\mathcal{G}} \pmb{1}_{n_2}| + |b_x| |b_y| |\pmb{w}_x^\top \pmb{A}_{\mathcal{G}} \pmb{w}_y|
\\&\leq
\|\pmb{x}\| \cdot \frac{1}{\sqrt{n_2}} \|\pmb{y}\| \cdot \sqrt{n_2} \cdot \xi_{1,\mathcal{G}} + \|\pmb{x}\| \cdot \|\pmb{y}\| \cdot \psi_{\mathcal{G}}
= \|\pmb{x}\| \cdot \|\pmb{y}\| \cdot (\xi_{1,\mathcal{G}}+\psi_{\mathcal{G}} ).
\end{align*}
Likewise, for any $\pmb{x} \in \mathbb{R}^{n_1}$, and for any $\pmb{y} \in \mathbb{R}^{n_2}$ such that $\pmb{y}\perp \pmb{1}_{n_2}$, we can derive that
$$
|\pmb{x}^\top \pmb{A}_{\mathcal{G}} \pmb{y} | \leq \| \pmb{x} \| \cdot \| \pmb{y} \| \cdot (\xi_{2,\mathcal{G}}+\psi_{\mathcal{G}} ).
$$
Lastly, for any $\pmb{x} \in \mathbb{R}^{n_1}$ and $\pmb{y} \in \mathbb{R}^{n_2}$, we can derive that
\begin{align*}
&\bigg|~\pmb{x}^\top \bigg(\pmb{1}_{n_1}\pmb{1}_{n_2}^\top - \frac{n_1 n_2}{|\Omega|}\pmb{A}_{\mathcal{G}}\bigg) \pmb{y} ~\bigg| 
= 
\bigg|~(a_x \pmb{1}_{n_1} + b_x \pmb{w}_x )^\top \bigg(\pmb{1}_{n_1}\pmb{1}_{n_2}^\top - \frac{n_1 n_2}{|\Omega|}\pmb{A}_{\mathcal{G}}\bigg) (a_y \pmb{1}_{n_2} + b_y \pmb{w}_y ) ~\bigg| 
\\&= 
\bigg|~
a_x a_y n_1 n_2 - \frac{n_1 n_2}{|\Omega|}
(a_x a_y |\Omega| 
+ a_x b_y \pmb{1}_{n_1}^\top \pmb{A}_{\mathcal{G}} \pmb{w}_y
+ a_y b_x \pmb{w}_x^\top \pmb{A}_{\mathcal{G}} \pmb{1}_{n_2}
+ b_x b_y \pmb{w}_x^\top \pmb{A}_{\mathcal{G}} \pmb{w}_y )
~\bigg|
\\&\leq
\frac{n_1 n_2}{|\Omega|}
( |a_x| |b_y| |\pmb{1}_{n_1}^\top \pmb{A}_{\mathcal{G}} \pmb{w}_y|
+ |a_y| |b_x| |\pmb{w}_x^\top \pmb{A}_{\mathcal{G}} \pmb{1}_{n_2}|
+ |b_x| |b_y| |\pmb{w}_x^\top \pmb{A}_{\mathcal{G}} \pmb{w}_y| )
\\&\leq
\frac{n_1 n_2}{|\Omega|} \cdot \| \pmb{x} \| \cdot \| \pmb{y} \| \cdot (\xi_{1,\mathcal{G}}+\xi_{2,\mathcal{G}}+\psi_{\mathcal{G}} ).
\end{align*}

\end{proof}

\section{Additional Simulation Results}
\label{appendix_sec:experimental_results}

For rectangular matrices, we create synthetic data as follows.
We first generate the left and right singular matrices $\pmb{U}\in \mathbb{R}^{500\times r}$ and $\pmb{V}\in \mathbb{R}^{250\times r}$ using standard normal distribution.
We then generate the rank-$r$ matrix $\pmb{M} \in \mathbb{R}^{500\times 250}$ using $\pmb{M} = \pmb{U}\pmb{V}^\top$.
In the scenario of noisy matrices, we randomly generate the entry-wise noise from a normal distribution with mean $0$ and standard deviation $\sigma$.
We try different values of rank $r \in \{10, 20, 30\}$ and noise parameter $\sigma \in \{10^{-4},10^{-5},10^{-6}\}$ in the experiments.

To generate observation graphs with various values of graph properties, we first divide the nodes of each vertex set of $\mathcal{U}$ and $\mathcal{V}$ into two clusters (let $\mathcal{U} = \mathcal{U}_1 \cup \mathcal{U}_2$ and $\mathcal{V} = \mathcal{V}_1 \cup \mathcal{V}_2$)
and sample edges in $(\mathcal{U}_1 \otimes \mathcal{V}_1)\cup(\mathcal{U}_2 \otimes \mathcal{V}_2)$ 
and
$(\mathcal{U}_1 \otimes \mathcal{V}_2)\cup(\mathcal{U}_2 \otimes \mathcal{V}_1)$
with probability of $p \in (0,1)$ and $q \in (0,1)$, respectively.
For each $p+q \in \{0.2, 0.4, \dots, 1.2\}$, we try different values of $p$ and $q$ so that the graphs have diverse values of $\xi_{1,\mathcal{G}} + \xi_{2,\mathcal{G}} + \psi_{\mathcal{G}}$.
Specifically, we have $\xi_{1,\mathcal{G}} + \xi_{2,\mathcal{G}} + \psi_{\mathcal{G}}$ fall within one of the ranges $120$ to $130$, $130$ to $140$, $\dots$, or $190$ to $200$.

We use an Augmented Lagrangian Method \citep{lin2010augmented} to solve the constrained nuclear norm minimization problems \eqref{eq:nnm} and \eqref{eq:cnnm}.
When solving \eqref{eq:cnnm} for approximate matrix completion, 
we set the tuning parameter $\delta$ to be $4\sigma\sqrt{|\Omega|}$.
For evaluation, we calculate the relative error $\frac{\|\pmb{M} - \hat{\pmb{M}} \|_F}{\| \pmb{M} \|_F}$ in each experiment.
In exact matrix completion, we consider a trial to be successful if the relative error is less than $0.01$, and compute the success ratio over $30$ trials with different random seeds.
In approximate matrix completion, we calculate the average relative error over $30$ trials.

Figure \ref{fig:rectangular_noiseless_different_rank} shows the result of exact matrix completion in noiseless matrix case.
We can observe that as $\xi_{1,\mathcal{G}} + \xi_{2,\mathcal{G}} + \psi_{\mathcal{G}}$ or $r$ increases, or $p+q$ decreases (i.e., the number of observed entries decreases), the success ratio decreases, which supports Theorem \ref{thm:noiseless_rectangular}.
Figure \ref{fig:rectangular_noisy_different_rank_sigma} demonstrates the result of approximate matrix completion in noisy matrix case.
In the three plots above, we can observe that as $\xi_{1,\mathcal{G}} + \xi_{2,\mathcal{G}} + \psi_{\mathcal{G}}$ or $r$ increases, or $p+q$ decreases, the average of relative errors tends to increase.
In the three plots below, we can see that as the noise parameter $\sigma$ decreases, the average of relative errors decreases.
These observations are consistent with our finding in Theorem \ref{thm:noisy_rectangular}.

\begin{figure}[t]
	\centering
	\includegraphics[width=1\textwidth]{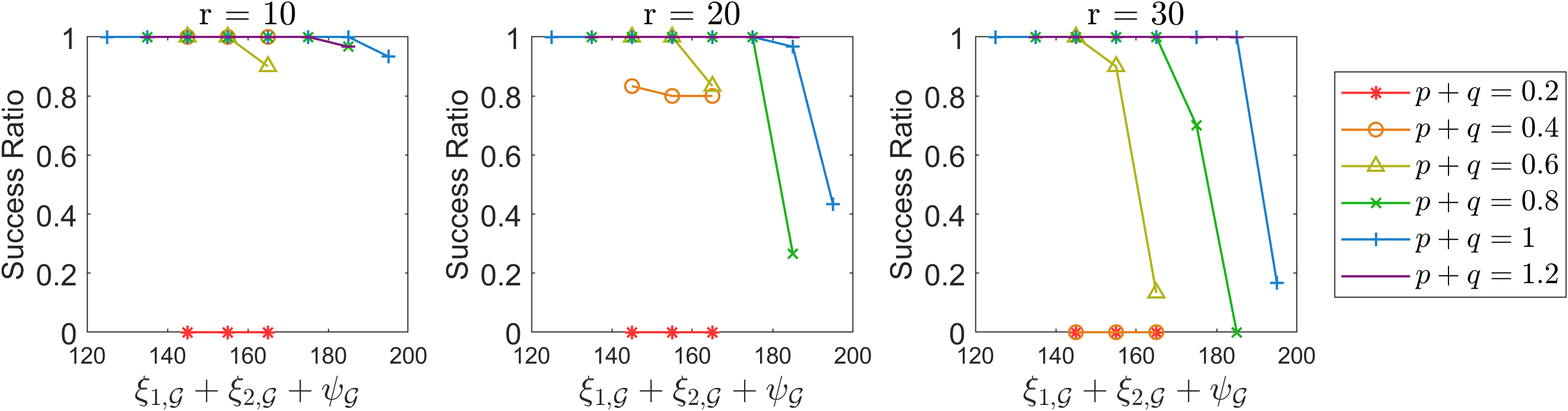}
	\caption{Success ratio of exact matrix completion versus graph property $\xi_{1,\mathcal{G}} + \xi_{2,\mathcal{G}} + \psi_{\mathcal{G}}$ with different rank $r$ for rectangular matrices. Different line colors or markers indicate different values of observation probability $p+q$.}
	\label{fig:rectangular_noiseless_different_rank}
\end{figure}

\begin{figure}[t]
	\centering
	\includegraphics[width=1\textwidth]{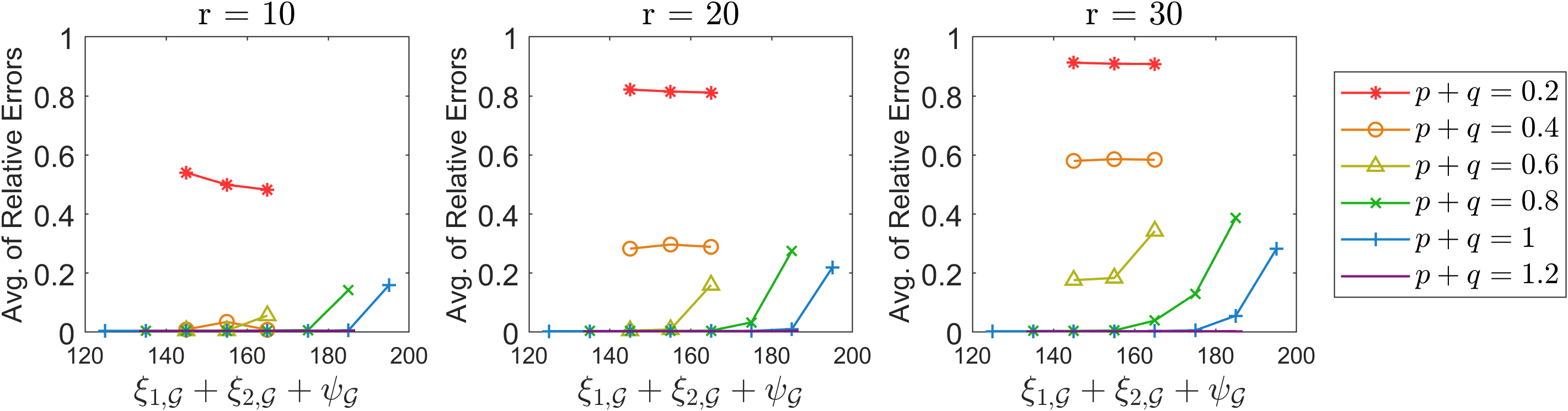}
	\\[1em]
	\includegraphics[width=1\textwidth]{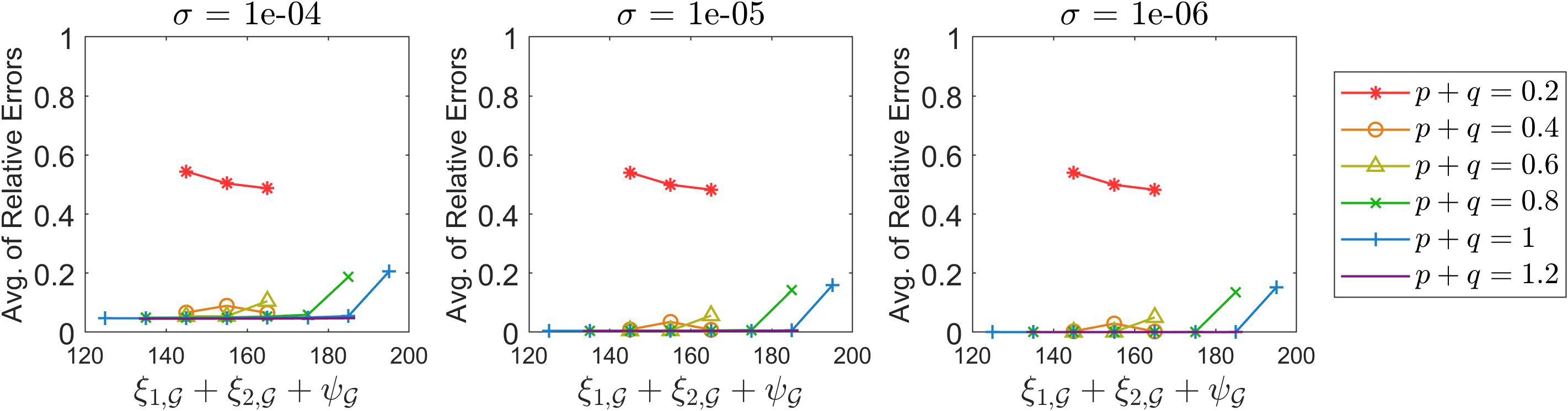}
	\caption{Average of relative errors in approximate matrix completion versus graph property $\xi_{1,\mathcal{G}} + \xi_{2,\mathcal{G}} + \psi_{\mathcal{G}}$ for rectangular matrices.
	Three plots above show results for different rank $r$ with fixed noise parameter $\sigma=10^{-5}$.
	Three plots below are of different noise parameter $\sigma$ with fixed rank $r=10$.
Different line colors or markers indicate different values of observation probability $p+q$.}
	\label{fig:rectangular_noisy_different_rank_sigma}
\end{figure}

Lastly, we want to verify whether the performance of the algorithm is solely determined by the factors derived in our theorems. 
Here, we utilize the rescaled parameter
$\frac{|\Omega|}{\mu_0 r (2\xi_{\mathcal{G}} + \psi_{\mathcal{G}})}$ or $\frac{|\Omega|}{\mu_0 r (\xi_{1,\mathcal{G}} + \xi_{2,\mathcal{G}} + \psi_{\mathcal{G}})}$.
In Figure \ref{fig:overlap}, we can observe that the curves share almost the same pattern across different settings of rank $r$. 
This empirical finding provides justification for our theorems.

\begin{figure}[t]
	\centering
	\includegraphics[width=0.3\textwidth]{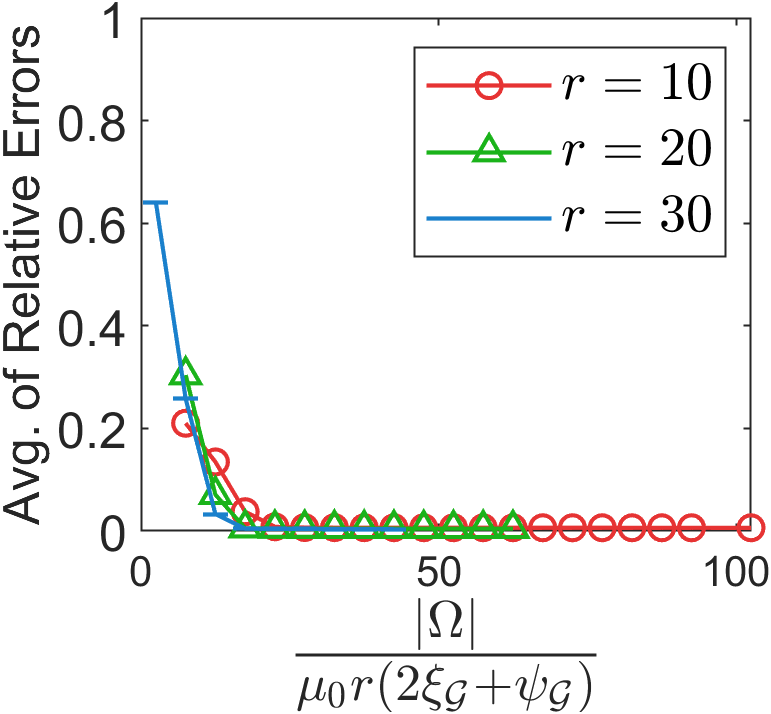}
	\includegraphics[width=0.3\textwidth]{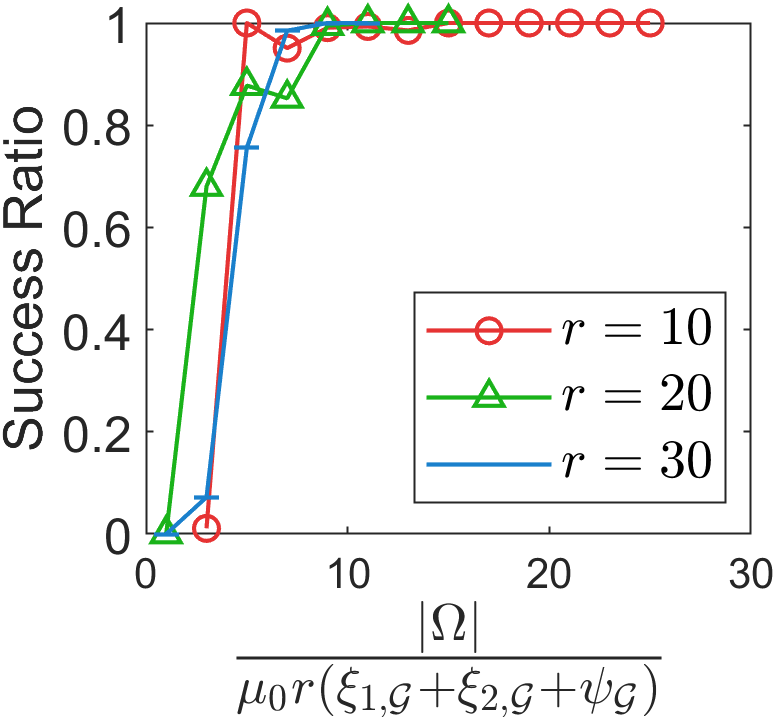}
	\includegraphics[width=0.3\textwidth]{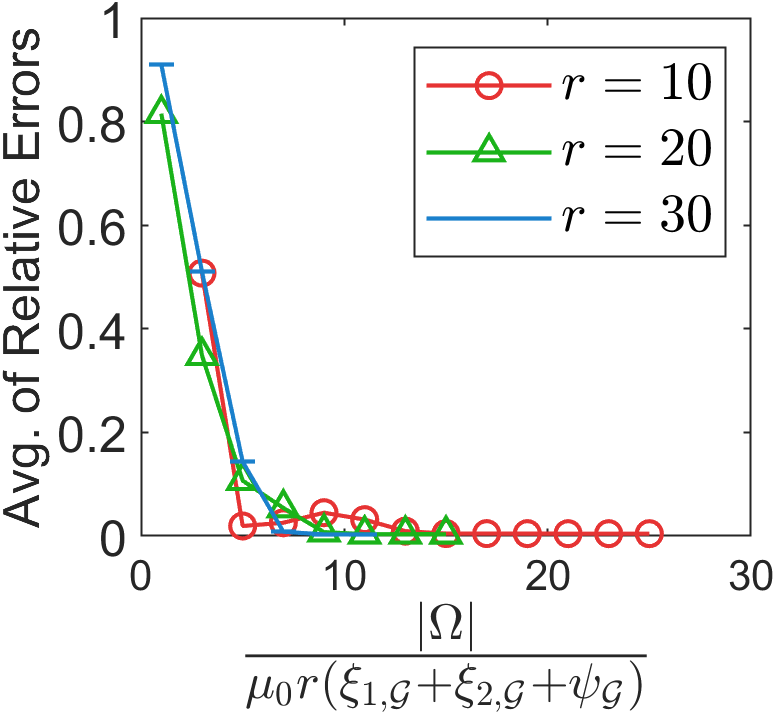}
	\caption{Success ratio or average of relative errors versus rescaled parameter for different rank $r$.
	From left to right: cases of noisy symmetric matrices, noiseless rectangular matrices, and noisy rectangular matrices.}
	\label{fig:overlap}
\end{figure}

\end{document}